\begin{document}

\twocolumn[

\aistatstitle{
Mirror Descent Actor Critic via Bounded Advantage Learning
}

\aistatsauthor{ Ryo Iwaki }
\aistatsaddress{ IBM Research - Tokyo } ]

\begin{abstract}
Regularization is a core component of recent Reinforcement Learning (RL) algorithms. Mirror Descent Value Iteration (MDVI) uses both Kullback-Leibler divergence and entropy as regularizers in its value and policy updates. Despite its empirical success in discrete action domains and strong theoretical guarantees, the performance of KL-entropy-regularized methods does not surpass that of a strong entropy-only-regularized method in continuous action domains. In this study, we propose Mirror Descent Actor Critic (MDAC) as an actor-critic style instantiation of MDVI for continuous action domains, and show that its empirical performance is significantly boosted by bounding the actor's log-density terms in the critic's loss function, compared to a non-bounded naive instantiation. Further, we relate MDAC to Advantage Learning by recalling that the actor's log-probability is equal to the regularized advantage function in tabular cases, and theoretically discuss when and why bounding the advantage terms is validated and beneficial. We also empirically explore effective choices for the bounding functions, and show that MDAC performs better than strong non-regularized and entropy-only-regularized methods with an appropriate choice of the bounding functions.
\end{abstract}

\section{INTRODUCTION}

Model-free reinforcement learning (RL) is a promising approach to
obtain reasonable controllers in unknown environments.
In particular, actor-critic (AC) methods are appealing
because they can be naturally applied to continuous control domains.
AC algorithms have been applied in a range of challenging domains
including robot control \citep{Smith2023},
tokamak plasma control \citep{Degrave2022},
and alignment of large language models \citep{Stiennon2020}.

Regularization is a core component of,
not only such AC methods,
but also value-based reinforcement learning algorithms \citep{Peters2010,Azar2012,Schulman2015,Schulman2017,Haarnoja2017,Haarnoja2018a,Abdolmaleki2018}.
Kullback-Leibler (KL) divergence and entropy are two major regularizers
that have been adopted to derive many successful algorithms.
In particular, Mirror Descent Value Iteration (MDVI) uses both KL divergence
and entropy as regularizers in its value and policy updates \citep{Geist2019,Vieillard2020a}
and enjoys strong theoretical guarantees \citep{Vieillard2020a,Kozuno2022}.
However, despite its empirical success in discrete action domains \citep{Vieillard2020b},
the performance of KL-entropy-regularized algorithms do not surpass a strong entropy-only-regularized method
in continuous action domains \citep{Vieillard2022}.

In this study, we propose Mirror Descent Actor Critic (MDAC)
as a model-free actor-critic instantiation of MDVI for continuous action domains,
and show that its empirical performance is significantly boosted
by bounding the actor's log-density terms in the critic's loss function, compared to a non-bounded naive instantiation.
To understand the impact of bounding beyond just as an ``implementation detail",
we relate MDAC to Advantage Learning (AL) \citep{Baird1999,Bellemare2016}
by recalling that the policy's log-probability is equal to
the regularized soft advantage function in tabular case,
and theoretically discuss when and why bounding the advantage terms is validated and beneficial.
Our analysis indicates that it is beneficial to bound the log-policy term of
not only the current state-action pair but also the successor pair in the TD target.

\vspace{-2mm}

\paragraph{Related Works.}

The key component of our actor-critic algorithm is to bound the log-policy terms in the critic loss,
which can be also understood as bounding the regularized advantages.
Munchausen RL clips the log-policy term for the current state-action pair,
which serves as an augmented reward, as an implementation issue \citep{Vieillard2020b}.
Our analysis further supports the empirical success of Munchausen algorithms.
\citet{Zhang2022} extended AL by introducing a clipping strategy,
which increases the action gap
only when the action values of suboptimal actions exceed a certain threshold.
Our bounding strategy is different from theirs in the way that
the action gap is increased for all state-action pairs but with bounded amounts.
\citet{Vieillard2022} proposed a sound parameterization of Q-function that uses log-policy.
By construction, the regularized greedy step of MDVI can be performed exactly
even in actor-critic settings with their parameterization.
Our study is orthogonal to theirs since our approach modifies
not the parameterization of the critic but its loss function.

It is well known that the log-policy terms in AC algorithms often cause instability,
since the magnitude of log-policy terms grow large naturally in MDP,
where a deterministic policy is optimal.
Recent RL implementations handle this problem by bounding the range of
the standard deviation for Gaussian policies \citep{SpinningUp2018,Huang2022CleanRL}.
Beyond such an implementation detail,
\citet{Silver2014} proposed to use deterministic policy gradient,
which is a foundation of the recent actor-critic algorithms such as TD3 \citep{Fujimoto2018}.
\citet{Iwaki2019} proposed an implicit iteration method
to stably estimate the natural policy gradient \citep{Kakade2001_NPG},
which also can be viewed as an MD-based RL method \citep{Thomas2013}.

MDVI and its variants are instances of mirror descent (MD) based RL.
There are substantial research efforts in this direction
\citep{Wang2019,Vaswani2022,Kuba2022,Yang2022,Tomar2022,Lan2023,Alfano2023}.
The MD perspective enables to
understand the existing algorithms in a unified view,
analyze such methods with strong theoretical tools,
and propose a novel and superior one.
Further discussion on MD based methods are provided in Appendix \ref{ss:appendix:md_methods}.
This paper focuses on a specific choice of mirror, i.e., adopting KL divergence and entropy as regularizers,
and provides a deeper understanding in this specific scope
via a notion of \emph{gap-increasing}.

Though this study focuses on KL-entropy,
there exist another type of regularizations.
\citet{Garg2023} proposed to use Gumbel regression to directly estimate the optimal soft value function and alleviate the need of sampling from the policy.
\citet{Zhu2023} generalized Munchausen RL to Tsallis entropy and showed remarkable improvement in discrete action settings.


\paragraph{Contributions.}

Our contributions are summarized as follows:
(1) we propose MDAC, a model-free actor-critic instantiation of MDVI for continuous action domains,
and
show that its empirical performance is significantly boosted by bounding the actor's log-density terms in the critic's loss function, compared to a non-bounded naive instantiation.
(2) We theoretically analyze the validity and the effectiveness of the bounding strategy by relating MDAC to AL with bounded advantage terms.
Specifically,
(2-1) we provide sufficient conditions under which the bounding strategy results in asymptotic convergence, which also suggests that Munchausen RL is convergent even when the ad-hoc clipping is employed,
and (2-1) we show that the bounding strategy reduces \emph{inherent errors} of gap-increasing Bellman operators.
(3) We empirically investigate which types of bounding functions are effective.
(4) We demonstrate that MDAC performs better than strong non-regularized and entropy-only-regularized baseline methods in simulated benchmarks.

\vspace{-3mm}

\section{PRELIMINARY}
\label{ss:preliminary}

\vspace{-3mm}

\paragraph{MDP and Approximate Value Iteration.}

A Markov Decision Process (MDP) is specified by a tuple $(\cS,\cA,P,R,\gamma)$, where
$\cS$ is a state space,
$\cA$ is an action space,
$P$ is a Markovian transition kernel,
$R$ is a reward function bounded by $\Rmax$, and
$\gamma\in(0,1)$ is a discount factor.
For $\cEnt\ge 0$, we write $V^\cEnt_{\rm max} = \frac{\Rmax + \cEnt\log|\cA|}{1-\gamma}$
(assuming $\cA$ is finite)
and $V_{\rm max} = V^{0}_{\rm max}$.
We write $\one\in\bR^{\cS\times\cA}$ the vector whose components are all equal to one.
A policy $\pi$ is a distribution over actions given a state.
Let $\Pi$ denote a set of Markovian policies.
The state-action value function associated with a policy $\pi$ is defined as
$%
    Q^\pi(s,a) = \bE_\pi \left[ \sum_{t=0}^\infty \gamma^t R(S_t,A_t)\middle \sep S_{0}=s, A_{0}=a\right]
$, %
where $\bE_\pi$ is the expectation over trajectories generated under $\pi$.
An optimal policy satisfies $\pi^{*} \in \argmax_{\pi\in\Pi} Q^\pi$
with the understanding that operators are point-wise, and $Q^{*}=Q^{\pi^{*}}$.
For $f_1,f_2\in\bR^{\cS\times\cA}$, we define a component-wise dot product
$%
    \langle f_1,f_2 \rangle = \left(\sum_{a} f_1(s,a) f_2(s,a)\right)_{s}\in\bR^\cS
$. %
Let $P_\pi$ denote the stochastic kernel induced by $\pi$.
For $Q\in\mathbb{R}^{\cS\times\cA}$, let us define
$%
  P_\pi Q = \left(\sum_{s'}P(s'|s,a) \sum_{a'} \pi(a'|s') Q(s',a')\right)_{s,a}\in\bR^{\cS\times\cA}
$. %
Furthermore, for $V\in\bR^\cS$ let us define
$%
  P V = \left(\sum_{s'} P(s'|s,a) V(s')\right)_{s,a}\in\bR^{\cS\times\cA}
$ %
and
$%
  P^{\pi}V = \left(\sum_{a} \pi(a|s) \sum_{s'}P(s'|s,a) V(s')\right)_{s}\in\bR^{\cS}
$. %
It holds that $P_\pi Q = P\langle \pi, Q\rangle$.
The Bellman operator is defined as $\cT_\pi Q = R + \gamma P_\pi Q$, whose unique fixed point is $Q^\pi$.
The set of greedy policies w.r.t. $Q\in\bR^{\cS\times\cA}$ is written as
$\cG(Q) = \argmax_{\pi\in\Pi} \langle Q,\pi\rangle$.
Approximate Value Iteration (AVI) \citep{Bellman1959} is a classical approach
to estimate an optimal policy.
Let $Q_{0}\in\bR^{\cS\times\cA}$ be initialized as $\nInf{Q_{0}}\le V_{\rm max}$
and $\epsilon_{k} \in \bR^{\cS\times\cA}$ represent approximation/estimation errors.
Then, AVI can be written as the following abstract form:
\vspace{-2mm}
\begin{align*}
  \begin{cases}
    \pi_{k+1} \in \cG(Q_{k})
    \\
    Q_{k+1} = \cT_{\pi_{k+1}} Q_{k} + \epsilon_{k+1}
  \end{cases}
  .
\end{align*}

\paragraph{Regularized MDP and MDVI.}

In this study, we consider the Mirror Descent Value Iteration (MDVI) scheme \citep{Geist2019,Vieillard2020a}.
Let us define the entropy $\cH(\pi) = - \langle \pi,\log \pi\rangle \in\bR^\cS$ and
the KL divergence $\kl(\pi_1 \| \pi_2) = \langle \pi_1, \log \pi_1 - \log \pi_2 \rangle \in\bR^\cS_{\ge 0}$.
For $Q\in\bR^{\cS\times\cA}$ and a reference policy $\mu \in\Pi$,
we define the regularized greedy policy as
$%
    \cG_\mu^{\cKL,\cEnt}(Q)
    = \argmax_{\pi\in\Pi} \left(\langle \pi, Q\rangle + \cEnt \cH(\pi) - \cKL \kl(\pi\|\mu)\right)
$.
We write $\cG^{0,\cEnt}$ for $\cKL=0$ and $\cG^{0,0}(Q) = \cG(Q)$.
We define the soft state value function $V(s)\in\bR^\cS$ as
$
  V(s) = \langle \pi, Q\rangle + \cEnt \cH(\pi) - \cKL \kl(\pi\|\mu)
$, where $\pi=\cG_\mu^{\cKL,\cEnt}(Q)$.
Furthermore, we define the regularized Bellman operator as
$
  \cT_{\pi\sep\mu}^{\cKL,\cEnt} Q
  = R + \gamma P \left(\langle \pi, Q\rangle + \cEnt \cH(\pi) - \cKL \kl(\pi\|\mu)\right)
$.
Given these notations, MDVI scheme is defined as
\begin{align}
  \begin{cases}
    \pi_{k+1} = \cG_{\pi_k}^{\cKL,\cEnt}(Q_{k})
    \\
    Q_{k+1} = \cT^{\cKL,\cEnt}_{\pi_{k+1}\sep\pi_k} Q_{k} + \epsilon_{k+1}
  \end{cases}
  ,
  \label{eq:mdvi}
\end{align}
where $\pi_{0}$ is initialized as the uniform policy.

\citet{Vieillard2020b} proposed a reparameterization $\Psi_{k} = Q_{k} + \rcKL\rcEnt\log\pi_{k}$.
Then, defining
$\rcEnt = \cEnt + \cKL$ and $\rcKL = \cKL/(\cEnt + \cKL)$,
the recursion \eqref{eq:mdvi} can be rewritten as
\begin{align}
  \begin{cases}
    &\pi_{k+1} = \cG^{0,\rcEnt}(\Psi_{k})
    \\
    &\Psi_{k+1} = R + \gamma P \left\langle\pi_{k+1}, \Psi_{k} - \rcEnt\log\pi_{k+1} \right\rangle
    \\
    &\qquad\qquad
    + \, \rcKL\rcEnt\log\pi_{k+1} + \epsilon_{k+1}
  \end{cases}
  .
  \label{eq:imdvi}
\end{align}
We refer \eqref{eq:imdvi} as Munchausen Value Iteration (M-VI),
where KL regularization is implicitly applied through $\Psi_{k}$ and
there is no need to store $\pi_{k}$ for explicit computation of the KL term.
Notice that the regularized greedy policy $\pi_{k+1} = \cG^{0,\rcEnt}(\Psi_{k})$ can be obtained analytically in discrete action spaces as
$
  \bigl(\cG^{0,\rcEnt}(\Psi_{k})\bigr)(s,a)
  = \frac{\exp \Psi_{k}(s,a)/\rcEnt}{\dprod{\one}{\exp \Psi_{k}(s,a)/\rcEnt}}
  \eqdef \bigl(\sm{\rcEnt}{\Psi_{k}}\bigr)(s,a)
$.

\section{MIRROR DESCENT ACTOR CRITIC WITH BOUNDED BONUS TERMS}
\label{ss:motivating_the_bounding_functions}

In this section,
we introduce a model-free actor-critic instantiation of MDVI for continuous action domains,
and show that a naive implementation results in poor performance.
Then, we demonstrate that its performance is improved significantly
by a simple modification to its loss function.

Now we derive Mirror Descent Actor Critic (MDAC).
Let $\pi_{\theta}$ be a tractable stochastic policy such as a Gaussian with a parameter $\theta$.
Let $Q_{\psi}$ be a value function with a parameter $\psi$.
The functions $\pi_{\theta}$ and $Q_{\psi}$ approximate $\pi_{k}$ and $\Psi_{k}$
in the recursion \eqref{eq:imdvi}, respectively.
Further, let $\bar{\psi}$ be a target parameter that is updated slowly, that is,
$\bar{\psi} \leftarrow (1-\polyak) \bar{\psi} + \polyak \psi$ with $\polyak\in(0,1)$.
Let $\cD$ be a replay buffer that stores past experiences $\{(s,a,r,s')\}$.
We can derive model-free and off-policy losses from the recursion \eqref{eq:imdvi}
for the actor $\pi_{\theta}$ and the critic $Q_{\psi}$
by (i) letting the parameterized policy $\pi_{\theta}$ be represent
the information projection of $\pi_{k}$ in terms of the KL divergence,
and (ii) approximating the expectations using the 
samples drawn from $\cD$:
\begin{align}
  L^{Q} \!\left( \psi \right)
  &=
  \!
  \underset{\substack{(s,a,r,s^{\prime})\sim\cD,\\ a^{\prime}\sim\pi_{\theta}(\cdot|s^{\prime})}}{\mathbb{E}}
  \!
  \Bigl[
    \bigl(
    y - Q_{\psi}(s,a)
    \bigr)^2
  \Bigr]
  ,
  \label{eq:mdac:critic}
  \\
  y &= r + \rcKL\rcEnt\log\pi_{\theta}(a|s)
  \nonumber\\
  &\quad
  + \gamma \!\left( Q_{\bar{\psi}}(s^{\prime},a^{\prime}) \!-\! \rcEnt\log\pi_{\theta}(a^{\prime}|s^{\prime})\right)
  ,
  \label{eq:mdac:critic_target}
  \\
  L^{\pi} (\theta)
  &= \underset{s\sim\cD}{\mathbb{E}}
  \Bigl[
    D_{\rm KL} \bigl(
      \pi_{\theta}(a|s) \bigm\|
      \sm{\rcEnt}{Q_{\psi}}(s,a)
    \bigr)
  \Bigr]
  \nonumber
  \\
  &
  = \underset{\substack{s\sim\cD,\\ a\sim\pi_{\theta}(\cdot|s)}}{\mathbb{E}}
  \Bigl[
      \rcEnt \log \pi_{\theta}(a|s) - Q_{\psi}(s,a)
  \Bigr]
  .
  \label{eq:mdac:actor}
\end{align}
Though $\pi_{\theta}$ can be any tractable distribution,
we choose commonly used Gaussian policy in this paper.
We lower-bound its standard deviation by a common hyperparameter $\log\sigma_{\rm min}$, which is typically fixed to
$\log\sigma_{\rm min}\!=\!-20$ \citep{Huang2022CleanRL}
or $\log\sigma_{\rm min}\!=\!-5$ \citep{SpinningUp2018}.
Although there are two hyperparameters $\rcEnt$ and $\rcKL$ originated from KL and entropy regularization,
these hyperparameters need not be tuned manually.
We fixed $\rcKL=1-(1-\gamma)^2$ as the theory of MDVI suggests \citep{Kozuno2022}.
For $\rcEnt$, we perform an optimization process similar to SAC \citep{Haarnoja2018b}.
Noticing that the strength of the entropy regularization is governed by
$\cEnt = (1-\rcKL)\rcEnt$, we optimize the following loss in terms of $\rcEnt$
with $\bar{\cH}=-{\rm dim}(\cA)$:
\begin{align}
  L(\rcEnt)
  &= (1-\rcKL)\rcEnt
  \underset{\substack{s\sim\cD,\\ a\sim\pi_{\theta}(\cdot|s)}} {\mathbb{E}}
  \left[
    - \log \pi_{\theta}(a|s) - \bar{\cH}
  \right]
  \label{eq:mdac:entropy}
  .
\end{align}
The reader may notice that \eqref{eq:mdac:critic} and \eqref{eq:mdac:actor}
are nothing more than SAC losses \citep{Haarnoja2018a, Haarnoja2018b}
with the Munchausen augmented reward \citep{Vieillard2020b},
and expect that optimizing these losses would result in good performance.

\begin{wrapfigure}{l}{0.55\linewidth}
 \vspace{-\baselineskip}
 \centering
 \includegraphics[keepaspectratio, scale=0.32]{"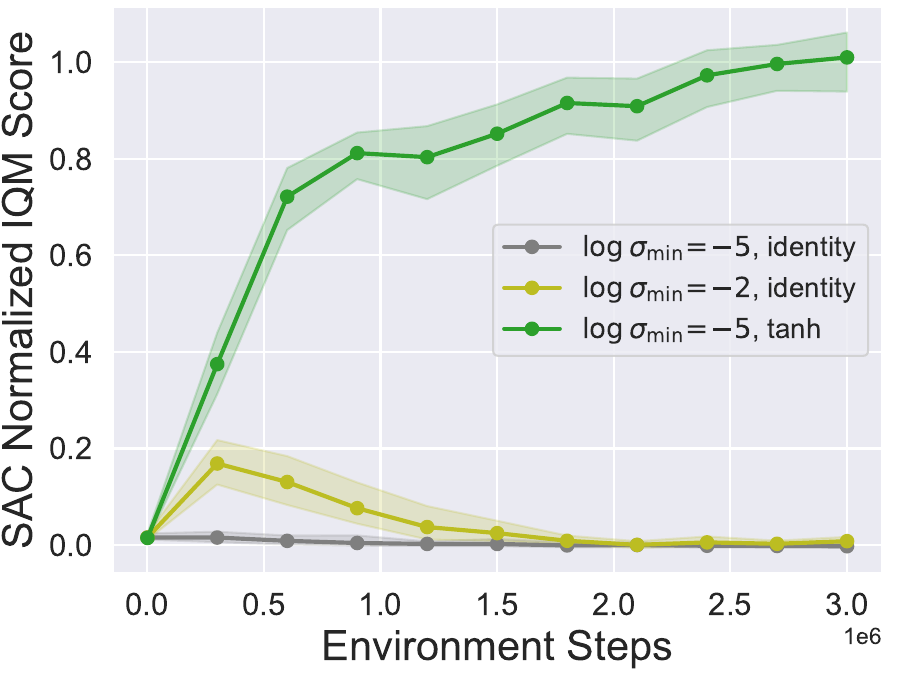"}
  \caption{Effect of bounding \\$\rcEnt\log\pi_{\theta}$ terms.}
  \label{fig:tanh_works}
  \vspace{-0.7\baselineskip}
\end{wrapfigure}
\begin{figure*}[th]
  \vspace{-0.8\baselineskip}
 \begin{center}
   \includegraphics[width=0.99\linewidth]{"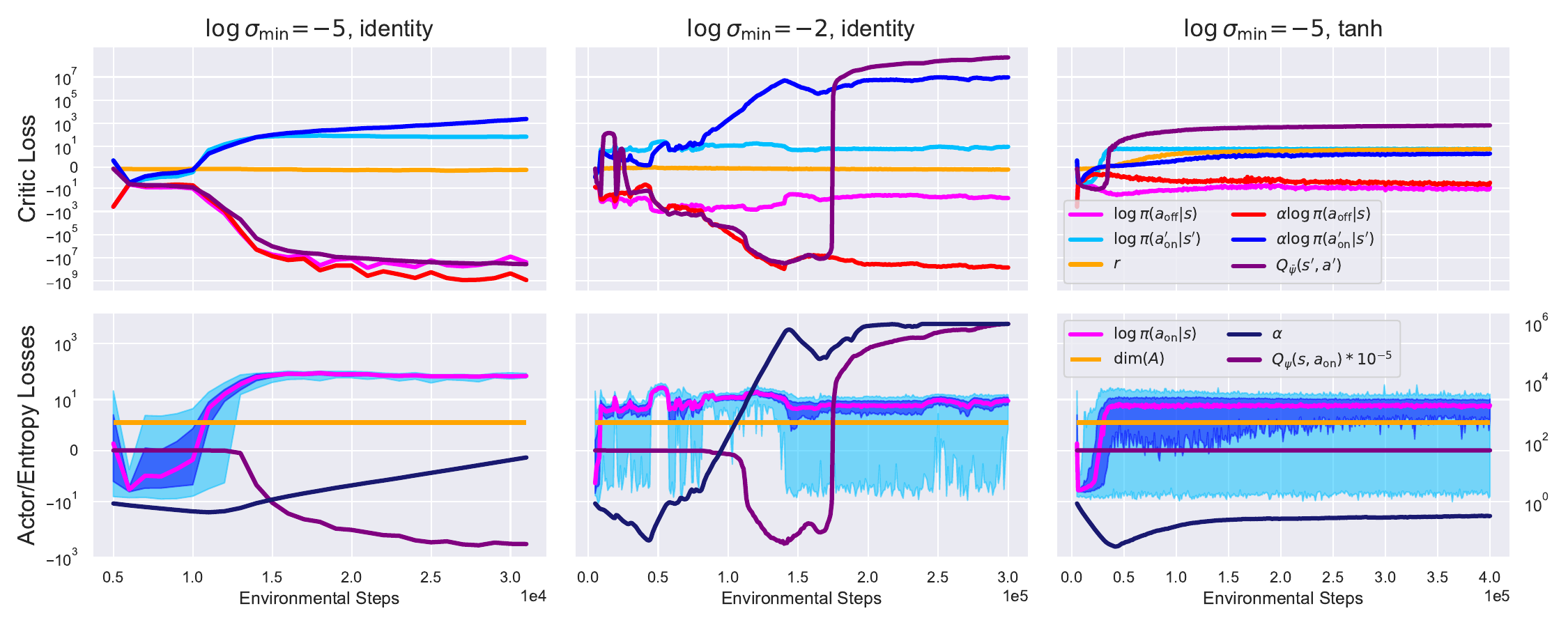"}
   \caption{
     Scale comparison of the variables in loss functions.
     The means of the variables over the multiple sampled minibatchs are plotted.
     Left: $\log\sigma_{\rm min}\!=\!-5$,
     Middle: $\log\sigma_{\rm min}\!=\!-2$,
     Right: $\log\sigma_{\rm min}\!=\!-5$ with bounding by $\tanh$.
     Top: comparison in critic loss \eqref{eq:mdac:critic},
     Bottom: comparison in actor and entropy losses \eqref{eq:mdac:actor} and  \eqref{eq:mdac:entropy}.
     $\rcEnt$ is indicated by the right y-axis.
     Blue shaded areas indicate standard deviations.
     Light blue shaded areas indicate minimum and maximum values.
   }
   \label{fig:scales}
 \end{center}
 \vspace{-\baselineskip}
\end{figure*}
However, a naive implementation of these losses leads to poor performance.
The gray learning curve in Figure \ref{fig:tanh_works} is an aggregated result for 6 Mujoco environments with $\log\sigma_{\rm min}\!=\!-5$
\footnote{Details on the setup and the metrics can be found in Section \ref{ss:experiment}, and Figure \ref{fig:tanh_works_all} in Appendix \ref{ss:appendix:mujoco} shows the per-environment results.}.
The left column of Figure \ref{fig:scales} compares
the variables in the loss functions
for the initial learning phase in 
\texttt{HalfCheetah-v4}.
Clearly, the magnitude of
$\log\pi_{\theta}$ terms
gets much larger than the reward quickly.
We hypothesized that the poor performance of the naive implementation is due to this scale difference; the information of the reward is erased by the bonus terms.
This explosion is more severe in the Munchausen bonus $\rcKL\rcEnt\log\pi_{\theta}(a|s)$ than
the entropy bonus $\rcEnt\log\pi_{\theta}(a^{\prime}|s^{\prime})$,
because while $a^{\prime}$ is an \emph{on-policy} sample from the current actor $\pi_{\theta}$, $a$ is an old \emph{off-policy} sample from the replay buffer $\cD$.
Careful readers may wonder if the larger $\log\sigma_{\rm min}$ resolves this issue.
The yellow learning curve in Figure \ref{fig:tanh_works} is the learning result for $\log\sigma_{\rm min}=-2$, which still fails to learn.
The middle column of Figure \ref{fig:scales} shows that the bonus terms are still divergent, and it is caused by the exploding behavior of $\rcEnt$.
A naive update of $\rcEnt$ using the loss \eqref{eq:mdac:entropy}
and SGD with a step-size $\rho>0$ is expressed as
\begin{align*}
 \rcEnt \leftarrow \rcEnt
 + \frac{\rho (1-\rcKL)}{N} \sum_{n=1}^{N}
   \bigl(
     \log \pi_{\theta}(a_n|s_n) - {\rm dim}(\cA)
   \bigr)
 ,
\end{align*}
where
$N$ is a mini-batch size,
$s_n$ is a sampled state in a mini-batch
and
$a_n\sim\pi_{\theta}(\cdot|s_n)$.
This expression indicates that,
if the averages of $\log\pi_{\theta}(a|s)$ over the sampled mini-batches
are bigger than ${\rm dim}(\cA)$ over the iterations,
$\rcEnt$ keeps growing.
The bottom row of left and middle plots in Figure \ref{fig:scales} indicates that this phenomenon is indeed happening.
We argue that, an unstable behavior of a single component ruins the other learning components through the actor-critic structure.
Through the loss \eqref{eq:mdac:actor},
$\log\pi_{\theta}$ concentrates to high value, which makes $\rcEnt$ grow.
Then, $\rcEnt\log\pi_{\theta}$ terms explode and hinder $Q_\psi$, and $\log\pi_{\theta}$ stays ruined.

We found that ``bounding'' $\rcEnt\log\pi_{\theta}$ terms improves the performance significantly.
To be precise, by replacing the target
$y$
in the critic's loss \eqref{eq:mdac:critic} with
the following,
the agent succeeds to reach reasonable performance
(the green curve in Figure \ref{fig:tanh_works};
$\log\sigma_{\rm min}\!=\!-5$ is used):
\begin{align}
 y
 &= r + \rcKL\,\dgreen{\tanh}\left(\rcEnt\log\pi_{\theta}(a|s)\right)
 \nonumber
 \\
 &\quad
 + \gamma \left( Q_{\bar{\psi}}(s^{\prime},a^{\prime}) - \dgreen{\tanh}\left(\rcEnt\log\pi_{\theta}(a^{\prime}|s^{\prime})\right)\right)
 .
 \label{eq:mdac:critic:tanh}
\end{align}
The right column of Figure \ref{fig:scales} shows that
with this target \eqref{eq:mdac:critic:tanh},
$\rcEnt\log\pi_{\theta}$ terms do not explode since
$\log\pi_{\theta}$ does not concentrate to high value
and
$\rcEnt$ does not grow,
and $Q_\psi$ is not ruined.
In the next section, we analyze what happens under the hood by
theoretically investigating the effect of bounding $\rcEnt\log\pi_{\theta}$ terms.
We argue that bounding $\rcEnt\log\pi_{\theta}$ terms is not just an ad-hoc implementation issue,
but it changes the property of the underlying Bellman operator.
We quantify the amount of ruin caused by $\rcEnt\log\pi_{\theta}$ terms,
and show how this negative effect is mitigated by the bounding.

\section{ANALYSIS}
\label{ss:analysis}

In this section, we theoretically investigate the properties of
the log-policy-bounded target \eqref{eq:mdac:critic:tanh} in tabular settings.
Rather than analyzing a specific choice of bounding, e.g. $\tanh(x)$,
we characterize the conditions for bounding functions that are validated and effective.
For the sake of analysis, we provide an abstract dynamic programming scheme of
the log-policy-bounded target \eqref{eq:mdac:critic:tanh} and
relate it to Advantage Learning \citep{Baird1999,Bellemare2016}
in Section \ref{ss:analysis:bal}.
In Section \ref{ss:analysis:convergence}, we show that it is ensured that BAL converges asymptotically for a class of bounding functions.
In Section \ref{ss:analysis:epa},
we show that the bounding is indeed beneficial in terms of inherent error reduction property.
All the proofs will be found in Appendix \ref{ss:appendix:proofs}.

\newpage

\subsection{Bounded Advantage Learning}
\label{ss:analysis:bal}

\begin{wrapfigure}{l}{0.35\linewidth}
  \vspace{-\baselineskip}
  \hspace{-0.6\baselineskip}
  \centering
  \includegraphics[keepaspectratio, scale=0.3]{"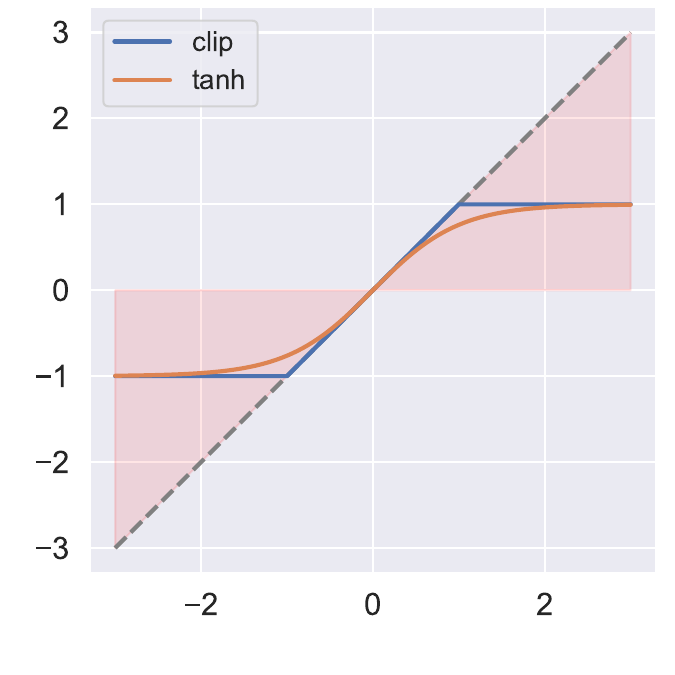"}
  \vspace{-2\baselineskip}
  \caption{Examples of bounding func.}
  \label{fig:fg}
  \vspace{-0.5\baselineskip}
\end{wrapfigure}
Let $\bALc$ and $\bALs$ be non-decreasing functions over $\bR$ such that,
for both $h\in\{\bALc,\bALs\}$,
(i) $x \ge h(x) \ge 0$ for $x \ge 0$,
    $x \le h(x) \le 0$ for $x \le 0$ and $h(0)=0$,
and
(ii) their codomains are connected subsets of $[-c_h,c_h]$.
The functions $\tanh(x)$ and ${\rm clip}(x, -1, 1)$ satisfy these conditions.
We understand that the identity map $\Id$ also satisfies these conditions with $c_{h}\to\infty$.
Roughly speaking, we require the functions $\bALc$ and $\bALs$
to lie in the shaded area in Figure \ref{fig:fg}.
Then, the loss \eqref{eq:mdac:critic}, \eqref{eq:mdac:actor} and
\eqref{eq:mdac:critic:tanh}
can be seen as an instantiation of the following abstract VI scheme:
\begin{align}
  \hspace{-0.3cm}
 \begin{cases}
   \pi_{k+1} = \cG^{0,\rcEnt}(\Psi_{k})
   \\
   \Psi_{k+1} = R + \rcKL\blue{\bALc}\left(\rcEnt\log\pi_{k+1}\right)
   \\
   \qquad\qquad
   + \, \gamma P  \left\langle\pi_{k+1},  \Psi_{k} \!-\! \blue{\bALs}\left(\rcEnt\log\pi_{k+1}\right) \right\rangle
   \!+\! \epsilon_{k+1}
 \end{cases}
 \hspace{-0.4cm}
 .
 \label{eq:bounded_imdvi}
\end{align}
Notice that Munchausen-DQN and its variants are instantiations of this scheme,
since their implementations clip the Munchausen bonus term by $\bALc(x)=[x]^{0}_{l_{0}}$
with $l_{0}=-1$ typically, while $\bALs=\Id$.
Furthermore, if we choose $\bALc=\bALs\equiv0$,
\eqref{eq:bounded_imdvi} reduces to Expected Sarsa \citep{vanSeijen2009}.

Now, from the basic property of regularized MDPs,
the soft state value function $V\in\bR^{\cS}$ satisfies
$
 V
 = \rcEnt \log \left\langle\mu^{\rcKL}, \exp \frac{Q}{\rcEnt}\right\rangle
 = \rcEnt \log \left\langle\one, \exp \frac{\Psi}{\rcEnt}\right\rangle
$,
where $\Psi = Q + \rcKL\rcEnt\log\mu$.
We write $\bL^{\rcEnt}\Psi = \rcEnt \log \dprod{\one}{\exp\frac{\Psi}{\rcEnt}}$ for convention.
The basic properties of $\bL^{\rcEnt}$ are summarized in Appendix \ref{ss:appendix:proofs:L}.
In the limit $\rcEnt\to 0$, it holds that $V(s)=\max_{a\in\cA}\Psi(s,a)$.
Furthermore, for a policy $\pi = \cG^{0,\rcEnt}(\Psi)$,
$\rcEnt\log\pi$ equals to the soft advantage function $A\in\bR^{\cS\times\cA}$:
\begin{align*}
 \rcEnt\log\pi
 &= \rcEnt \log \frac{\exp\frac{\Psi}{\rcEnt}}{\langle \one, \exp\frac{\Psi}{\rcEnt}\rangle}
 = \Psi - V
 \eqdef A
 ,
\end{align*}
thus we have that $\alpha\log\pi_{k+1}=A_k$.
Therefore, as discussed by \citet{Vieillard2020a},
the recursion \eqref{eq:imdvi} is written as
a soft variant of Advantage Learning (AL):
\begin{align*}
 \Psi_{k+1}
 &= R + \rcKL A_{k} + \gamma P \left\langle\pi_{k+1}, \Psi_{k} - A_{k} \right\rangle + \epsilon_{k+1}
 \\
 &= R + \gamma P V_{k} - \rcKL (V_{k} - \Psi_{k}) + \epsilon_{k+1}
 .
\end{align*}

Given these observations,
we introduce a \emph{bounded gap-increasing Bellman operator} $\cT^{\bALc\bALs}_{\pi_{k+1}}$:
\begin{align}
 \cT^{\bALc\bALs}_{\pi_{k+1}} \Psi_{k}
 &= R + \rcKL {\bALc}(A_{k}) + \gamma P \left\langle\pi_{k+1}, \Psi_{k} \!-\! {\bALs}(A_{k}) \right\rangle
 .
 \label{eq:bal_operator}
\end{align}
Then, the DP scheme \eqref{eq:bounded_imdvi} is equivalent to
the following \emph{Bounded Advantage Learning} (BAL):
\begin{align}
 \begin{cases}
   \pi_{k+1} = \cG^{0,\rcEnt}(\Psi_{k})
   \\
   \Psi_{k+1} = \cT^{\bALc\bALs}_{\pi_{k+1}} \Psi_{k} + \epsilon_{k+1}
 \end{cases}
 \label{eq:bal}
 .
\end{align}
By construction, the operator $\cT^{\bALc\bALs}_{\pi_{k+1}}$ pushes-down the value of actions.
To be precise,
since $\max_{a\in\cA}\Psi(s,a) \le \left(\bL^{\rcEnt}\Psi\right)(s)$,
the soft advantage $A_{k}$ is always non-positive.
Thus, the re-parameterized action value $\Psi_k$ is decreased by adding the term $\rcKL {\bALc}(A_{k})$.
The decrement is smallest at the optimal action $\arg\max_{a}\Psi_{k}(s,a)$.
Therefore, the operator $\cT^{\bALc\bALs}_{\pi_{k+1}}$ increases the action gaps with bounded magnitude dependent on ${\bALc}$.
The increased action gap is advantageous in the presence of approximation or estimation errors $\epsilon_{k}$ \citep{Farahmand2011,Bellemare2016}.
In addition, as the term $- \gamma P \left\langle\pi_{k+1}, {\bALs}(A_{k}) \right\rangle$ in Eq. \eqref{eq:bal_operator} indicates,
the entropy bonus for the successor state action pair
$(s^{\prime},a^{\prime})\sim P_{\pi}(\cdot|s,a)$
is decreased by ${\bALs}$.
We also remark that BAL preserves the original mirror descent structure of MDVI \eqref{eq:mdvi}, but with additional modifications to the Bellman backup term (see Appendix \ref{ss:appendix:MD_of_BAL}).

\subsection{Asymptotic Convergence}
\label{ss:analysis:convergence}

First, we investigate the \emph{asymptotic} convergence property of BAL scheme.
Since gap-increasing operators are \emph{not contraction maps} in general,
we need an argument similar to the analysis provided by \citet{Bellemare2016}.
Indeed, for
the case where $\rcEnt\to 0$ while keeping $\rcKL$ constant,
which corresponds to KL-only regularization and hard gap-increasing,
their asymptotic result directly applies and
it is guaranteed that BAL is \emph{optimality-preserving},
that is, an optimal greedy policy is attained by BAL asymptotically in non-regularized MDPs
(see Appendix \ref{ss:appendix:proofs:hard_bal} for rigorous analysis).
On the other hand, however,
we need tailored analyses for the case $\rcEnt > 0$.
The following proposition
offers a sufficient condition for the asymptotic convergence and characterizes the limiting behavior of BAL.
\begin{proposition}
  \label{pp:sufficient_condition_for_bal_fg_to_converge_with_c_fg}
  Consider the sequence $\Psi_{k+1} \defeq \cT^{\bALc\bALs}_{\pi_{k+1}} \Psi_{k}$
  produced by the BAL operator \eqref{eq:bal_operator}
  with $\Psi_0 \in \bR^{\cS\times\cA}$,
  and let $V_{k} = \bL^{\rcEnt} \Psi_{k}$.
  Assume that for all $k \in \bN$ it holds that
  \begin{align}
    \cKL D_{k+1}
    \ge \gamma P^{\pi_{k+1}} \left(
      \rcEnt \cH(\pi_{k+1}) \!+\! \dprod{\pi_{k+1}}{\bALs(A_{k})}
    \right)
    \label{eq:sufficient_condition_for_bal_fg_to_converge}
    ,
  \end{align}
  where $D_{k+1} \!=\! \kl(\pi_{k+1}\|\pi_{k})$.
  Then, the sequence $(V_{k})_{k \in \bN}$ converges,
  and the limit
  $
  \tilde{V} = \lim_{k \to \infty} V_{k}
  $
  satisfies
  $
    V^{*}_{\rcEnt}
    \ge
    \tilde{V}
    \ge V^{*}_{\rcEnt} - \frac{1}{1-\gamma}\left(\rcKL c_\bALc + \gamma \rcEnt \bar{\Delta}_{\bALs} \log|\cA|\right)
  $, where
  $\bar{\Delta}_{\bALs} = \sup_{z<0} \left(
    1 \!-\! \frac{\bALs(\rcEnt z)}{\rcEnt z}
  \right)$.
  Furthermore, $\limsup_{k \to \infty} \Psi_{k} \le Q_{\rcEnt}^{*}$ and
  $\liminf_{k \to \infty} \Psi_{k}
    \ge \tilde{Q} - \left(
      \rcKL c_\bALc
      + \gamma \rcEnt \bar{\Delta}_{\bALs} \log|\cA|
    \right)
  $,
  where $\tilde{Q} = R + \gamma P \tilde{V}$.
\end{proposition}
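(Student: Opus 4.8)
The plan is to pass to the convenient form of the backup in which $A_k=\Psi_k-V_k=\rcEnt\log\pi_{k+1}$ (with $V_k=\bL^{\rcEnt}\Psi_k$), exhibit $\cT^{\bALc\bALs}_{\pi_{k+1}}$ as a \emph{bounded} perturbation of the soft Bellman optimality operator, read off the upper estimates by the usual monotonicity/contraction argument, and then use the hypothesis \eqref{eq:sufficient_condition_for_bal_fg_to_converge} to upgrade a telescoping inequality to monotone convergence of $(V_k)$. First I would record that, by the Gibbs identity $\langle\pi_{k+1},\Psi_k\rangle=V_k-\rcEnt\cH(\pi_{k+1})$ and with $g_k\defeq\langle\pi_{k+1},\bALs(A_k)-A_k\rangle=\rcEnt\cH(\pi_{k+1})+\langle\pi_{k+1},\bALs(A_k)\rangle\ge 0$, the BAL backup \eqref{eq:bal_operator} can be written
\[
  \Psi_{k+1}=R+\gamma P V_k+\rcKL\bALc(A_k)-\gamma P g_k .
\]
Since $A_k\le 0$, the shape conditions on $\bALc,\bALs$ give the pointwise bounds $-\rcKL c_{\bALc}\le\rcKL\bALc(A_k)\le 0$ and $0\le g_k\le\bar{\Delta}_{\bALs}\,\rcEnt\cH(\pi_{k+1})\le\bar{\Delta}_{\bALs}\,\rcEnt\log|\cA|$; hence, writing $\beta\defeq\rcKL c_{\bALc}+\gamma\rcEnt\bar{\Delta}_{\bALs}\log|\cA|$,
\[
  R+\gamma P V_k-\beta\one\ \le\ \Psi_{k+1}\ \le\ R+\gamma P V_k .
\]
Applying the monotone map $\bL^{\rcEnt}$, which satisfies $\bL^{\rcEnt}(X+c\one)=\bL^{\rcEnt}X+c$, and setting $\cT^{*}_{\rcEnt}V\defeq\bL^{\rcEnt}(R+\gamma P V)$ — the $\gamma$-contraction with fixed point $V^{*}_{\rcEnt}$, and $Q^{*}_{\rcEnt}=R+\gamma P V^{*}_{\rcEnt}$ — gives $\cT^{*}_{\rcEnt}V_k-\beta\one\le V_{k+1}\le\cT^{*}_{\rcEnt}V_k$.

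The upper estimates are then immediate: from $V_{k+1}\le\cT^{*}_{\rcEnt}V_k$ and monotonicity, $V_k\le(\cT^{*}_{\rcEnt})^kV_0$, so $(V_k)$ is bounded and $\limsup_k V_k\le V^{*}_{\rcEnt}$; taking $\limsup$ in $\Psi_{k+1}\le R+\gamma P V_k$ — permissible since $(V_k)$ is bounded and $P$ is a positive linear operator — gives $\limsup_k\Psi_k\le R+\gamma P V^{*}_{\rcEnt}=Q^{*}_{\rcEnt}$.

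For the convergence of $(V_k)$ — the core of the argument — I would establish the recursion
\[
  V_{k+1}-V_k\ \ge\ \gamma P^{\pi_{k+1}}(V_k-V_{k-1})+\Xi_{k+1}\qquad(\Xi_{k+1}\ge 0,\ k\ge 1),
\]
by lower-bounding $V_{k+1}=\bL^{\rcEnt}\Psi_{k+1}\ge\langle\pi_{k+1},\Psi_{k+1}\rangle+\rcEnt\cH(\pi_{k+1})$ (Gibbs inequality, $\pi_{k+1}$ sub-optimal for $\Psi_{k+1}$), using the exact identity $V_k=\langle\pi_{k+1},\Psi_k\rangle+\rcEnt\cH(\pi_{k+1})$, substituting the backup into both and subtracting; the $\gamma P^{\pi_{k+1}}(V_k-V_{k-1})$ term comes from the $\gamma P V$ parts, and the residual is reorganised with $\langle\pi_{k+1},\bALc(A_k)\rangle\ge\langle\pi_{k+1},A_k\rangle=-\rcEnt\cH(\pi_{k+1})$, the identity $\langle\pi_{k+1},A_{k-1}\rangle=-\rcEnt\bigl(\cH(\pi_{k+1})+D_{k+1}\bigr)$ (which surfaces the $\cKL D_{k+1}$ of the hypothesis), monotonicity of $\bALc,\bALs$, and finally \eqref{eq:sufficient_condition_for_bal_fg_to_converge} in the form $\gamma P^{\pi_{k+1}}g_k\le\cKL D_{k+1}$, which dominates the backward-propagated bounding term so that $\Xi_{k+1}\ge 0$. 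Iterating gives $V_{k+1}-V_k\ge\gamma^{k}\bigl(P^{\pi_{k+1}}\!\cdots P^{\pi_2}\bigr)(V_1-V_0)\ge-\gamma^{k}\|V_1-V_0\|_\infty\one$, so $W_k\defeq V_k+\tfrac{\gamma^{k}}{1-\gamma}\|V_1-V_0\|_\infty\one$ is non-decreasing; being bounded above it converges, hence $V_k\to\tilde V$. Passing to the limit in $V_{k+1}\ge\cT^{*}_{\rcEnt}V_k-\beta\one$ (continuity of $\cT^{*}_{\rcEnt}$) gives $\tilde V\ge\cT^{*}_{\rcEnt}\tilde V-\beta\one$, and iterating this with $\cT^{*}_{\rcEnt}(V-c\one)=\cT^{*}_{\rcEnt}V-\gamma c$ yields $\tilde V\ge V^{*}_{\rcEnt}-\tfrac{\beta}{1-\gamma}\one$; with $\tilde V=\limsup_k V_k\le V^{*}_{\rcEnt}$ this is the asserted two-sided bound on $\tilde V$. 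Finally $\Psi_{k+1}\ge R+\gamma P V_k-\beta\one$ together with $V_k\to\tilde V$ gives $\liminf_k\Psi_k\ge R+\gamma P\tilde V-\beta\one=\tilde Q-\beta\one$.

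The main obstacle is precisely the derivation of the recursion with $\Xi_{k+1}\ge 0$: one must match the $\bALc$- and $g$-contributions evaluated at the successive indices $k$ and $k-1$ \emph{under the common policy} $\pi_{k+1}$, using only monotonicity (and continuity, which follows from the connectedness of their codomains) of $\bALc,\bALs$, so that the leftover collapses to a non-negative quantity that \eqref{eq:sufficient_condition_for_bal_fg_to_converge} absorbs exactly; the remaining ingredients — the contraction bookkeeping and the base case $\|V_1-V_0\|_\infty<\infty$, automatic in a finite MDP — are routine.
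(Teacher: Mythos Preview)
Your overall architecture matches the paper's proof exactly: the sandwich $R+\gamma PV_k-\beta\one\le\Psi_{k+1}\le R+\gamma PV_k$ (with $\beta=\rcKL c_{\bALc}+\gamma\rcEnt\bar\Delta_{\bALs}\log|\cA|$), the upper estimates via $\Psi_{k+1}\le\cT^{\rcEnt}\Psi_k$, the key recursion $V_{k+1}-V_k\ge\gamma P^{\pi_{k+1}}(V_k-V_{k-1})$ to force convergence of $(V_k)$, and the pass-to-the-limit for the lower bounds. The convergence bookkeeping (your monotone sequence $W_k$ versus the paper's $\epsilon$–$\delta$ argument) and the $\bar\Delta_{\bALs}$ computation are equivalent.

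The one substantive divergence is precisely at the step you flag as the ``main obstacle,'' and your sketched route does not close. You propose to substitute the BAL backup into \emph{both} $\Psi_{k+1}$ and $\Psi_k$ and subtract, which produces a residual containing $\rcKL\langle\pi_{k+1},\bALc(A_k)-\bALc(A_{k-1})\rangle-\gamma P^{\pi_{k+1}}(g_k-g_{k-1})$. The problematic piece is $-\rcKL\langle\pi_{k+1},\bALc(A_{k-1})\rangle$: from $A_{k-1}\le\bALc(A_{k-1})\le 0$ you only get $0\le-\rcKL\langle\pi_{k+1},\bALc(A_{k-1})\rangle\le\cKL(\cH(\pi_{k+1})+D_{k+1})$, and you need the \emph{upper} end of this range to cancel $-\cKL\cH(\pi_{k+1})$ and surface the $\cKL D_{k+1}$ that the hypothesis absorbs — but monotonicity gives you only the lower end. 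Your identity $\langle\pi_{k+1},A_{k-1}\rangle=-\rcEnt(\cH(\pi_{k+1})+D_{k+1})$ is for $A_{k-1}$, not for $\bALc(A_{k-1})$, and the inequality goes the wrong way. The paper sidesteps this entirely by \emph{not} expanding $\Psi_k$ through its backup: after reaching $V_{k+1}\ge\langle\pi_{k+1},R+\gamma PV_k\rangle+\cEnt\cH(\pi_{k+1})-\gamma P^{\pi_{k+1}}g_k$, it writes $R+\gamma PV_k=Q_k+\gamma P(V_k-V_{k-1})$ with $Q_k\defeq R+\gamma PV_{k-1}$ and then invokes the regularized soft-value identity $V_k=\langle\pi_{k+1},Q_k\rangle+\cEnt\cH(\pi_{k+1})-\cKL D_{k+1}$ directly, so that $V_k+\cKL D_{k+1}$ appears with no ``index $k{-}1$'' bounding terms to match; the hypothesis $\cKL D_{k+1}\ge\gamma P^{\pi_{k+1}}g_k$ then finishes the recursion in one line.
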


We also provide an additional theoretical result in Appendix \ref{ss:appendix:convergent_operators},
which characterizes a family of convergent soft gap-increasing operators under KL-entropy regularization.
While our proofs are built on the approach of \citet{Bellemare2016},
they require substantial modifications to deal with regularized MDPs.

The condition \eqref{eq:sufficient_condition_for_bal_fg_to_converge}
requires
that the generated policies should not lose stochasticity abruptly.
Since the original MDVI is KL-entropy-regularized, the generated policies are forced to be stochastic and to change slowly.
If $\bALs\ne\Id$, the entropy bonus is reduced and the policy gets less stochastic, which is against the pressure by MDVI.
The condition \eqref{eq:sufficient_condition_for_bal_fg_to_converge} requires that the reduction of entropy bonus,
$\rcEnt\cH(\pi_{k+1})\!+\!\dprod{\pi_{k+1}}{\bALs(A_{k})}$,
should not exceed the policy change amount quantified by the KL divergence $\kl(\pi_{k+1}\|\pi_{k})$, banning the abrupt stochasticity loss.
In other words, the convergence is assured if the policy loses stochasticity slowly, with the maximum amount of entropy bonus reduction quantified by \eqref{eq:sufficient_condition_for_bal_fg_to_converge}.
Notice that \eqref{eq:sufficient_condition_for_bal_fg_to_converge} is always satisfied by $\bALs=\Id$.
An immediate corollary of Proposition \ref{pp:sufficient_condition_for_bal_fg_to_converge_with_c_fg} is a \emph{convergence proof for Munchausen RL under the ad-hoc clipping}.

We also remark that the lower bound of $\tilde{V}$ is reasonable;
it bridges the gap between regularized and non-regularized cases via $\bar{\Delta}_{\bALs}$.
Indeed, if ${\bALc}\equiv 0$ and ${\bALs}=I$,
the upper and lower bounds match
and thus $\tilde{V} \to V^{*}_{\rcEnt}$,
since $c_\bALc=0$ and $\bar{\Delta}_{\bALs} = 0$.
On the other hand, if ${\bALs}\equiv 0$,
we have $\bar{\Delta}_{\bALs} = 1$ and
the magnitude of the lower bound roughly matches the un-regularized value
$
  V_{\rm max}
  = V^\rcEnt_{\rm max} - \frac{\rcEnt\log|\cA|}{1-\gamma}
$, 
because $\bALs\equiv 0$ totally removes the entropy bonus in the Bellman backup.
Thus, $\bar{\Delta}_{\bALs}$ represents
a \emph{degree of entropy bonus reduction} by $\bALs$.

However, Proposition \ref{pp:sufficient_condition_for_bal_fg_to_converge_with_c_fg} does not support the convergence for general $\bALs$ that violates the condition \eqref{eq:sufficient_condition_for_bal_fg_to_converge},
even though $\bALs\!\ne\!\Id$ is empirically beneficial as seen in Section \ref{ss:motivating_the_bounding_functions}.
One way to satisfy \eqref{eq:sufficient_condition_for_bal_fg_to_converge} for all $k \in \bN$ is to use an adaptive strategy to determine $\bALs$.
Since $\pi_{k+1}$ is obtained \emph{before} the update $\Psi_{k+1} = \cT^{\bALc\bALs}_{\pi_{k+1}} \Psi_{k}$ in BAL scheme \eqref{eq:bal},
it is possible that we first compute $\kl(\pi_{k+1}\|\pi_{k})$ and $\cH(\pi_{k+1})$, and then adaptively find $\bALs$ that satisfies \eqref{eq:sufficient_condition_for_bal_fg_to_converge}, with additional computational efforts.
Another practical choice would be a sequence of functions that approaches $\bALs\to\Id$ as $k\to\infty$.
In the following, however, we provide an error propagation analysis and argue that a fixed $\bALs\ne\Id$ is indeed beneficial.

\subsection{Inherent Error Reduction}
\label{ss:analysis:epa}

Proposition \ref{pp:sufficient_condition_for_bal_fg_to_converge_with_c_fg} indicates that BAL is 
convergent but possibly biased even when $\bALs=\Id$.
However, we can still upper-bound the error between
the optimal 
soft
state value $V_{\cEnt}^{*}$,
which is the unique fixed point of the operator
$
 \cT^{\cEnt} V
 = \bL^{\cEnt}(R + \gamma P V)
$,
and the 
soft state value $V_{\cEnt}^{\pi_{k}}$
for the sequence of the policies $(\pi_{k})_{k\in\bN}$ generated by BAL.
Proposition \ref{thm:BAL_soft_advantage_error} below,
which generalizes Theorem 1 by \citet{Zhang2022} to KL-entropy-regularized settings with the bounding functions, 
provides such a bound and
helps to understand the advantage of
both $\bALc\ne\Id$ and $\bALs\ne\Id$.
\begin{proposition}
 \label{thm:BAL_soft_advantage_error}
 Let $(\pi_{k})_{k\in\bN}$ be a sequence of the policies obtained by BAL.
 Defining
 $\Delta^{\bALc\bALs}_{k} = \dprod{\pi^{*}}{
   \rcKL \left(A_{\cEnt}^{*} - {\bALc}(A_{k-1})\right) - \gamma P \dprod{
     \pi_{k}}{A_{k-1} - {\bALs}(A_{k-1})}}
 $,
 it holds that:
 \begin{align}
  &\nInf{V_{\cEnt}^{*}-V_{\cEnt}^{\pi_{K+1}}}
  \nonumber\\
   &
   \le \frac{2\gamma}{1-\gamma} \! \left[
     2 \gamma^{K-1} V^\cEnt_{\rm max}
     +\!
     \sum_{k=1}^{K-1}\gamma^{K-k-1}\nInf{\Delta_{k}^{\bALc\bALs}}\right]
   \label{eq:bal_suboptimality}
   .
 \end{align}
\end{proposition}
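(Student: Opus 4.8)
The plan is to adapt the classical $\ell_\infty$ error-propagation analysis of approximate value iteration to the regularized, gap-increasing setting of BAL, so that $\nInf{V_\cEnt^*-V_\cEnt^{\pi_{K+1}}}$ is controlled by a geometrically weighted sum of per-iteration ``slack'' terms, each of which is shown to equal $\Delta_k^{\bALc\bALs}$, plus a geometrically vanishing initialisation term. I work in the exact setting ($\epsilon_k\equiv 0$), as in Proposition~\ref{pp:sufficient_condition_for_bal_fg_to_converge_with_c_fg}.

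First I would reduce the policy suboptimality to an error on the critic iterates. Since $V_\cEnt^{\pi_{K+1}}$ is the fixed point of $\cT^\cEnt_{\pi_{K+1}}V=\dprod{\pi_{K+1}}{R+\gamma PV}+\cEnt\cH(\pi_{K+1})$ and $V_\cEnt^*=\cT^\cEnt V_\cEnt^*\ge \cT^\cEnt_{\pi_{K+1}}V_\cEnt^*$, one gets $0\le V_\cEnt^*-V_\cEnt^{\pi_{K+1}}=(I-\gamma P^{\pi_{K+1}})^{-1}\bigl(V_\cEnt^*-\cT^\cEnt_{\pi_{K+1}}V_\cEnt^*\bigr)$, which already produces a factor $\tfrac{1}{1-\gamma}$. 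Then, using that $\pi_{K+1}=\cG^{0,\rcEnt}(\Psi_K)$ maximises $\dprod{\pi}{\Psi_K}+\rcEnt\cH(\pi)$ while $\pi^*=\cG^{0,\cEnt}(Q_\cEnt^*)$ maximises $\dprod{\pi}{Q_\cEnt^*}+\cEnt\cH(\pi)$, I would trade $V_\cEnt^*-\cT^\cEnt_{\pi_{K+1}}V_\cEnt^*$ for a two-sided gap between $\Psi_K$ and $Q_\cEnt^*=R+\gamma PV_\cEnt^*$; because this gap enters only through $\gamma P$ and the last-step greedy choice may be suboptimal on both sides, this supplies the remaining $2\gamma$, giving the prefactor $\tfrac{2\gamma}{1-\gamma}$.

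Next I would unroll the BAL critic recursion $\Psi_k=\cT^{\bALc\bALs}_{\pi_k}\Psi_{k-1}=R+\rcKL\bALc(A_{k-1})+\gamma P\dprod{\pi_k}{\Psi_{k-1}-\bALs(A_{k-1})}$. Using the identities from Section~\ref{ss:analysis:bal}, namely $A_{k-1}=\rcEnt\log\pi_k=\Psi_{k-1}-V_{k-1}$ and $\dprod{\pi_k}{A_{k-1}}=-\rcEnt\cH(\pi_k)$, the Bellman-backup term rewrites as $\gamma P\bigl(V_{k-1}+\dprod{\pi_k}{A_{k-1}-\bALs(A_{k-1})}\bigr)$, so that $Q_\cEnt^*-\Psi_k=\gamma P\bigl(V_\cEnt^*-V_{k-1}\bigr)-\rcKL\bALc(A_{k-1})-\gamma P\dprod{\pi_k}{A_{k-1}-\bALs(A_{k-1})}$. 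Projecting this residual onto $\pi^*$, adding and subtracting $\rcKL A_\cEnt^*$, and using $V_\cEnt^*=\dprod{\pi^*}{Q_\cEnt^*-A_\cEnt^*}$ together with $\cEnt\cH(\pi^*)=-\dprod{\pi^*}{A_\cEnt^*}$, the part of each step that deviates from a genuine regularized Bellman backup collapses to exactly $\Delta_k^{\bALc\bALs}$, while the backup part contributes a $\gamma$-contracted copy of the previous-step error. Iterating over $k=1,\dots,K$ and closing the recursion with the trivial estimate $\nInf{V_\cEnt^*-V_\cEnt^{\pi_1}}\le 2V^\cEnt_{\rm max}$ produces the geometric sum $\sum_{k=1}^{K-1}\gamma^{K-k-1}\nInf{\Delta_k^{\bALc\bALs}}$ and the residual $2\gamma^{K-1}V^\cEnt_{\rm max}$; combining with the first step gives \eqref{eq:bal_suboptimality}. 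Specialising to $\rcKL=0$, $\cEnt=\rcEnt$ and $\bALs=\Id$ recovers Theorem~1 of \citet{Zhang2022}.

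The main obstacle is the bookkeeping that forces the per-iteration residual to be \emph{exactly} $\Delta_k^{\bALc\bALs}$ and nothing more. One must simultaneously juggle the two temperatures $\rcEnt$ (the softmax/reparameterization temperature) and $\cEnt=(1-\rcKL)\rcEnt$ (the temperature of the target regularized MDP) --- in particular routing the KL-induced slack, which does not vanish when $\cKL>0$, through the $\rcKL\bigl(A_\cEnt^*-\bALc(A_{k-1})\bigr)$ part of $\Delta_k^{\bALc\bALs}$ rather than leaving it as a separate additive bias --- move between $\bL^\cEnt$/softmax quantities and $\dprod{\pi^*}{\cdot}$-weighted quantities via the variational inequality $\bL^\cEnt Q\ge\dprod{\pi}{Q}+\cEnt\cH(\pi)$ with every inequality correctly oriented, and carry the nonlinear maps $\bALc,\bALs$ evaluated at $A_{k-1}$ faithfully through the recursion. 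A secondary subtlety is that $\cT^{\bALc\bALs}_{\pi_k}$ is gap-increasing and hence not a contraction, so the error-propagation step cannot be obtained by iterating a contraction map; it must be re-derived by comparing the iterates directly with the optimum, so that the only terms surviving the telescoping are the $\Delta_k^{\bALc\bALs}$ and the vanishing initialisation term.
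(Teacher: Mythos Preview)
Your proposal is correct and follows essentially the same route as the paper: the paper also first reduces $V_\cEnt^*-V_\cEnt^{\pi_{K+1}}$ via the telescoping identity $V_\cEnt^*-V_\cEnt^{\pi_{K+1}}\le (I-\gamma P^{\pi_{K+1}})^{-1}(\gamma P^{\pi^*}-\gamma P^{\pi_{K+1}})(V_\cEnt^*-V_{K-1})$ to supply the $\tfrac{2\gamma}{1-\gamma}$ factor, then unrolls $V_\cEnt^*-V_{k-1}=\gamma P^{\pi^*}(V_\cEnt^*-V_{k-2})+\bigl(\cT_{\pi^*}^{0,\cEnt}V_{k-2}-\bL^{\rcEnt}\Psi_{k-1}\bigr)$ and uses the variational inequality $\bL^{\rcEnt}\Psi_{k-1}\ge\dprod{\pi^*}{\Psi_{k-1}}+\rcEnt\cH(\pi^*)$ together with the same identities on $A_{k-1}$ to show the per-step slack is bounded by $\Delta_k^{\bALc\bALs}$. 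One small slip: the recursion is in the \emph{critic} iterates $V_k=\bL^{\rcEnt}\Psi_k$, so the initialisation term you close with should be $\nInf{V_\cEnt^*-V_0}$ rather than $\nInf{V_\cEnt^*-V_\cEnt^{\pi_1}}$, though both are trivially $\le 2V_{\rm max}^\cEnt$ and the final bound is unaffected.
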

Since the suboptimality of BAL is characterize by Proposition \ref{thm:BAL_soft_advantage_error}, we can discuss its convergence property as in previous researches \citep{Kozuno2019,Vieillard2020a}.
The bound \eqref{eq:bal_suboptimality} resembles the standard suboptimality bounds in the literature \citep{Munos2005,Munos2007,Antos2008,Farahmand2010},
which consists of the horizon term $2\gamma/(1-\gamma)$,
initialization error $2 \gamma^{K-1} V^\cEnt_{\rm max}$ that goes to zero as $K\to\infty$,
and the accumulated error term.
However, our error terms do not represent the Bellman backup errors,
but capture the \emph{misspecification of the optimal policy}.
Indeed, $\Delta^{\bALc\bALs}_{k}$ reduces to
$\Delta^{\rX\bALc}_{k} = - \rcKL \dprod{\pi^{*}}{{\bALc}(A_{k-1})}$ as $\rcEnt\to 0$, thus it holds that $
 \Delta^{\rX\bALc}_{k}(s)
 = - \rcKL \bALc \left(\Psi_{k-1}(s,\pi^{*}(s)) - \Psi_{k-1}(s, \pi_{k}(s))\right)
$.
We note that, the error terms $\Delta^{\bALc\bALs}_{k}$ do not contain the errors $\epsilon_k$ in \eqref{eq:bal}, because we simply omitted them in our analysis as done by \citet{Zhang2022}.
Our interest here is
\emph{not} in the effect of the approximation/estimation error $\epsilon_k$,
but in the effect of the error inherent to the soft-gap-increasing nature of M-VI and BAL.
The following corollary considers a decomposition of the error
$
 \Delta^{\bALc\bALs}_{k}
 =
 \Delta^{\rX\bALc}_{k} + \Delta^{\cH\bALs}_{k}
$
and states that
(1) the cross term
$\Delta^{\rX\bALc}_{k} = - \rcKL \dprod{\pi^{*}}{{\bALc}(A_{k-1})}$
has major effect on the sub-optimality and is \emph{always} decreased by $\bALc\ne\Id$,
and (2) the entropy terms
$
 \Delta^{\cH\bALs}_{k} = \dprod{\pi^{*}}{
   \rcKL A_{\cEnt}^{*} - \gamma P \dprod{
   \pi_{k}}{A_{k-1} - {\bALs}(A_{k-1})}}
$ are guaranteed to be decreased by $\bALs\ne\Id$
when the policy is overly deterministic compared to the optimal policy.
This property is reasonable because when the policy becomes too derterministic in the early stage,
the advantage values likely  concentrate to non-optimal actions and gap-increasing could be performed wrongly.

\begin{corollary}
  \label{cor:error_reduction_property}
  It always holds that
  $
    \|{\Delta^{\rX\bALc}_{k}}\|_{\infty}
    \le
    \|{\Delta^{\rX\Id}_{k}}\|_{\infty}
  $
  and each error is upper bounded as
  $\|{\Delta^{\rX\Id}_{k}}\|_{\infty} \le \frac{2 R_{\rm \max}}{1-\gamma}$
  and
  $\|{\Delta^{\rX\bALc}_{k}}\|_{\infty} \le c_{\bALc}$.
  We also have
  $
    \|{\Delta^{\cH\bALs}_{k}}\|_{\infty}
    \le
    \|{\Delta^{\cH\Id}_{k}}\|_{\infty}
  $
  if
  $
    \gamma P^{\pi^\ast} \cH(\pi_{k}) \le \rcKL \cH(\pi^\ast)
  $
  .
\end{corollary}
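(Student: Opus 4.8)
The plan is to reduce the four inequalities to two elementary facts recorded in Section~\ref{ss:analysis:bal}: (a) the soft advantage $A_{k-1}=\rcEnt\log\pi_{k}=\Psi_{k-1}-\bL^{\rcEnt}\Psi_{k-1}$ is pointwise non-positive; and (b) each of $\bALc,\bALs$ sandwiches the identity on the negative half-line, so that pointwise $A_{k-1}\le\bALc(A_{k-1})\le 0$ and $A_{k-1}\le\bALs(A_{k-1})\le 0$, and hence also $0\le -\bALc(A_{k-1})\le -A_{k-1}$ and $A_{k-1}\le A_{k-1}-\bALs(A_{k-1})\le 0$.

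For the first claim I would integrate the chain $0\le -\bALc(A_{k-1})\le -A_{k-1}$ against the probability vector $\pi^{*}(\cdot\mid s)$; since its weights are non-negative and sum to one, both inequalities are preserved, giving $0\le -\langle\pi^{*},\bALc(A_{k-1})\rangle(s)\le -\langle\pi^{*},A_{k-1}\rangle(s)$, and multiplying through by $\rcKL\ge 0$ yields $0\le\Delta^{\rX\bALc}_{k}(s)\le\Delta^{\rX\Id}_{k}(s)$ for every $s$, whence $\|\Delta^{\rX\bALc}_{k}\|_{\infty}\le\|\Delta^{\rX\Id}_{k}\|_{\infty}$. The bound $\|\Delta^{\rX\bALc}_{k}\|_{\infty}\le c_{\bALc}$ is then immediate from condition~(ii) (which forces $|\bALc(\cdot)|\le c_{\bALc}$) together with $\rcKL\le 1$. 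For $\|\Delta^{\rX\Id}_{k}\|_{\infty}\le 2\Rmax/(1-\gamma)=2V_{\rm max}$ I would rewrite $\Delta^{\rX\Id}_{k}(s)=\rcKL\bigl(V_{k-1}(s)-\langle\pi^{*},\Psi_{k-1}\rangle(s)\bigr)$ with $V_{k-1}=\bL^{\rcEnt}\Psi_{k-1}$, bound $|A_{k-1}|$ by the oscillation of $\Psi_{k-1}$ via $\bL^{\rcEnt}\Psi\le\max_{a}\Psi+\rcEnt\log|\cA|$ (Appendix~\ref{ss:appendix:proofs:L}), and invoke the uniform boundedness of the iterates by $V_{\rm max}$ that is standard in the AL analyses; in the hard regime $\rcEnt\to 0$ this reads $A_{k-1}(s,a)=\Psi_{k-1}(s,a)-\max_{b}\Psi_{k-1}(s,b)$ with $\|\Psi_{k-1}\|_{\infty}\le V_{\rm max}$, so $\|A_{k-1}\|_{\infty}\le 2V_{\rm max}$, and $\rcKL\le 1$ closes it.

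For the entropy term I would first observe that $\Delta^{\cH\Id}_{k}=\rcKL\langle\pi^{*},A^{*}_{\cEnt}\rangle=-\rcKL\rcEnt\cH(\pi^{*})\le 0$, using the fixed-point identity $A^{*}_{\cEnt}=\rcEnt\log\pi^{*}$ and $\langle\pi^{*},\log\pi^{*}\rangle=-\cH(\pi^{*})$. Writing $\Delta^{\cH\bALs}_{k}=\Delta^{\cH\Id}_{k}+b$ with $b=-\gamma\langle\pi^{*},P\langle\pi_{k},A_{k-1}-\bALs(A_{k-1})\rangle\rangle$, fact~(b) and $\langle\pi_{k},A_{k-1}\rangle=-\rcEnt\cH(\pi_{k})$ give $-\rcEnt\cH(\pi_{k})\le\langle\pi_{k},A_{k-1}-\bALs(A_{k-1})\rangle\le 0$, and applying the positive operators $P$ and $\langle\pi^{*},\cdot\rangle$ then yields $0\le b\le\gamma\rcEnt P^{\pi^{*}}\cH(\pi_{k})$. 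Under the hypothesis $\gamma P^{\pi^{*}}\cH(\pi_{k})\le\rcKL\cH(\pi^{*})$ this forces $b\le\rcKL\rcEnt\cH(\pi^{*})=-\Delta^{\cH\Id}_{k}$, so $\Delta^{\cH\Id}_{k}\le\Delta^{\cH\bALs}_{k}\le 0$; both terms being non-positive, $|\Delta^{\cH\bALs}_{k}(s)|\le|\Delta^{\cH\Id}_{k}(s)|$ for every $s$, and taking $\sup_{s}$ finishes the proof.

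The only step needing genuine care is the $2\Rmax/(1-\gamma)$ bound on $\|\Delta^{\rX\Id}_{k}\|_{\infty}$: away from the hard limit it really asks for a uniform bound on the re-parameterized iterates $\Psi_{k-1}$ (and absorbs an extra $O(\rcEnt\log|\cA|)$ term coming from the log-sum-exp gap). Everything else is just a matter of pushing the two sandwiching facts through the non-negative weights $\pi^{*},P$ and the scalars $\rcKL,\gamma,\rcEnt$, so no contraction or fixed-point argument is needed.
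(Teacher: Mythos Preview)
Your proposal is correct and follows essentially the same route as the paper's own argument in Appendix~\ref{ss:appendix:proof:error_reduction_property}: both use the pointwise sandwich $A_{k-1}\le h(A_{k-1})\le 0$ for $h\in\{\bALc,\bALs\}$, push it through the non-negative weights $\pi^{*}$, $P$, $\rcKL$, $\gamma$, and then observe that $\Delta^{\cH\Id}_{k}\le\Delta^{\cH\bALs}_{k}\le 0$ under the stated entropy hypothesis. Your explicit splitting $\Delta^{\cH\bALs}_{k}=\Delta^{\cH\Id}_{k}+b$ and your caveat that the $2\Rmax/(1-\gamma)$ bound really rests on a uniform iterate bound (and picks up an $O(\rcEnt\log|\cA|)$ correction away from the hard limit) are slightly more careful than the paper, which simply asserts that bound without further justification.
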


\begin{figure*}
  \begin{subfigure}[b]{0.25\textwidth}
    \centering
    \includegraphics[width=1.05\columnwidth]{
      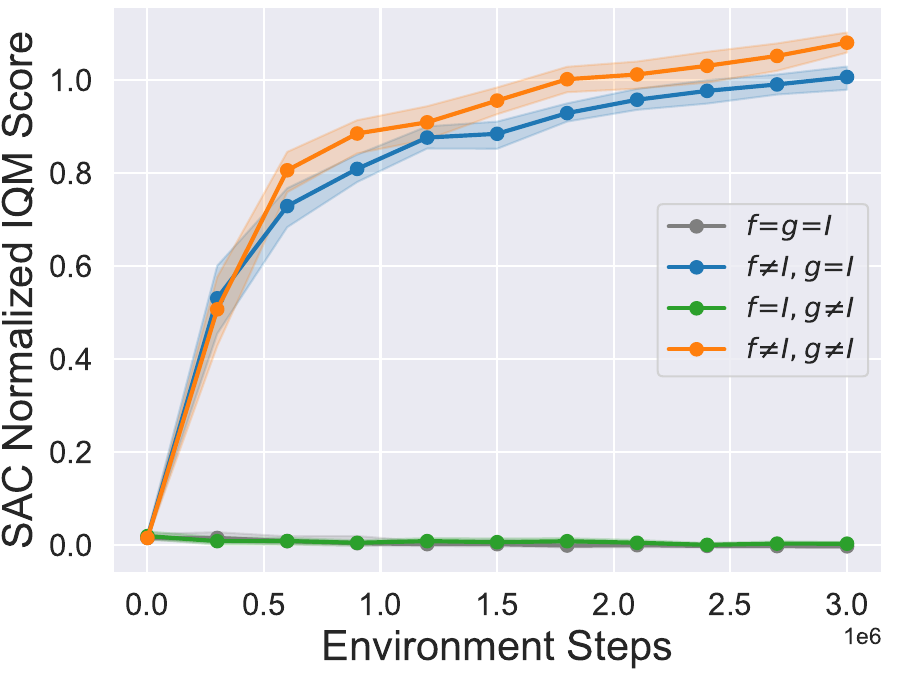}
    \caption{Effect of $\bALc\ne\Id$, $\bALs\ne\Id$.}
    \label{fig:ablation_fg_use_both}
  \end{subfigure}%
  \hspace{-0.3cm}
  \begin{subfigure}[b]{0.25\textwidth}
    \centering
    \includegraphics[width=.79\columnwidth]{
      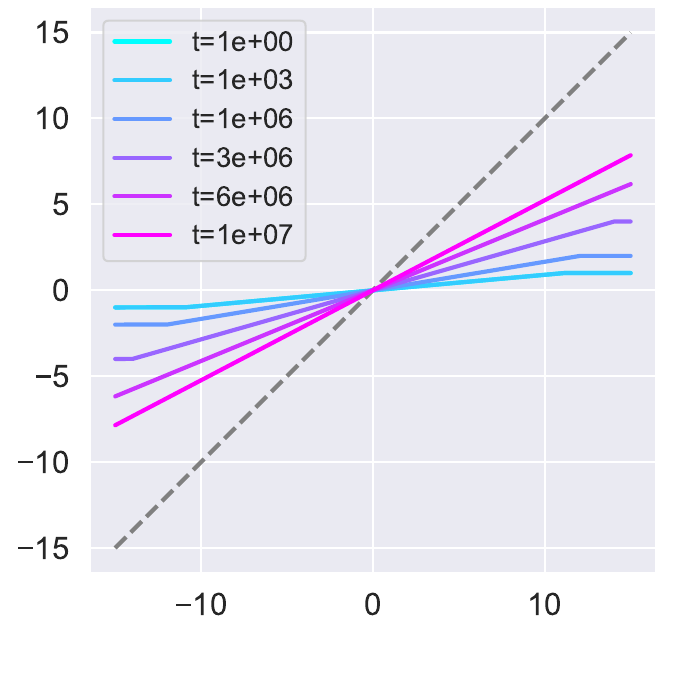}
    \caption{Time-dependent $\bALs_t$}
    \label{fig:gs_clip_t}
  \end{subfigure}%
  \hspace{-0.3cm}
  \begin{subfigure}[b]{0.25\textwidth}
    \centering
    \includegraphics[width=1.05\columnwidth]{
      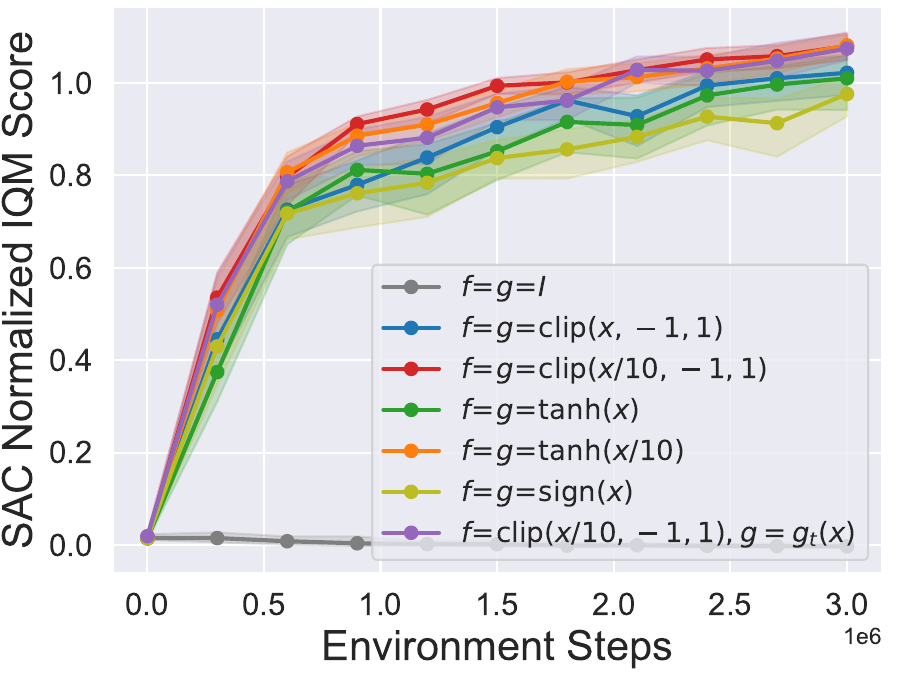}
    \caption{Ablation of $\bALc=\bALs\ne\Id$}
    \label{fig:ablation_fg_type}
  \end{subfigure}%
  \hspace{0.2cm}
  \begin{subfigure}[b]{0.25\textwidth}
    \centering
    \includegraphics[width=1.05\columnwidth]{
      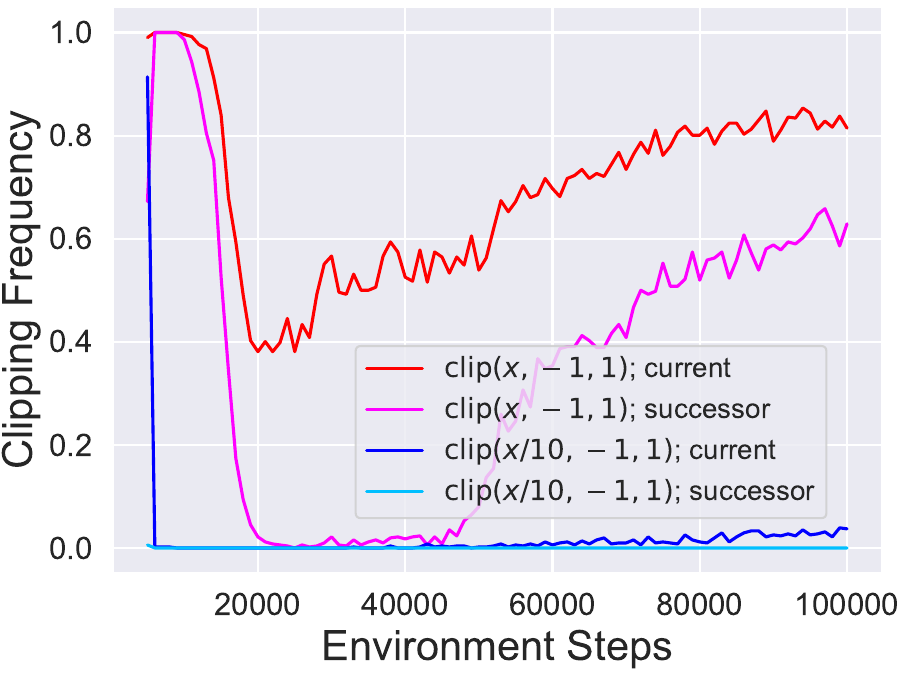}
    \caption{Clip frequency}
    \label{fig:clipping_frequency}
  \end{subfigure}
  \caption{
    Empirical study to examine
    how the choices of the bounding functions $\bALc$, $\bALs$ affect the performance of MDAC.
  }
  \label{fig:compare_fgs}
  \vspace{-0.5\baselineskip}
\end{figure*}

Overall, there is a trade-off in the choice of $\bALs$;
$\bALs=\Id$ always satisfies the sufficient condition of asymptotic convergence \eqref{eq:sufficient_condition_for_bal_fg_to_converge},
but the entropy term is not decreased.
On the other hand,
${\bALs}\ne\Id$
is expected to decrease the entropy term,
though which possibly violates \eqref{eq:sufficient_condition_for_bal_fg_to_converge} and might hinder the asymptotic performance.
In the next section, we examine how the choice of $\bALc$ and $\bALs$ affects the empirical performance.

\section{EXPERIMENT}
\label{ss:experiment}

In this section, we empirically evaluate the effect of choices for $\bALc$ and $\bALs$ and compare the performance to baseline algorithms.
Though we focus on the model-free actor-critic setting,
we also compare the empirical performances of mode-based tabular M-VI \eqref{eq:imdvi} and BAL \eqref{eq:bal} in Appendix \ref{ss:appendix:experimental_details:toy}.

\subsection{Mujoco Locomotion Environments}
\label{ss:experiment:mujoco}

\paragraph{Setup and Metrics.}

We empirically evaluate the effect of the bounding functions on the performance of MDAC in 6 Mujoco environments
(\texttt{Hopper-v4}, \texttt{HalfCheetah-v4}, \texttt{Walker2d-v4}, \texttt{Ant-v4}, \texttt{Humanoid-v4} and \texttt{HumanoidStandup-v4})
from Gymnasium \citep{Gymnasium2023}.
We evaluate our algorithm and baselines on 3M environment steps,
except for easier \texttt{Hopper-v4} on 1M steps.
For the reliable benchmarking,
we report the aggregated scores over the environments as suggested by \citet{Agarwal2021}.
To be precise, we train 10 different instances of each algorithm with different random seeds
and calculate baseline-normalized scores along iterations for each task as
$
 {\rm score} =
   \frac{
     {\rm score_{algorithm}} - {\rm score_{random}}
     }{
     {\rm score_{baseline}} - {\rm score_{random}}
   }
$,
where the baseline is the mean SAC score after 3M steps (1M for \texttt{Hopper-v4}).
Then, we calculate the interquartile mean (IQM) score by aggregating the learning results over all 6 environments.
We also report pointwise 95\% percentile stratified bootstrap confidence intervals.
We use Adam 
\citep{Kingma2015} for all the gradient-based updates.
The discount factor is set to $\gamma=0.99$.
All the function approximators, including those for the baselines,
are fully-connected feed-forward networks with two hidden layers,
which have $256$ units with ReLU activations.
We use a Gaussian policy with mean and standard deviation provided by the neural network.
We fixed $\log\sigma_{\rm min}\!=\!-5$.
More experimental details, including a full list of the hyperparameters and per-environment results,
will be found in Appendix \ref{ss:appendix:mujoco}.

\paragraph{Effect of bounding functions $f$ and $g$.}

We start from evaluating how the performance of MDAC is affected by
the choice of the bounding functions.
First, we evaluate whether bounding both $\log \pi(a|s)$ terms is beneficial.
We compare 4 choices:
(1) $\bALc\!=\!\bALs\!=\!\Id$,
(2) $\bALc(x)\!=\!{\rm tanh}(x/10), \bALs\!=\!\Id$,
(3) $\bALc(x)\!=\!\Id, \bALs\!=\!{\rm tanh}(x/10)$ and
(4) $\bALc(x)\!=\!\bALs(x)\!=\!{\rm tanh}(x/10)$.
Figure \ref{fig:ablation_fg_use_both} compares the learning results for these choices.
The results show that $\bALc\!=\!\Id$ performs badly regardless the choice of $\bALs$ and the improvement by $\bALc\!\ne\!\Id$ is significant.
In addition, it indicates that bounding both $\rcEnt\log\pi$ terms is indeed beneficial.
These experimental results are consistent with Proposition \ref{thm:BAL_soft_advantage_error}.

Next, we compare several choices of $\bALc$ and $\bALs$:
${\rm clip}(x, -1, 1)$,
${\rm clip}(x/10, -1, 1)$,
${\rm tanh}(x)$,
${\rm tanh}(x/10)$,
and
${\rm sign}(x)$.
Notice that the last choice${\rm sign}(x)$ violates our requirement to the bounding functions.
We also consider a time-dependent function $\bALs_t$, which is designed so that it satisfies $\bALs_t\to\Id$ as $t\to\infty$:
\begin{align}
  \begin{cases}
    \tau = \frac{t+T_1}{T_1}, \quad \rho_\tau = \frac{\tau}{\tau+T_2}, \\
    \bALs_t(x) = {\rm clip}(x \rho_\tau, -\tau, \tau)
  \end{cases}
  ,
  \label{eq:gs_clip_t}
\end{align}
where $t$ is the gradient step.
Figure \ref{fig:gs_clip_t} depicts $\bALs_t(x)$ with $T_1=10^6,T_2=10$.
We fixed $T_2=10$ and conducted a search over
$T_1 \in \{10^5,3\cdot10^5,6\cdot10^5,10^6\}$.
We found that the performance difference is relatively small, and concluded that it is safe to set $T_1 = H / 10$, where $H$ is the horizon length of the experiment
(see Appendix \ref{ss:appendix:experimental_details:mujoco:T1} for the results).
Figure \ref{fig:ablation_fg_type} compares the learning curves for these choices.
The result indicates that the performance difference between ${\rm clip}(x)$ and ${\rm tanh}(x)$ is small.
On the other hand, the performance is better if the slower saturating functions ${\rm clip}(x/10, -1, 1)$ and ${\rm tanh}(x/10)$ are used.
We also found that the time-dependent $\bALs_t$ performs well in the later stage.
Furthermore, ${\rm sign}(x)$ resulted in the worst performance among these choices.
Figure \ref{fig:clipping_frequency} compares the frequencies of clipping $\rcEnt\log\pi$ terms
by ${\rm clip}(x, -1, 1)$ and ${\rm clip}(x/10, -1, 1)$
in the sampled minibatchs in the initial learning phase in
\texttt{HalfCheetah-v4}.
For ${\rm clip}(x, -1, 1)$,
the clipping occurs frequently especially for the current $(s,a)$ pairs
and the information of relative $\rcEnt\log\pi$ values between different state-actions are lost.
In contrast,
for ${\rm clip}(x/10, -1, 1)$, the clipping rarely happens and the information of relative $\rcEnt\log\pi$ values are leveraged in the learning.
These results suggest that the relative values of $\rcEnt\log\pi$ terms between different state-actions are beneficial,
even though the raw values (by $\bALc\!=\!\bALs\!=\!\Id$) are harmful.

\paragraph{Comparison to baseline algorithms.}

We compare MDAC against
TD3 \citep{Fujimoto2018}, a non-regularized method,
SAC \citep{Haarnoja2018b}, an entropy-only-regularized method,
and X-SAC \citep{Garg2023}, an entropy-regularized method with direct estimation of the optimal soft value and additional KL-based trust-region for policy update.
We adopted $\bALc(x)\!=\!\bALs(x)\!=\!{\rm clip}(x/10, -1, 1)$ for MDAC.
\begin{wrapfigure}{l}{0.55\linewidth}
 \vspace{-0.6\baselineskip}
 \centering
 \includegraphics[keepaspectratio, scale=0.32]{"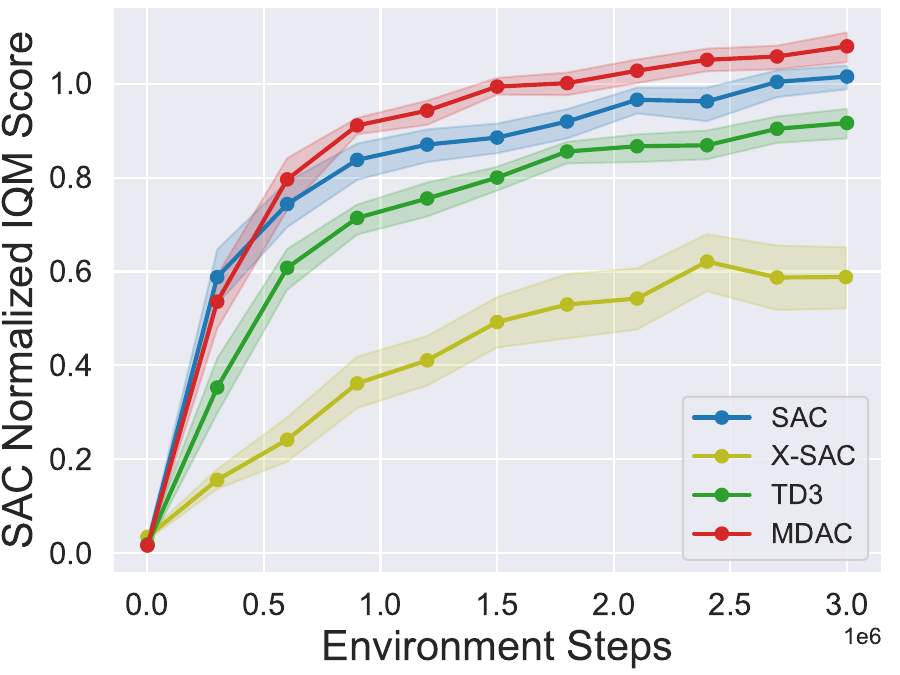"}
  \caption{Benchmarking results.}
  \label{fig:benchmarking}
  \vspace{-0.5\baselineskip}
\end{wrapfigure}
Figure \ref{fig:benchmarking} compares the learning results.
Notice that the final IQM score of SAC does not match $1$,
because the scores are normalized by the mean of all the SAC runs,
whereas IQM is calculated by middle 50\% runs.
We found that X-SAC struggles in Mujoco environments even if we tuned its scale parameter $\beta$ for Gumbel distribution (see Appendix \ref{ss:appendix:experimental_details:mujoco:xsac} for the details).
The results show that MDAC surpasses all the baseline methods.

\subsection{Adroit and DeepMind Control Suite \texttt{dog}}
\label{ss:experiment:dmc}

Finally, we compare MDAC and SAC in
the Adroit hand manipulation tasks \citep{Rajeswaran2018}
and
the \texttt{dog} domain from DeepMind Control Suite \citep{DMC2020}
with a longer horizon setting, training 10 different instances for 10M environment steps.
We use \texttt{AdroitHandDoor-v1}, \texttt{AdroitHandHammer-v1} and \texttt{AdroitHandPen-v1} from Adroit and
\texttt{stand}, \texttt{walk}, \texttt{trot}, \texttt{run} and \texttt{fetch} from DMC \texttt{dog}.
\begin{wrapfigure}{l}{0.51\linewidth}
 \vspace{-0.6\baselineskip}
 \centering
 \includegraphics[keepaspectratio, scale=0.31]{"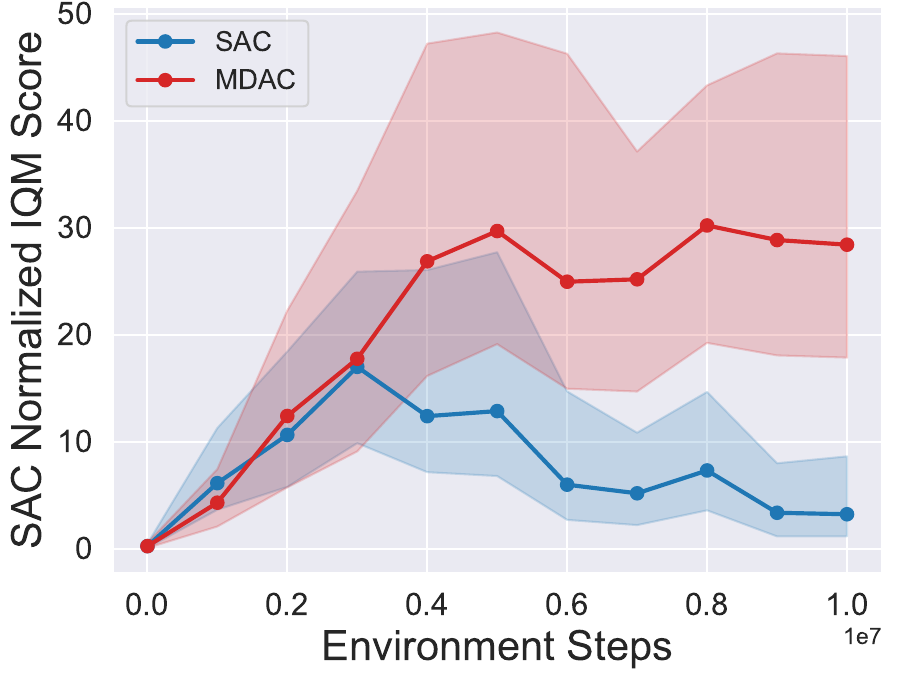"}
  \caption{Aggregated score for Adroit and DMC \texttt{dog}.}
  \label{fig:long_horizon_results:iqm}
 \vspace{-0.5\baselineskip}
\end{wrapfigure}
We adopted $\bALc(x)\!=\!{\rm clip}(x/10, -1, 1)$
and the time-dependent bounding \eqref{eq:gs_clip_t},
$\bALs_t(x)$,
with $T_1=H/10=10^6$ for MDAC.
The other experimental setting are set to equivalent to those in Mujoco experiments,
except that we used a learning rate $3\cdot10^{-5}$ in Adroit as suggested by \citet{Vieillard2022}.
Figure \ref{fig:long_horizon_results:iqm} and \ref{fig:long_horizon_results} compare the aggregated scores and per-environment results,
respectively.
The results show that MDAC is comparable to SAC in \texttt{stand} and \texttt{trot}.
Both methods degraded in \texttt{AdroitHandDoor-v1}.
MDAC surpasses SAC in the rest of environments and outperforms in \texttt{AdroitHandPen-v1} and \texttt{walk},
as well as in terms of the aggregated normalized IQM score.
Overall, while the performance of SAC often degrades,
MDAC learns more stably and the degradation is less frequently observed.
We conjecture that this effect is due to the implicit KL-regularized nature of MDAC.

\section{CONCLUDING REMARKS}
\label{ss:conclusion}

In this study, we proposed MDAC,
a model-free actor-critic instantiation of MDVI for continuous action domains.
We showed that its empirical performance is significantly boosted
by bounding the values of log-density terms in the critic loss.
By relating MDAC to AL, we theoretically showed that
the inherent error of gap-increasing operators is decreased by bounding the soft advantage terms, as well as provided the convergence analyses.
Our analyses indicated that bounding both of the log-policy terms is beneficial and the bounding function for the successor bonus term is better to reduce gradually to the identity map.
Lastly, we evaluated the effect of the bounding functions on MDAC's performance empirically in simulated environments and showed that MDAC performs better than strong baseline methods with an approximate choice.

\paragraph{Limitations.}
This study has three major limitations.
Firstly, our theoretical analyses are valid only for fixed $\rcEnt$.
Thus, its exploding behavior observed in Section \ref{ss:motivating_the_bounding_functions} for $\bALc=\bALs=\Id$ is not captured.
Secondly, our theoretical analyses apply only to tabular cases in the current forms.
To extend our analyses to continuous state-action domains,
we need measure-theoretic considerations as explored in Appendix B of \citep{Puterman1994}.
Lastly, our analyses and experiments do not offer
the optimal design of the bounding functions $\bALc$ and $\bALs$.
We leave these issues as open questions.

\begin{figure*}[t]
   \begin{center}
   \includegraphics[width=0.99\linewidth]{"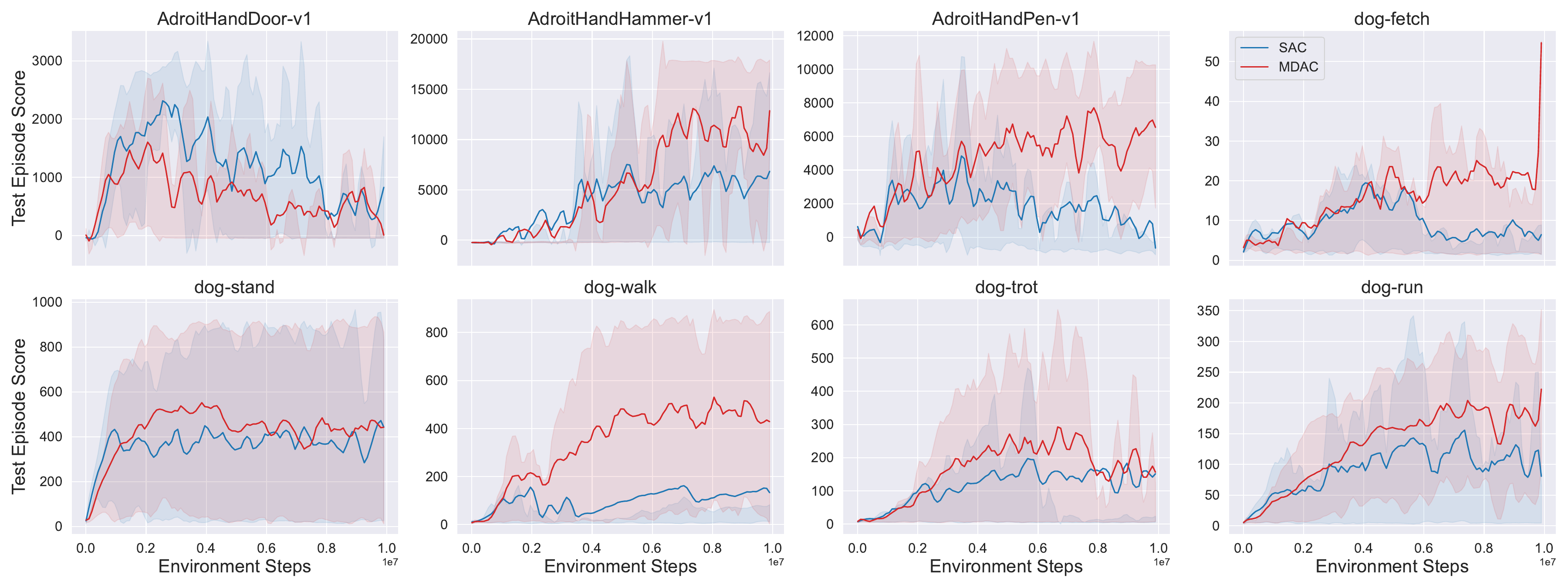"}
   \caption{
      Per-environment results for Adroit hand manipulation tasks
      and DeepMind Control Suite \texttt{dog} domain.
      Mean test rewards over $10$ independent runs are plotted.
      The shaded areas indicate 25$\%$ and 75$\%$ percentiles.
   }
   \label{fig:long_horizon_results}
 \end{center}
\end{figure*}

\bibliography{references}
\bibliographystyle{preprint}

\clearpage

\onecolumn
\appendix

\section{ADDITIONAL DISCUSSION ON MD-BASED RL METHODS}
\label{ss:appendix:md_methods}

\citet{Wang2019} explores off-policy policy gradients in MD view and proposes an off-policy variant of PPO.
\citet{Tomar2022} considers a MD structure with the advantage function and the KL divergence, and proposes variants of SAC and PPO.
\citet{Yang2022} incorporates a variance reduction method into MD based RL.
\citet{Vaswani2022} and \citet{Alfano2023} try to generalize the existing MD based approaches to general policy parameterizations.
\citet{Kuba2022} proposes a further generalization that unify even non-regularized RL methods such as DDPG and A3C.
\citet{Lan2023} proposes a MD method that resembles MDVI, which incorporates both the (Bregman/KL) divergence and an additional convex regularizer, and show that it achieves fast linear rate of convergence.
Munchausen RL is distinct from the above literature in the sense that,
it is \emph{implicit} mirror descent due to the sound reparameterization by \citet{Vieillard2020b}.
Though this makes it very easy to implement, the control of the policy change is vague, particularly when combined with function approximations.
Thus, we argue that (1) Munchausen RL based methods are very good starting point to use, and (2) if a precise control of policy change is demanded, another MD methods could be tried.

\section{ADDITIONAL THEORETICAL DISCUSSION AND PROOFS}
\label{ss:appendix:proofs}

\subsection{Mirror Descent Structure of BAL}
\label{ss:appendix:MD_of_BAL}

We we stated in the maintext,
BAL preserves the original mirror descent structure of MDVI \eqref{eq:mdvi}.
Noticing that
$Q_{k} = \Psi_{k} - \rcKL\rcEnt\log\pi_{k}$,
$(1-\rcKL)\rcEnt = \cEnt$ and
$\rcKL\rcEnt = \cKL$,
and following some steps similar to the derivation of Munchausen RL
in Appendix A.2 of \citep{Vieillard2020b},
the bounded gap-increasing operator \eqref{eq:bal_operator}
can be rewritten in terms of $Q$ as
\begin{align*}
 \cT^{\bALc\bALs}_{\pi_{k+1}\sep\pi_k} \Psi_{k}
 &=
 R + \gamma P \left(
     \left<\pi_{k+1},Q_{k}\right>
     \!+\! \cEnt \mathcal{H}(\pi_{k+1})
     \!-\! \cKL D_{KL}(\pi_{k+1}\|\pi_{k})
 \right)
 \\
 &\quad
 - \rcKL \left(A_{k} - {f}(A_{k})\right)
 + \gamma P
     \left<\pi_{k+1},A_{k} - g(A_{k})\right>
 \\
 &=
 \cT^{\cKL,\cEnt}_{\pi_{k+1}\sep\pi_k} Q_{k}
   - \rcKL \left(A_{k} \!-\! {f}(A_{k})\right)
   + \gamma P
       \left<\pi_{k+1},A_{k} \!-\! g(A_{k})\right>
 .
\end{align*}
Therefore, BAL still aligns the the original mirror descent structure of MDVI,
but with additional modifications to the Bellman backup term.

\subsection{Basic Properties of $\bL^{\rcEnt}$}
\label{ss:appendix:proofs:L}

In this section, we omit $\Psi$'s dependency to state $s$,
and let $\Psi\in\bR^\cA$ for brevity.
For $\rcEnt > 0$, we write
$\bL^{\rcEnt}\Psi = \rcEnt \log \dprod{\one}{\exp\frac{\Psi}{\rcEnt}} \in \bR$.
\begin{lemma}
  \label{lem:L_is_continuous_and_increasing}
  $\bL^{\rcEnt}$ is continuous and strictly increasing.
\end{lemma}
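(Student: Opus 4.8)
The plan is to prove the two claims — continuity and strict monotonicity — separately, working componentwise in $\cA$ and treating $\rcEnt > 0$ as a fixed constant. For continuity, I would note that $\bL^{\rcEnt}\Psi = \rcEnt \log \bigl(\sum_{a} \exp(\Psi(a)/\rcEnt)\bigr)$ is a finite composition of continuous maps: the scaling $\Psi \mapsto \Psi/\rcEnt$, the coordinatewise exponential, the finite sum over $a \in \cA$ (here finiteness of $\cA$ is used), and the logarithm. Since $\sum_a \exp(\Psi(a)/\rcEnt) > 0$ for every $\Psi \in \bR^\cA$, the logarithm is applied only on its domain, so the composition is continuous on all of $\bR^\cA$. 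This step is essentially immediate.

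For strict monotonicity, I would interpret the ordering on $\bR^\cA$ as the pointwise (partial) order and show that $\Psi_1 \le \Psi_2$ with $\Psi_1 \ne \Psi_2$ implies $\bL^{\rcEnt}\Psi_1 < \bL^{\rcEnt}\Psi_2$. The key observation is that $x \mapsto \exp(x/\rcEnt)$ is strictly increasing, so $\Psi_1(a) \le \Psi_2(a)$ for all $a$ gives $\exp(\Psi_1(a)/\rcEnt) \le \exp(\Psi_2(a)/\rcEnt)$ termwise, with strict inequality at any coordinate $a_0$ where $\Psi_1(a_0) < \Psi_2(a_0)$. Summing over the (nonempty, finite) action set yields a strict inequality $\sum_a \exp(\Psi_1(a)/\rcEnt) < \sum_a \exp(\Psi_2(a)/\rcEnt)$, and then applying the strictly increasing maps $\log(\cdot)$ and multiplication by $\rcEnt > 0$ preserves the strict inequality. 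Alternatively, one can derive monotonicity from the gradient: $\partial \bL^{\rcEnt}\Psi / \partial \Psi(a) = \exp(\Psi(a)/\rcEnt) / \sum_{a'}\exp(\Psi(a')/\rcEnt) = \bigl(\sm{\rcEnt}{\Psi}\bigr)(a) > 0$, which is the softmax weight and is strictly positive everywhere; this also exhibits $\bL^{\rcEnt}$ as smooth, giving an even stronger statement than continuity.

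I do not anticipate a genuine obstacle here — the lemma is foundational and both properties follow from elementary facts about the exponential and logarithm together with finiteness of $\cA$. The only point requiring a little care is being explicit about which order ``strictly increasing'' refers to (the pointwise order on $\bR^\cA$, with ``strictly'' meaning that a strict increase in at least one coordinate forces a strict increase in the output), and making sure the empty-sum degeneracy is excluded, i.e. $|\cA| \ge 1$, which is standing throughout the paper. If I wanted the cleanest exposition I would state and prove it via the softmax-gradient route, since that simultaneously records continuity (indeed $C^\infty$), strict monotonicity, and the useful identity $\nabla \bL^{\rcEnt}\Psi = \sm{\rcEnt}{\Psi}$ that will be reused in later lemmas about $\bL^{\rcEnt}$.
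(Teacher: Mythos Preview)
Your proposal is correct and essentially matches the paper's own proof: the paper argues continuity via composition of continuous functions and establishes strict monotonicity precisely by computing the partial derivative $\partial \bL^{\rcEnt}\Psi / \partial \Psi(a) = \exp(\Psi(a)/\rcEnt)\big/\langle \one, \exp(\Psi/\rcEnt)\rangle > 0$, i.e., your ``softmax-gradient route.'' Your additional direct comparison argument is a valid alternative but is not the route the paper takes.
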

\begin{proof}
  Continuity follows from the fact that $\bL^{\rcEnt}\Psi = \rcEnt \log \dprod{\one}{\exp\frac{\Psi}{\rcEnt}}$ is a composition of continuous functions.
  We also have that
  \begin{align*}
    \frac{\partial}{\partial \Psi(a)} \bL^{\rcEnt}\Psi
    = \frac{\exp\frac{\Psi(a)}{\rcEnt}}
    {\dprod{\one}{\exp\frac{\Psi}{\rcEnt}}}
    > 0,
  \end{align*}
  from which we conclude that $\bL^{\rcEnt}$ is strictly increasing.
\end{proof}
\begin{lemma}
  \label{lem:L_is_no_less_than_max}
  It holds that
  \begin{align*}
    \max_{a\in\cA} \Psi(a)
    \le \bL^{\rcEnt} \Psi
    \le \max_{a\in\cA} \Psi(a) + \rcEnt \log |\cA|
    .
  \end{align*}
\end{lemma}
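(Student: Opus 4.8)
The plan is to reduce the claim to the elementary log-sum-exp inequality by factoring out the largest coordinate. First I would set $m = \max_{a\in\cA}\Psi(a)$ and rewrite $\bL^{\rcEnt}\Psi = \rcEnt \log \dprod{\one}{\exp\frac{\Psi}{\rcEnt}}$ by pulling the constant $\exp(m/\rcEnt)$ out of the sum:
\[
  \bL^{\rcEnt}\Psi
  = \rcEnt\log\!\left(\exp\!\frac{m}{\rcEnt}\sum_{a\in\cA}\exp\!\frac{\Psi(a)-m}{\rcEnt}\right)
  = m + \rcEnt\log\sum_{a\in\cA}\exp\!\frac{\Psi(a)-m}{\rcEnt}.
\]
This is just the additive shift of $\Psi$ becoming a multiplicative factor under $\exp$, and $\rcEnt\log$ of that factor is exactly $m$.

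Next I would bound the residual sum $S \defeq \sum_{a\in\cA}\exp\frac{\Psi(a)-m}{\rcEnt}$. Since $\Psi(a)-m\le 0$ for every $a$ and $\rcEnt>0$, each summand lies in $(0,1]$; moreover the summand for a maximizing action equals $1$, so $1\le S\le|\cA|$. Applying the strictly increasing map $t\mapsto\rcEnt\log t$ (monotonicity of $\log$, cf.\ Lemma~\ref{lem:L_is_continuous_and_increasing}) gives $0\le\rcEnt\log S\le\rcEnt\log|\cA|$. Adding $m$ back to all three terms yields $m\le\bL^{\rcEnt}\Psi\le m+\rcEnt\log|\cA|$, which is the claim.

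I do not expect a genuine obstacle here; the proof is a short computation. The only points that need a little care are the factoring step (valid for any finite $\cA$ and any $\rcEnt>0$) and the observation that the maximizing term contributes exactly $1$ to $S$, which is precisely what pins the lower bound to $\max_{a\in\cA}\Psi(a)$ rather than something smaller. The upper bound is then merely the crude count of $|\cA|$ nonnegative terms each bounded by $1$.
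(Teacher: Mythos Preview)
Your proof is correct and essentially identical to the paper's: both arguments sandwich $\sum_{a}\exp\frac{\Psi(a)}{\rcEnt}$ between $\exp\frac{m}{\rcEnt}$ and $|\cA|\exp\frac{m}{\rcEnt}$ and then take the logarithm; you simply perform the factorization $\exp(m/\rcEnt)$ explicitly before bounding, whereas the paper bounds the unnormalized sum directly. There is no substantive difference.
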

\begin{proof}
  Let $y=\max_{a\in\cA} \Psi(a)$.
  We have that
  \begin{align*}
    \exp \frac{y}{\rcEnt}
    \le \dprod{\one}{\exp\frac{\Psi}{\rcEnt}}
    = \sum_{a\in\cA} \exp\frac{\Psi(a)}{\rcEnt}
    \le |\cA| \exp \frac{y}{\rcEnt}
    .
  \end{align*}
  Applying the logarithm to this inequality, we have
  \begin{align*}
    \frac{y}{\rcEnt}
    \le \log\dprod{\one}{\exp\frac{\Psi}{\rcEnt}}
    \le \frac{y}{\rcEnt} + \log|\cA|
    ,
  \end{align*}
  and thus the claim follows.
\end{proof}

\begin{lemma}
  \label{lem:L_converges_to_max}
  It holds that
    $
    \lim_{\rcEnt\to0} \bL^{\rcEnt} \Psi \to \max_{a\in\cA} \Psi(a)
    $
    .
\end{lemma}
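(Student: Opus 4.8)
The plan is to prove $\lim_{\rcEnt\to 0}\bL^{\rcEnt}\Psi=\max_{a\in\cA}\Psi(a)$ directly from the two-sided sandwich established in Lemma~\ref{lem:L_is_no_less_than_max}. First I would recall that for every $\rcEnt>0$ we have
\[
  \max_{a\in\cA}\Psi(a)\;\le\;\bL^{\rcEnt}\Psi\;\le\;\max_{a\in\cA}\Psi(a)+\rcEnt\log|\cA|.
\]
Since $\cA$ is finite, $\log|\cA|$ is a fixed finite constant, so the right-hand side tends to $\max_{a\in\cA}\Psi(a)$ as $\rcEnt\to 0^{+}$, while the left-hand side is independent of $\rcEnt$. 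Applying the squeeze theorem gives $\lim_{\rcEnt\to 0^{+}}\bL^{\rcEnt}\Psi=\max_{a\in\cA}\Psi(a)$, which is exactly the claim (the limit is one-sided since $\rcEnt\ge 0$ by assumption).

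To make this rigorous at the $\varepsilon$--$\delta$ level, I would fix $\varepsilon>0$ and choose $\delta=\varepsilon/\log|\cA|$ (taking $\delta$ arbitrary if $|\cA|=1$, in which case $\bL^{\rcEnt}\Psi=\Psi$ trivially). Then for $0<\rcEnt<\delta$ the sandwich yields $0\le \bL^{\rcEnt}\Psi-\max_{a}\Psi(a)\le\rcEnt\log|\cA|<\varepsilon$, establishing the limit. No further ingredients are needed; the monotonicity from Lemma~\ref{lem:L_is_continuous_and_increasing} is not required here, only the bound of Lemma~\ref{lem:L_is_no_less_than_max}.

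There is essentially no obstacle: the only thing to be slightly careful about is the degenerate case $|\cA|=1$ (so $\log|\cA|=0$), where the upper bound already coincides with the lower bound and the statement is immediate, and the fact that the limit is taken from above ($\rcEnt\downarrow 0$) because the regularization coefficient is nonnegative by the standing assumptions. If one wanted an alternative route, one could instead write $\bL^{\rcEnt}\Psi=\max_a\Psi(a)+\rcEnt\log\sum_{a}\exp\!\big(\tfrac{\Psi(a)-\max_b\Psi(b)}{\rcEnt}\big)$ and note the sum lies in $[1,|\cA|]$, so the correction term is squeezed between $0$ and $\rcEnt\log|\cA|$; but this is just a restatement of the same estimate, so I would present the direct squeeze argument above.
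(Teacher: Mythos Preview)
Your argument is correct. The squeeze via Lemma~\ref{lem:L_is_no_less_than_max} is sound, and the handling of the edge case $|\cA|=1$ and the one-sided limit is appropriate.

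The paper takes a slightly different, more computational route: it factors out the maximum explicitly, writing $\bL^{\rcEnt}\Psi = y + \rcEnt\log\bigl(|\cB| + \sum_{a\notin\cB}\exp\tfrac{\Psi(a)-y}{\rcEnt}\bigr)$ with $y=\max_a\Psi(a)$ and $\cB$ the argmax set, and then argues that each off-maximum exponential term vanishes as $\rcEnt\to 0$, so the correction $\rcEnt\log(\cdot)\to 0$. This is essentially the ``alternative route'' you sketched at the end, refined by separating the maximizing actions. Your approach is shorter because it reuses Lemma~\ref{lem:L_is_no_less_than_max} rather than redoing the factoring, whereas the paper's proof is self-contained and does not appeal to the preceding lemma. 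Both arrive at the same inequality $0\le\bL^{\rcEnt}\Psi-\max_a\Psi(a)\le\rcEnt\log|\cA|$ in substance, so the difference is organizational rather than mathematical.
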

\begin{proof}
  Let $y=\max_{a\in\cA} \Psi(a)$
  and $\cB=\left\{ a\in\cA \middle| \Psi(a) = y \right\}$.
  It holds that
  \begin{align*}
    \bL^{\rcEnt} \Psi
    &=
    \rcEnt \log \sum_{a\in\cA} \exp\frac{\Psi(a)}{\rcEnt}
    \\
    &=
    \rcEnt \log \left(
      \exp\frac{y}{\rcEnt} \sum_{a\in\cA} \exp\frac{\Psi(a)-y}{\rcEnt}
    \right)
    \\
    &=
    y + \rcEnt \log \left(
      \sum_{a\in\cB} \underbrace{\exp\frac{\Psi(a)-y}{\rcEnt}}_{= 1}
      + \sum_{a\not\in\cB} \exp\frac{\Psi(a)-y}{\rcEnt}
    \right)
    \\
    &=
    y + \rcEnt \log \left(
      |\cB|
      + \sum_{a\not\in\cB} \exp\frac{\Psi(a)-y}{\rcEnt}
    \right)
    .
  \end{align*}
  Since $\Psi(a)-y < 0$ for $a\not\in\cB$, we have
  $\exp\frac{\Psi(a)-y}{\rcEnt} \to 0$ as $\rcEnt\to0$ for $a\not\in\cB$,
  thus it holds that
  $\lim_{\rcEnt\to0} \bL^{\rcEnt} \Psi \to y = \max_{a\in\cA} \Psi(a)$.
\end{proof}

\begin{lemma}
  \label{lem:identity_if_action_agnostic}
  Let $v$ be independent of actions.
  Then it holds that
  $\bL^{\rcEnt}(\Psi + v) = \bL^{\rcEnt}(\Psi) + v$.
\end{lemma}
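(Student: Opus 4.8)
\textbf{Proof plan for Lemma~\ref{lem:identity_if_action_agnostic}.}
The plan is to reduce the claim to the algebraic fact that the scalar $v$ (which does not depend on $a$) factors out of the exponential sum inside $\bL^{\rcEnt}$. First I would write out the definition applied to the shifted argument:
\begin{align*}
  \bL^{\rcEnt}(\Psi + v)
  = \rcEnt \log \dprod{\one}{\exp\frac{\Psi + v}{\rcEnt}}
  = \rcEnt \log \sum_{a\in\cA} \exp\frac{\Psi(a) + v}{\rcEnt}
  .
\end{align*}
Since $v$ is constant over $a$, I would split the exponent as $\exp\frac{\Psi(a)+v}{\rcEnt} = \exp\frac{v}{\rcEnt}\cdot\exp\frac{\Psi(a)}{\rcEnt}$ and pull the common factor $\exp\frac{v}{\rcEnt}$ out of the summation, obtaining $\exp\frac{v}{\rcEnt}\sum_{a\in\cA}\exp\frac{\Psi(a)}{\rcEnt}$. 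Applying $\rcEnt\log(\cdot)$ and using $\log(xy) = \log x + \log y$ then gives $v + \rcEnt\log\dprod{\one}{\exp\frac{\Psi}{\rcEnt}} = \bL^{\rcEnt}(\Psi) + v$, which is exactly the claim.

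There is essentially no obstacle here: the only things to be careful about are that $\rcEnt > 0$ so that division and the logarithm are well defined, and that the state $s$ has been suppressed (per the convention announced at the start of Appendix~\ref{ss:appendix:proofs:L}), so ``independent of actions'' simply means $v$ is a real scalar in this reduced notation. In the unsuppressed form the statement is the pointwise-in-$s$ version where $v\in\bR^{\cS}$ is added as $v(s)$ to every component $\Psi(s,a)$, and the same one-line computation applies at each fixed $s$. I would close by remarking that this lemma is the reason $\bL^{\rcEnt}$ behaves like a genuine softmax-value operator: it is translation-equivariant under action-independent shifts, which is precisely what is needed to relate $V = \bL^{\rcEnt}\Psi$ to $\Psi = Q + \rcKL\rcEnt\log\mu$ and to the soft advantage identity $\rcEnt\log\pi = \Psi - V$ used in Section~\ref{ss:analysis:bal}.
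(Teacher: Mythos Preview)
Your proposal is correct and follows essentially the same approach as the paper's own proof: both unfold the definition of $\bL^{\rcEnt}(\Psi+v)$, factor the action-independent term $\exp\frac{v}{\rcEnt}$ out of the sum over actions, and then use $\log(xy)=\log x+\log y$ to split off $v$. The additional remarks you make about $\rcEnt>0$, the suppressed state dependence, and the translation-equivariance interpretation are accurate and helpful context, but the core argument is identical to the paper's one-line computation.
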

\begin{proof}
  \begin{align*}
    \bL^{\rcEnt} (\Psi + v)
    = \rcEnt \log \dprod{\one}{\exp \frac{\Psi + v}{\rcEnt}}
    = \rcEnt \log \dprod{\one}{\exp \frac{\Psi}{\rcEnt}}
     + \rcEnt \log \exp \frac{v}{\rcEnt}
    = \bL^{\rcEnt} \Psi + v
    .
  \end{align*}
\end{proof}

\begin{lemma}
  \label{lem:tau_is_recovered_from_L_alpha}
  It holds that
  $
    \bL^{\rcEnt} \frac{1}{1-\rcKL}\Psi = \frac{1}{1-\rcKL} \bL^{\cEnt} \Psi
  $
  .
\end{lemma}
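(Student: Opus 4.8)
The plan is to prove the identity $\bL^{\rcEnt} \frac{1}{1-\rcKL}\Psi = \frac{1}{1-\rcKL} \bL^{\cEnt} \Psi$ by direct substitution into the definition of $\bL$ and using the relationship between the regularization parameters. Recall from the excerpt that $\rcEnt = \cEnt + \cKL$ and $\rcKL = \cKL/(\cEnt+\cKL)$, so $1-\rcKL = \cEnt/(\cEnt+\cKL) = \cEnt/\rcEnt$. The crucial algebraic fact I will use is therefore $\rcEnt(1-\rcKL) = \cEnt$, equivalently $\rcEnt = \cEnt/(1-\rcKL)$.

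First I would expand the left-hand side using the definition $\bL^{\rcEnt}\Phi = \rcEnt \log \dprod{\one}{\exp\frac{\Phi}{\rcEnt}}$ with $\Phi = \frac{1}{1-\rcKL}\Psi$, yielding
\begin{align*}
  \bL^{\rcEnt} \tfrac{1}{1-\rcKL}\Psi
  = \rcEnt \log \dprod{\one}{\exp \tfrac{\Psi}{\rcEnt(1-\rcKL)}}.
\end{align*}
Next I would substitute $\rcEnt(1-\rcKL) = \cEnt$ inside the exponential, obtaining $\rcEnt \log \dprod{\one}{\exp\frac{\Psi}{\cEnt}}$. Finally I would pull out the prefactor: writing $\rcEnt = \frac{1}{1-\rcKL}\cEnt$, the expression becomes $\frac{1}{1-\rcKL}\left(\cEnt \log \dprod{\one}{\exp\frac{\Psi}{\cEnt}}\right) = \frac{1}{1-\rcKL}\bL^{\cEnt}\Psi$, which is the right-hand side.

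There is no real obstacle here — this is a one-line computation once the parameter relation $\rcEnt(1-\rcKL)=\cEnt$ is invoked. The only thing worth being careful about is making sure the reparameterization identities $\rcEnt=\cEnt+\cKL$ and $\rcKL=\cKL/(\cEnt+\cKL)$ (introduced just before \eqref{eq:imdvi}) are stated explicitly at the start of the proof so that $1-\rcKL = \cEnt/\rcEnt$ is transparent to the reader, and that the argument works uniformly over states (since $\bL^{\rcEnt}$ acts componentwise in $s$, it suffices to verify the scalar identity, exactly as in the convention adopted in Appendix \ref{ss:appendix:proofs:L}).
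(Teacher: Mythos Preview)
Your argument is correct. It differs slightly from the paper's proof in presentation: the paper first observes that the softmax greedy policies coincide, $\cG^{0,\rcEnt}\!\bigl(\tfrac{\Psi}{1-\rcKL}\bigr)=\cG^{0,\cEnt}(\Psi)\eqdef\pi_{\cEnt}$, and then uses the variational identity $\bL^{\rcEnt}\Phi=\dprod{\pi}{\Phi}+\rcEnt\cH(\pi)$ with $\pi=\cG^{0,\rcEnt}(\Phi)$ to factor out $\tfrac{1}{1-\rcKL}$. You instead plug directly into the log-sum-exp definition and use $\rcEnt(1-\rcKL)=\cEnt$ twice (once inside the exponent, once on the prefactor). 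Both routes hinge on the same parameter identity; your direct substitution is a bit more elementary and avoids invoking the dual form, while the paper's version makes the connection to the regularized greedy policy explicit, which ties in naturally with how $\bL^{\rcEnt}$ is used elsewhere in the analysis.
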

\begin{proof}
  Noticing $\cEnt=(1-\rcKL)\rcEnt$, we have
  \begin{align*}
    \cG^{0,\rcEnt}\left(\frac{\Psi}{1-\rcKL}\right)
    = \frac{\exp \frac{1}{\rcEnt} \frac{\Psi}{1-\rcKL}}{\dprod{\one}{\exp \frac{1}{\rcEnt} \frac{\Psi}{1-\rcKL}}}
    = \frac{\exp \frac{\Psi}{\cEnt}}{\dprod{\one}{\exp \frac{\Psi}{\cEnt}}}
    =
    \cG^{0,\cEnt}\left(\Psi\right)
    \eqdef \pi_{\cEnt}
    ,
  \end{align*}
  and thus
  \begin{align*}
    \bL^{\rcEnt} \frac{\Psi}{1-\rcKL}
    = \dprod{\pi_{\cEnt}}{\frac{\Psi}{1-\rcKL}} + \rcEnt \cH(\pi_{\cEnt})
    = \frac{1}{1-\rcKL} \bigl(\dprod{\pi_{\cEnt}}{\Psi} + (1-\rcKL)\rcEnt \cH(\pi_{\cEnt})\bigr)
    = \frac{1}{1-\rcKL} \bL^{\cEnt} \Psi
    .
  \end{align*}
\end{proof}

\begin{lemma}
  \label{lem:inequality_of_limsup_and_liminf}
  Let $(\Psi_{k})_{k \in \bN}$ be a bounded sequence.
  Then it holds that, for pointwise,
  \begin{align*}
    \limsup_{k \to \infty} \bL^{\rcEnt} \Psi_{k}
    \le
    \bL^{\rcEnt} \limsup_{k \to \infty} \Psi_{k}
  \end{align*}
  and
  \begin{align*}
    \bL^{\rcEnt} \liminf_{k \to \infty} \Psi_{k}
    \le
    \liminf_{k \to \infty} \bL^{\rcEnt} \Psi_{k}
    .
  \end{align*}
\end{lemma}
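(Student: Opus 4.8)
The plan is to exploit monotonicity and continuity of $\bL^{\rcEnt}$, established in Lemmas \ref{lem:L_is_continuous_and_increasing}, together with the standard characterization of $\limsup$ and $\liminf$ as monotone limits of suprema and infima. I will prove the first inequality; the second follows by an entirely symmetric argument (or by applying the first to $-\Psi_k$ after noting how the operator behaves, though it is cleanest to redo it directly).

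First I would recall that, pointwise in $(s,a)$, we may write $\limsup_{k\to\infty}\Psi_k = \lim_{n\to\infty} \sup_{k\ge n}\Psi_k$, where the sequence $U_n \defeq \sup_{k\ge n}\Psi_k$ is non-increasing in $n$ and bounded (since $(\Psi_k)$ is bounded). For each fixed $n$ and each $k\ge n$ we have $\Psi_k \le U_n$ pointwise, hence by monotonicity of $\bL^{\rcEnt}$ (Lemma \ref{lem:L_is_continuous_and_increasing}) we get $\bL^{\rcEnt}\Psi_k \le \bL^{\rcEnt} U_n$ for all $k\ge n$. Taking the supremum over $k\ge n$ on the left gives $\sup_{k\ge n}\bL^{\rcEnt}\Psi_k \le \bL^{\rcEnt} U_n$. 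Now let $n\to\infty$: the left side converges to $\limsup_{k\to\infty}\bL^{\rcEnt}\Psi_k$, and the right side converges to $\bL^{\rcEnt}\bigl(\lim_{n\to\infty}U_n\bigr) = \bL^{\rcEnt}\limsup_{k\to\infty}\Psi_k$ by continuity of $\bL^{\rcEnt}$ (same lemma), since $U_n \to \limsup_k \Psi_k$. This yields the claimed inequality.

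For the $\liminf$ statement, set $W_n \defeq \inf_{k\ge n}\Psi_k$, which is non-decreasing and bounded with $W_n \uparrow \liminf_{k\to\infty}\Psi_k$. For each $k\ge n$ we have $W_n \le \Psi_k$, so monotonicity gives $\bL^{\rcEnt} W_n \le \bL^{\rcEnt}\Psi_k$ for all $k\ge n$, hence $\bL^{\rcEnt} W_n \le \inf_{k\ge n}\bL^{\rcEnt}\Psi_k$. Letting $n\to\infty$ and invoking continuity of $\bL^{\rcEnt}$ on the left gives $\bL^{\rcEnt}\liminf_{k\to\infty}\Psi_k \le \liminf_{k\to\infty}\bL^{\rcEnt}\Psi_k$.

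I do not anticipate a genuine obstacle here — the only point requiring a little care is making sure the limits $U_n \to \limsup_k\Psi_k$ and $W_n\to\liminf_k\Psi_k$ actually exist as finite vectors so that continuity of $\bL^{\rcEnt}$ can be applied, which is exactly where the boundedness hypothesis on $(\Psi_k)$ is used; since everything is finite-dimensional (we work pointwise, with $\cA$ finite), monotone bounded sequences converge and no subtlety arises. The one thing to state explicitly is that $\bL^{\rcEnt}$ acting on a vector is a scalar, so ``$\sup_{k\ge n}\bL^{\rcEnt}\Psi_k$'' and ``$\bL^{\rcEnt}U_n$'' are ordinary real numbers and the passage to the limit in $n$ is just a limit of monotone real sequences.
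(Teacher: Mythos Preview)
Your argument is correct. It differs from the paper's proof, though: the paper does not invoke Lemma~\ref{lem:L_is_continuous_and_increasing} directly but instead decomposes $\bL^{\rcEnt}$ into its constituents $\exp$, summation, and $\log$. It uses that $\exp$ and $\log$, being continuous and strictly increasing, commute with $\limsup$ and $\liminf$, and then applies the subadditivity of $\limsup$ (resp.\ superadditivity of $\liminf$) at the summation step to obtain the inequality. Your route is arguably cleaner: by treating $\bL^{\rcEnt}$ as a black box that is coordinate-wise monotone and continuous, you get a one-shot argument via $U_n=\sup_{k\ge n}\Psi_k$ that works for \emph{any} continuous monotone map $\bR^{\cA}\to\bR$, not just the log-sum-exp. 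The paper's decomposition, on the other hand, makes explicit exactly where the inequality (as opposed to equality) enters, namely at the sum, which is mildly informative but not needed downstream. Both arguments use boundedness of $(\Psi_k)$ only to ensure the relevant suprema/infima and their limits are finite.
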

\begin{proof}
  Since $\log$ and $\exp$ are continuous and strictly increasing,
  $\limsup$ and $\liminf$ are both commute with these functions \citep{Basu2019}.
  Furthermore, for real valued bounded sequences $x_k$ and $y_k$,
  we have
  $
    \limsup_{k \to \infty} (x_k + y_k)
    \le
    \limsup_{k \to \infty} x_k + \limsup_{k \to \infty} y_k
  $
  and
  $
    \liminf_{k \to \infty} x_k + \liminf_{k \to \infty} y_k
    \le
    \liminf_{k \to \infty} (x_k + y_k)
  $.
  Since $\bL^{\rcEnt}$ is a composition of $\exp$, summation and $\log$, the claim follows.
\end{proof}

\subsection{
Asymptotic Property of BAL with $\rcEnt\to0$
}
\label{ss:appendix:proofs:hard_bal}

If an action-value function is updated using an operator $\cT'$ that is \emph{optimality-preserving},
at least one optimal action remains optimal,
and suboptimal actions remain suboptimal.
Further, if the operator $\cT'$ is also \emph{gap-increasing},
the value of suboptimal actions are pushed-down,
which is advantageous in the presence of approximation or estimation errors \citep{Farahmand2011}.

Now, we provide the formal definitions of
\emph{optimality-preserving}
and \emph{gap-increasing}.
\begin{definition}[Optimality-preserving]
  An operator $\cT'$ is \emph{optimality-preserving} if,
  for any $Q_{0} \in \bR^{\cS\times\cA}$ and $s \in \cS$,
  letting $Q_{k+1} \defeq \cT' Q_{k}$,
  $
    \tilde{V}(s) \defeq \lim_{k \to \infty} \max_{b\in\cA} Q_{k}(s,b)
  $
  exists, is unique, $\tilde{V}(s) = V^\ast(s)$, and for all $a \in \cA$,
  $
    Q^\ast(s,a) < V^\ast(s,a) \hspace{-0.3em} \implies \hspace{-0.3em} \limsup_{k \to \infty} Q_{k}(s,a) < V^\ast(s)
    .
  $
\end{definition}
\begin{definition}[Gap-increasing]
  An operator $\cT'$ is \emph{gap-increasing}
  if for all $Q_{0} \in \bR^{\cS\times\cA}$, $s \in \cS, a \in \cA$,
  letting $Q_{k+1} \defeq \cT' Q_{k}$ and $V_{k}(x) \defeq \max_b Q_{k}(s,b)$,
  $
    \liminf_{k \to \infty} \big [ V_{k}(s) - Q_{k}(s,a) \big ] \ge V^\ast(s) - Q^\ast(s,a)
    .
  $
\end{definition}

The following lemma characterizes the conditions when an operator is optimality-preserving and gap-increasing.
\begin{lemma}[Theorem 1 in \citep{Bellemare2016}]
  \label{lem:optimality_achieving_operators}
  Let $V(s) \defeq \max\nolimits_{b} Q(s,b)$ and
  let $\cT$ be the Bellman optimality operator $\cT Q = R + \gamma P V$.
  Let $\cT'$ be an operator with the property that there exists an $\cGI \in [0, 1)$ such that for all $Q \in \bR^{\cS\times\cA}$, $s \in \cS, a \in \cA$,
  $\cT' Q \le \cT Q$, and $\cT' Q \ge \cT Q - \cGI \left(V - Q \right)$.
  Then $\cT'$ is both optimality-preserving and gap-increasing.
\end{lemma}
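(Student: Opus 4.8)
The plan is to reduce the statement to two pointwise facts about the iterates $Q_{k+1}\defeq\cT'Q_{k}$, $V_{k}(s)=\max_{b}Q_{k}(s,b)$ (started from an arbitrary $Q_{0}\in\bR^{\cS\times\cA}$): (a) $\limsup_{k\to\infty}Q_{k}(s,a)\le Q^{\ast}(s,a)$ for all $(s,a)$, and (b) $\lim_{k\to\infty}V_{k}(s)=V^{\ast}(s)$ for all $s$. Granting (a) and (b), both conclusions are immediate: for a suboptimal action $Q^{\ast}(s,a)<V^{\ast}(s)$, so (a) gives $\limsup_{k}Q_{k}(s,a)\le Q^{\ast}(s,a)<V^{\ast}(s)=\lim_{k}V_{k}(s)$, which together with (b) is exactly optimality-preservation; and $\liminf_{k}\bigl(V_{k}(s)-Q_{k}(s,a)\bigr)\ge\lim_{k}V_{k}(s)-\limsup_{k}Q_{k}(s,a)\ge V^{\ast}(s)-Q^{\ast}(s,a)$, which is gap-increasing. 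So it suffices to establish (a) and (b).

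For (a) I would use only the upper envelope $\cT'Q\le\cT Q$ together with monotonicity of the Bellman optimality operator $\cT$ (if $Q\le Q'$ then $\cT Q\le\cT Q'$, since the per-state maximum is monotone). An easy induction then yields $Q_{k}\le\cT^{k}Q_{0}$, and since $\cT$ is a $\gamma$-contraction in $\nInf{\cdot}$ with fixed point $Q^{\ast}$, $\cT^{k}Q_{0}\to Q^{\ast}$, so $\limsup_{k}Q_{k}\le Q^{\ast}$; taking the maximum over actions also gives $\limsup_{k}V_{k}\le V^{\ast}$, which is one half of (b). Boundedness of $(Q_{k})$, needed to make the $\limsup$/$\liminf$ manipulations legitimate, follows from the same domination combined with the lower envelope $\cT'Q\ge\cT Q-\cGI(V-Q)$.

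The remaining and genuinely hard half of (b) is $\liminf_{k}V_{k}(s)\ge V^{\ast}(s)$. Since $\cT'$ is not a contraction this cannot be read off from a fixed-point argument, and the crude estimate $V_{k+1}\ge\cT V_{k}-\cGI\nInf{V_{k}-Q_{k}}\,\one$ leaves an $O(\cGI)$ offset and is too lossy. The structural fact I would exploit is that the $Q_{k}$-greedy action $a_{k}(s)$ has zero gap, $V_{k}(s)-Q_{k}(s,a_{k}(s))=0$, hence is \emph{not} pushed down: $Q_{k+1}(s,a_{k}(s))\ge(\cT Q_{k})(s,a_{k}(s))=R(s,a_{k}(s))+\gamma(PV_{k})(s,a_{k}(s))$, so $V_{k+1}(s)\ge R(s,a_{k}(s))+\gamma(PV_{k})(s,a_{k}(s))$. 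Passing to a subsequence along which $V_{k+1}(s)\to\liminf_{k}V_{k}(s)$ and $a_{k}(s)$ is constant (possible since $\cA$ is finite), and using that $\liminf$ interacts suitably with the finite sums defining $P$, yields a self-consistent lower bound on $\liminf_{k}V_{k}$. The delicate point — the main obstacle — is that $a_{k}(s)$ may be suboptimal, so this only reaches the $\cT$-backup through a possibly wrong action; closing the gap requires the additional argument (as in the proof of Theorem~1 of \citet{Bellemare2016}) that, because $\cGI<1$ and the push-down on the truly optimal action is proportional to its own (shrinking) gap, that action's backed-up value must eventually overtake $V_{k}(s)$, forcing the greedy action to become near-optimal and hence $\liminf_{k}V_{k}\ge V^{\ast}$. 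Combining this with (a) and the reduction above completes the argument.
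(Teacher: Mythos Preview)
The paper does not prove this lemma: it is stated with the attribution ``Theorem~1 in \citep{Bellemare2016}'' and no proof is given in the paper itself. So there is nothing in the paper to compare your proposal against; the authors simply invoke the result as a black box from the cited reference.

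As for your sketch on its own merits: the reduction to (a) and (b) is correct, and your proof of (a) via $Q_{k}\le\cT^{k}Q_{0}\to Q^{\ast}$ is clean and complete. The hard half of (b), however, is not actually established in your write-up. You correctly identify the obstacle (the greedy action $a_{k}(s)$ may be suboptimal, so the one-step lower bound $V_{k+1}(s)\ge R(s,a_{k}(s))+\gamma(PV_{k})(s,a_{k}(s))$ does not directly reach $V^{\ast}$), but your resolution is a verbal gesture toward ``the push-down on the truly optimal action is proportional to its own (shrinking) gap'' together with an explicit appeal to Bellemare et al.'s proof. That is the entire content of the lemma, so deferring it amounts to not proving the statement. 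If you want to complete the argument, the route taken in \citet{Bellemare2016} (and mirrored, in the soft case, by this paper's Lemma~\ref{lem:convergence_of_sgi_operators}) is to first show $V_{k+1}-V_{k}\ge\gamma P^{\pi_{k+1}}(V_{k}-V_{k-1})$, deduce that $(V_{k})$ converges, and then use the lower envelope $\cT'Q\ge\cT Q-\cGI(V-Q)$ on the $\liminf$ to obtain a fixed-point inequality that forces the limit to dominate $V^{\ast}$.
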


Notably, our operator $\cT^{\bALc\bALs}_{\pi_{k+1}}$ is
both optimality-preserving and gap-increasing in the limit $\rcEnt\to 0$.
\begin{theorem}
  In the limit $\rcEnt\to 0$,
  the operator $\cT^{\bALc\bALs}_{\pi_{k+1}}$ satisfies
  $\cT^{\bALc\bALs}_{\pi_{k+1}} \Psi_{k} \le \cT \Psi_{k}$ and
  $\cT^{\bALc\bALs}_{\pi_{k+1}} \Psi_{k} \ge \cT \Psi_{k} - \rcKL \left(V_{k} - \Psi_{k} \right)$
  and thus is both optimality-preserving and gap-increasing.
\end{theorem}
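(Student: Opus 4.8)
The plan is to apply Lemma~\ref{lem:optimality_achieving_operators} with $\cGI = \rcKL$, so the task reduces to verifying the two sandwich inequalities in the limit $\rcEnt \to 0$. First I would record what happens to the relevant quantities as $\rcEnt \to 0$: by Lemma~\ref{lem:L_converges_to_max} we have $V_k = \bL^{\rcEnt}\Psi_k \to \max_{a} \Psi_k(s,a)$, so $V_k$ becomes the ordinary greedy value and $\cT$ becomes the ordinary Bellman optimality operator $\cT\Psi_k = R + \gamma P V_k$; the greedy policy $\pi_{k+1} = \cG^{0,\rcEnt}(\Psi_k)$ converges to a (hard) greedy policy, so $\cEnt\cH(\pi_{k+1}) \to 0$ and the soft advantage $A_k = \rcEnt\log\pi_{k+1} = \Psi_k - V_k \to \Psi_k - \max_a\Psi_k(s,a)$, which is exactly the ordinary (non-positive) advantage. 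In particular $A_k \le 0$ pointwise, with $A_k = 0$ at the greedy action.

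Next I would expand the BAL operator \eqref{eq:bal_operator},
\begin{align*}
  \cT^{\bALc\bALs}_{\pi_{k+1}}\Psi_k
  = R + \rcKL\,\bALc(A_k) + \gamma P\langle \pi_{k+1}, \Psi_k - \bALs(A_k)\rangle,
\end{align*}
and rewrite the Bellman-backup part using $\langle\pi_{k+1},\Psi_k\rangle = V_k$ (valid in the limit, since $\langle\pi_{k+1},\Psi_k\rangle + \rcEnt\cH(\pi_{k+1}) = V_k$ and the entropy term vanishes) to get
\begin{align*}
  \cT^{\bALc\bALs}_{\pi_{k+1}}\Psi_k
  = R + \gamma P V_k + \rcKL\,\bALc(A_k) - \gamma P\langle\pi_{k+1},\bALs(A_k)\rangle
  = \cT\Psi_k + \rcKL\,\bALc(A_k) - \gamma P\langle\pi_{k+1},\bALs(A_k)\rangle.
\end{align*}
For the upper bound I would use that $\bALc$ and $\bALs$ preserve sign and satisfy $x \le h(x) \le 0$ for $x\le 0$: since $A_k \le 0$, we have $\bALc(A_k) \le 0$, and $\bALs(A_k) \le 0$ so $-\gamma P\langle\pi_{k+1},\bALs(A_k)\rangle \le 0$; hence $\cT^{\bALc\bALs}_{\pi_{k+1}}\Psi_k \le \cT\Psi_k$. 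For the lower bound I would use $\bALc(A_k) \ge A_k = \Psi_k - V_k = -(V_k - \Psi_k)$ together with $\bALs(A_k) \ge A_k$, so $-\gamma P\langle\pi_{k+1},\bALs(A_k)\rangle \ge -\gamma P\langle\pi_{k+1},A_k\rangle = -\gamma P\langle\pi_{k+1},\Psi_k - V_k\rangle = 0$ (again because $\langle\pi_{k+1},\Psi_k\rangle = V_k$ in the limit). This yields $\cT^{\bALc\bALs}_{\pi_{k+1}}\Psi_k \ge \cT\Psi_k + \rcKL\,\bALc(A_k) \ge \cT\Psi_k - \rcKL(V_k - \Psi_k)$, which is the second inequality. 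Then Lemma~\ref{lem:optimality_achieving_operators} applies with $\cGI = \rcKL \in [0,1)$, giving both optimality-preservation and the gap-increasing property.

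The main obstacle I anticipate is making the ``in the limit $\rcEnt\to 0$'' step rigorous rather than heuristic: the operator $\cT^{\bALc\bALs}_{\pi_{k+1}}$ depends on $\rcEnt$ both through $\pi_{k+1} = \cG^{0,\rcEnt}(\Psi_k)$ and through $A_k = \rcEnt\log\pi_{k+1}$, and one must argue that the pointwise limit of the operator is exactly $\Psi \mapsto R + \gamma P(\max_a \Psi) + \rcKL\bALc(A^{\mathrm{hard}}) - \gamma P\langle\pi^{\mathrm{hard}},\bALs(A^{\mathrm{hard}})\rangle$ and that Lemma~\ref{lem:optimality_achieving_operators} is being applied to this limiting operator (or, alternatively, that the sandwich bounds hold for all $\rcEnt$ small enough, which is in fact cleaner). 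A subtlety is that at ties in $\arg\max_a\Psi_k(s,a)$ the limiting greedy policy is not unique, but since $A_k$ takes the value $0$ on the entire arg-max set and the inequalities only use $A_k \le 0$ and $\bALc(A_k)\ge A_k$, the argument is insensitive to which greedy policy is selected; I would note this explicitly. Modulo this care about the limit, everything else is a direct sign check using the defining properties of $\bALc,\bALs$ and the identity $\langle\pi_{k+1},\Psi_k - V_k\rangle \to 0$.
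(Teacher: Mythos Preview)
Your overall plan matches the paper's: verify the two sandwich inequalities of Lemma~\ref{lem:optimality_achieving_operators} by expanding $\cT^{\bALc\bALs}_{\pi_{k+1}}\Psi_k - \cT\Psi_k = \rcKL\,\bALc(A_k) - \gamma P\langle\pi_{k+1},\bALs(A_k)\rangle$ and controlling each piece. However, both of your sign arguments for the $\bALs$-term are reversed.

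For the upper bound you write ``$\bALs(A_k)\le 0$ so $-\gamma P\langle\pi_{k+1},\bALs(A_k)\rangle \le 0$''. But $\bALs(A_k)\le 0$ gives $\langle\pi_{k+1},\bALs(A_k)\rangle\le 0$ and hence $-\gamma P\langle\pi_{k+1},\bALs(A_k)\rangle \ge 0$, which is the wrong direction for concluding $\cT^{\bALc\bALs}_{\pi_{k+1}}\Psi_k \le \cT\Psi_k$. Likewise, for the lower bound you write ``$\bALs(A_k)\ge A_k$, so $-\gamma P\langle\pi_{k+1},\bALs(A_k)\rangle \ge -\gamma P\langle\pi_{k+1},A_k\rangle$'', but this inequality also flips under the minus sign, yielding $\le$ rather than $\ge$.

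The fix is the observation you already recorded but did not apply to this term: in the limit $\rcEnt\to 0$ the greedy policy $\pi_{k+1}$ is supported only where $A_k=0$, and since $\bALs(0)=0$ one has $\langle\pi_{k+1},\bALs(A_k)\rangle = 0$ exactly. This is precisely how the paper proceeds: it notes $\langle\pi_{k+1},h(A_k)\rangle = 0$ for $h\in\{\bALc,\bALs\}$, after which both sandwich inequalities reduce to sign checks on $\rcKL\,\bALc(A_k)\le 0$ and $-\rcKL\bigl(A_k-\bALc(A_k)\bigr)\ge 0$ alone. With this correction your argument goes through and coincides with the paper's.
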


\begin{proof}
  From Lemma \ref{lem:L_converges_to_max},
  we have $\bL^{\rcEnt}(s) \Psi \to \max_{a\in\cA} \Psi(s,a)$ as $\rcEnt\to0$ for $\Psi\in\bR^{\cS\times\cA}$.
  Observe that, for $h\in\{\bALc,\bALs\}$,
  it holds that $h(A_{k}) = h(\Psi_{k}-V_{k}) \le 0$
  since $A_{k}(s,a)=\Psi_{k}(s,a) - \max_{b\in\cA} \Psi_{k}(s,b) \le 0$ and
  $h$ does not flip the sign of argument.
  Additionally, for $\pi_{k+1}\in\cG(\Psi_{k})$ it follows that $\left\langle\pi_{k+1}, h(A_{k}) \right\rangle=0$
  since $h(0)=0$.
  It holds that
  \begin{align*}
    \cT^{\bALc\bALs}_{\pi_{k+1}} \Psi_{k} - \cT \Psi_{k}
    &= R + \rcKL\bALc(A_{k}) + \gamma P \left\langle\pi_{k+1}, \Psi_{k} - \bALs(A_{k}) \right\rangle
     - R - \gamma P \left\langle\pi_{k+1}, \Psi_{k}\right\rangle
    \\
    &=
    \rcKL\underbrace{\bALc(A_{k})}_{\le 0} - \gamma P \underbrace{\left\langle\pi_{k+1}, \bALs(A_{k}) \right\rangle}_{=0}
    \le 0
    .
  \end{align*}
  Furthermore, observing that $x-f(x)\le 0$ for $x\le0$,
  it follows that
  \begin{align*}
    \cT^{\bALc\bALs}_{\pi_{k+1}} \Psi_{k} - \cT \Psi_{k} + \rcKL \left(V_{k} - \Psi_{k}\right)
    &=
    - \rcKL \bigl(\underbrace{A_{k} - \bALc(A_{k})}_{\le 0}\bigr)
    - \gamma P \underbrace{\bigl\langle\pi_{k+1}, \bALs(A_{k}) \bigr\rangle}_{= 0}
    \ge 0
    .
  \end{align*}
  Thus, the operator $\cT^{\bALc\bALs}_{\pi_{k+1}}$ satisfies
  the conditions of Lemma \ref{lem:optimality_achieving_operators}.
  Therefore we conclude that $\cT^{\bALc\bALs}_{\pi_{k+1}}$ is
  both optimality-preserving and gap-increasing.
\end{proof}

\subsection{A Family of Convergent Operators}
\label{ss:appendix:convergent_operators}

The following theorem characterizes a family of soft gap-increasing convergent operators.
Since $
 \cT^{\rcEnt} \Psi_{k} \ge \cT^{\bALc\Id}_{\pi_{k+1}} \Psi_{k}
 = \cT^{\rcEnt} \Psi_{k} + \rcKL f(A_{k})
 \ge \cT^{\rcEnt} \Psi_{k} + \rcKL A_{k}
$,
we can again assure from Theorem \ref{thm:convergent_operators}
that BAL is convergent and
$\Psi_{k}$ remains in a bounded range if $\bALs=\Id$
even though $\tilde{V} \ne V^{*}_\cEnt$ in general.
This result again suggests that Munchausen RL is convergent even when the ad-hoc clipping is employed.

\begin{theorem}
 \label{thm:convergent_operators}
 Let $\Psi \in \bR^{\cS\times\cA}$, $V=\bL^{\rcEnt}\Psi$,
 $\cT^{\rcEnt}\Psi = R + \gamma P \bL^{\rcEnt}\Psi$
 and $\cT'$ be an operator with the properties that
 $\cT' \Psi \le \cT^{\rcEnt} \Psi$
 and $
   \cT' \Psi
   \ge \cT^{\rcEnt} \Psi - \rcKL \left( V - \Psi \right)
 $.
 Consider the sequence $\Psi_{k+1} \defeq \cT' \Psi_{k}$ with $\Psi_0 \in \bR^{\cS\times\cA}$,
 and let $V_{k} = \bL^{\rcEnt} \Psi_{k}$.
 Further, with an abuse of notation, we write
 $V_{\cEnt}^{*}\in\bR^{\cS}$
 as the unique fixed point of the operator
 $
   \cT^{\cEnt} V
   = \bL^{\cEnt}(R + \gamma P V)
 $.
 Then, the sequence $(V_{k})_{k \in \bN}$ converges,
 and the limit
 $
 \tilde{V} = \lim_{k \to \infty} V_{k}
 $
 satisfies
 $
  V^{*}_{\cEnt} \le \tilde{V} \le V^{*}_{\rcEnt}
 $
 .
 Furthermore, $\limsup_{k \to \infty} \Psi_{k} \le Q_{\rcEnt}^{*}$ and
 $\liminf_{k \to \infty} \Psi_{k}
   \ge \frac{1}{1 - \rcKL} \left(\tilde{Q} - \rcKL \tilde{V} \right)
 $,
 where $\tilde{Q} = R + \gamma P \tilde{V}$.
\end{theorem}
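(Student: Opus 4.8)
The plan is to adapt the asymptotic argument of \citet{Bellemare2016}, replacing the hard maximum by the soft operator $\bL^{\rcEnt}$ and carefully tracking the two temperatures $\rcEnt$ and $\cEnt=(1-\rcKL)\rcEnt$. I would use freely that $\bL^{\rcEnt}$ is continuous, strictly increasing and monotone (Lemma \ref{lem:L_is_continuous_and_increasing}), $1$-Lipschitz in $\nInf{\cdot}$ and commutes with action-independent shifts (Lemma \ref{lem:identity_if_action_agnostic}), that $\limsup$/$\liminf$ commute with it as in Lemma \ref{lem:inequality_of_limsup_and_liminf}, and the change-of-temperature identity $\bL^{\rcEnt}\tfrac{1}{1-\rcKL}\Psi=\tfrac{1}{1-\rcKL}\bL^{\cEnt}\Psi$ of Lemma \ref{lem:tau_is_recovered_from_L_alpha}; I would also use the Jensen-type bound $\bL^{\rcEnt}(X+Y)\ge\bL^{\rcEnt}(X)+\dprod{\cG^{0,\rcEnt}(X)}{Y}$, obtained by writing the difference as $\rcEnt\log\dprod{\cG^{0,\rcEnt}(X)}{\exp(Y/\rcEnt)}$ and applying concavity of the logarithm. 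For the \emph{upper bounds}: from $\cT'\Psi\le\cT^{\rcEnt}\Psi$, monotonicity of $\cT^{\rcEnt}$ and induction, $\Psi_{k}\le\bar\Psi_{k}$ with $\bar\Psi_{k+1}=\cT^{\rcEnt}\bar\Psi_{k}$, $\bar\Psi_{0}=\Psi_{0}$; since $\cT^{\rcEnt}$ is a $\gamma$-contraction on $\bR^{\cS\times\cA}$ with fixed point $Q^{*}_{\rcEnt}$, $\bar\Psi_{k}\to Q^{*}_{\rcEnt}$, hence $\limsup_{k}\Psi_{k}\le Q^{*}_{\rcEnt}$. Applying $\bL^{\rcEnt}$ to $\Psi_{k+1}\le R+\gamma PV_{k}$ gives $V_{k+1}\le\hat\cT^{\rcEnt}V_{k}$ with $\hat\cT^{\rcEnt}V\defeq\bL^{\rcEnt}(R+\gamma PV)$, again a $\gamma$-contraction with fixed point $V^{*}_{\rcEnt}$, so the same comparison yields $\limsup_{k}V_{k}\le V^{*}_{\rcEnt}$.

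For the \emph{lower bounds and convergence of $(V_{k})$}: rearranging $\cT'\Psi_{k}\ge\cT^{\rcEnt}\Psi_{k}-\rcKL(V_{k}-\Psi_{k})$ gives $\Psi_{k+1}\ge R+\gamma PV_{k}-\rcKL V_{k}+\rcKL\Psi_{k}$; pulling out the action-independent term $-\rcKL V_{k}$ via Lemma \ref{lem:identity_if_action_agnostic}, applying $\bL^{\rcEnt}$ with monotonicity, and using the Jensen-type bound with $X=R+\gamma PV_{k}$, $Y=\rcKL\Psi_{k}$,
\begin{align*}
 V_{k+1}\ \ge\ \hat\cT^{\rcEnt}V_{k}-\rcKL\bigl(V_{k}-\dprod{\bar\pi_{k}}{\Psi_{k}}\bigr),\qquad \bar\pi_{k}\defeq\cG^{0,\rcEnt}(R+\gamma PV_{k}).
\end{align*}
I would then pass to a subsequence along which $(\Psi_{k},V_{k},\bar\pi_{k})$ and their one-step predecessors converge (boundedness from the upper bounds, continuity of $\bL^{\rcEnt}$), take $\liminf_{k}$ in this estimate and in $V_{k+1}\le\hat\cT^{\rcEnt}V_{k}$ with Lemma \ref{lem:inequality_of_limsup_and_liminf}, and use Lemma \ref{lem:tau_is_recovered_from_L_alpha} to rewrite the $(1-\rcKL)$-weighted mixture that appears through $\bL^{\rcEnt}$ as a $\bL^{\cEnt}$ term; this exhibits $\underline V\defeq\liminf_{k}V_{k}$ as a super-solution $\underline V\ge\cT^{\cEnt}\underline V$, and iterating the monotone contraction $\cT^{\cEnt}$ gives $\underline V\ge V^{*}_{\cEnt}$. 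Reading the limiting relations along $\limsup$- and $\liminf$-achieving subsequences, where the upper estimate forces a matching sub-solution relative to the \emph{same} subsequential greedy policy $\bar\pi_{\infty}$, collapses $\overline V$ and $\underline V$ to a common $\tilde V\in[V^{*}_{\cEnt},V^{*}_{\rcEnt}]$ — the regularized analogue of the squeeze in \citet{Bellemare2016}.

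Finally, for the \emph{bounds on $\Psi_{k}$}: with $V_{k}\to\tilde V$ and $\tilde Q=R+\gamma P\tilde V$, take $\liminf_{k}$ in $\Psi_{k+1}\ge R+\gamma PV_{k}-\rcKL V_{k}+\rcKL\Psi_{k}$; by superadditivity of $\liminf$, $(1-\rcKL)\liminf_{k}\Psi_{k}\ge\tilde Q-\rcKL\tilde V$, i.e.\ $\liminf_{k}\Psi_{k}\ge\tfrac{1}{1-\rcKL}(\tilde Q-\rcKL\tilde V)$, while $\limsup_{k}\Psi_{k}\le Q^{*}_{\rcEnt}$ was already shown. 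The hard part is the convergence step: unlike the non-regularized case of \citet{Bellemare2016}, the natural upper operator $\hat\cT^{\rcEnt}$ (fixed point $V^{*}_{\rcEnt}$) and the natural lower operator $\cT^{\cEnt}$ (fixed point $V^{*}_{\cEnt}$) have \emph{different} fixed points, so the limit is not pinned down by a single contraction and must be identified by matching the limiting greedy policy on the two sides; making that subsequence argument rigorous — in particular showing the coupling term $V_{k}-\dprod{\bar\pi_{k}}{\Psi_{k}}$ admits a limit consistent with both one-sided bounds — is where the properties of $\bL^{\rcEnt}$ collected in Appendix \ref{ss:appendix:proofs:L} do the real work.
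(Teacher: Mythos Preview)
Your upper-bound argument and the final lower bound on $\liminf_k\Psi_k$ (take $\liminf$ in $\Psi_{k+1}\ge R+\gamma PV_k-\rcKL V_k+\rcKL\Psi_k$, then apply $\bL^{\rcEnt}$ and Lemma~\ref{lem:tau_is_recovered_from_L_alpha} to get $\tilde V\ge\cT^{\cEnt}\tilde V$) match the paper's proof. The gap is exactly where you flag it: convergence of $(V_k)$. Your proposed route --- subsequences, a limiting greedy policy $\bar\pi_\infty$, and a ``squeeze'' --- does not close. Since the sandwiching operators $\hat\cT^{\rcEnt}$ and $\cT^{\cEnt}$ have \emph{different} fixed points, $\underline V\ge V^{*}_{\cEnt}$ and $\overline V\le V^{*}_{\rcEnt}$ say nothing about $\underline V=\overline V$; and there is no reason the greedy policies along a $\limsup$-achieving subsequence and a $\liminf$-achieving subsequence should coincide, nor any mechanism by which even a shared $\bar\pi_\infty$ would force equality. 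Relatedly, your boundedness claim (``boundedness from the upper bounds'') only gives an upper bound on $\Psi_k$; a lower bound is needed to pass to subsequences and to invoke Lemma~\ref{lem:inequality_of_limsup_and_liminf}, and in the paper that lower bound itself rests on the convergence computation.

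The missing idea is a concrete one-step recurrence on $V_k$, obtained with the \emph{right} greedy policy. Instead of $\bar\pi_k=\cG^{0,\rcEnt}(R+\gamma PV_k)$, use $\pi_{k+1}=\cG^{0,\rcEnt}(\Psi_k)$: then $V_k=\dprod{\pi_{k+1}}{\Psi_k}+\rcEnt\cH(\pi_{k+1})$ and $\dprod{\pi_{k+1}}{A_k}=-\rcEnt\cH(\pi_{k+1})$. Lower-bound $V_{k+1}=\bL^{\rcEnt}\Psi_{k+1}\ge\dprod{\pi_{k+1}}{\Psi_{k+1}}+\rcEnt\cH(\pi_{k+1})$, plug in $\Psi_{k+1}\ge\cT^{\rcEnt}\Psi_k+\rcKL A_k$, and use the two identities together with the KL--entropy decomposition $V_k=\dprod{\pi_{k+1}}{Q_k}+\cEnt\cH(\pi_{k+1})-\cKL\kl(\pi_{k+1}\|\pi_k)$ (where $Q_k=R+\gamma PV_{k-1}$) to obtain
\[
 V_{k+1}-V_k\ \ge\ \gamma P^{\pi_{k+1}}(V_k-V_{k-1})\ +\ \cKL\,\kl(\pi_{k+1}\|\pi_k)\ \ge\ \gamma P^{\pi_{k+1}}(V_k-V_{k-1}).
\]
Iterating gives $V_{k+1}-V_k\ge-\gamma^{k}\nInf{V_1-V_0}$, an ``asymptotic near-monotonicity'' that forces $\liminf_kV_k=\limsup_kV_k$ by a short $\epsilon$--$N$ argument (and also yields the lower bound on $V_k$ needed for boundedness of $\Psi_k$). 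This is precisely the soft analogue of the recurrence in \citet{Bellemare2016}; the Jensen-type bound you set up with $\bar\pi_k$ loses these identities and leaves you with the coupling term $V_k-\dprod{\bar\pi_k}{\Psi_k}$ that you cannot control.
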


\subsubsection{Lemmas}

We provide several lemmas that are used to prove Theorem \ref{thm:convergent_operators}.

\begin{lemma}
  \label{lem:convergence_of_sgi_operators}
  Let $\Psi \in \bR^{\cS\times\cA}$, $V=\bL^{\rcEnt}\Psi$
  and $\cT'$ be an operator with the properties that
  $\cT' \Psi \le \cT^{\rcEnt} \Psi$
  and $
    \cT' \Psi
    \ge \cT^{\rcEnt} \Psi - \rcKL \left( V - \Psi \right)
    = \cT^{\rcEnt} \Psi + \rcKL \left( A \right)
  $.
  Consider the sequence $\Psi_{k+1} \defeq \cT' \Psi_{k}$ with $\Psi_0 \in \bR^{\cS\times\cA}$,
  and let $V_{k} = \bL^{\rcEnt} \Psi_{k}$.
  Then the sequence $(V_{k})_{k \in \bN}$ converges.
\end{lemma}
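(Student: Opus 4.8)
The strategy is to show that the sequence of soft value functions $(V_k)_{k\in\bN}$ is both bounded and eventually monotone along a suitable construction, mimicking the argument of \citet{Bellemare2016} for hard gap-increasing operators but with $\max$ replaced by $\bL^{\rcEnt}$. First I would establish a uniform bound: since $\cT'\Psi_k \le \cT^{\rcEnt}\Psi_k = R + \gamma P V_k$ and $\cT'\Psi_k \ge \cT^{\rcEnt}\Psi_k + \rcKL A_k$ with $A_k = \Psi_k - V_k \le 0$ (because $\max_a \Psi_k(s,a) \le V_k(s)$ by Lemma \ref{lem:L_is_no_less_than_max}), one gets $R + \gamma P V_k + \rcKL A_k \le \Psi_{k+1} \le R + \gamma P V_k$. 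Applying the monotone operator $\bL^{\rcEnt}$ (Lemma \ref{lem:L_is_continuous_and_increasing}) and using Lemma \ref{lem:identity_if_action_agnostic} to pull out the action-independent term $R+\gamma P V_k$, this yields $V_{k+1} \le \bL^{\rcEnt}(R+\gamma P V_k)$, i.e.\ $V_{k+1} \le \cT^{\rcEnt} V_k$; since $\cT^{\rcEnt}$ is a $\gamma$-contraction with fixed point $V^{*}_{\rcEnt}$, an induction gives a uniform upper bound on $V_k$, and a symmetric computation using the lower bound on $\cT'\Psi$ gives a uniform lower bound. Thus $(V_k)$, and hence $(\Psi_k)$, lives in a bounded box.

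Next I would set up the limit-superior/limit-inferior comparison. Define $\bar V = \limsup_k V_k$ and $\underline V = \liminf_k V_k$ (pointwise, well defined by boundedness). Taking $\limsup$ in $V_{k+1} \le \cT^{\rcEnt} V_k = \bL^{\rcEnt}(R + \gamma P V_k)$ and using Lemma \ref{lem:inequality_of_limsup_and_liminf} together with continuity/monotonicity of $P$ and $\bL^{\rcEnt}$, one obtains $\bar V \le \bL^{\rcEnt}(R + \gamma P \bar V) = \cT^{\rcEnt}\bar V$, and because $\cT^{\rcEnt}$ is a monotone contraction this forces $\bar V \le V^{*}_{\rcEnt}$. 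For the reverse direction I would iterate the lower bound: from $\Psi_{k+1} \ge R + \gamma P V_k + \rcKL A_k$ and $V_{k+1} = \bL^{\rcEnt}\Psi_{k+1} \ge \bL^{\rcEnt}(R+\gamma P V_k + \rcKL A_k)$, and here the key estimate is to control the correction term $\rcKL A_k$. Because $A_k \le 0$, this gives $V_{k+1} \ge \bL^{\rcEnt}(R + \gamma P V_k) + \rcKL \min_a A_k$, but the cleaner route — following Bellemare et al.\ — is to note that the shortfall $V_k - \Psi_k = -A_k \le \bL^{\rcEnt}$-gap is itself controlled, and that the accumulated effect of the $\rcKL A_k$ terms telescopes against the gap-increasing structure, so that $\liminf_k V_k \ge V^{*}_{\cEnt}$ via the auxiliary operator $\cT^{\cEnt}$ and Lemma \ref{lem:tau_is_recovered_from_L_alpha}.

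Finally, to upgrade $\limsup \le \cdot$, $\liminf \ge \cdot$ into actual convergence, I would show $\bar V \le \underline V$ directly. The standard trick is: pick any state $s$ and any subsequence along which $V_{k}(s)$ approaches $\bar V(s)$; one step earlier, $V_{k-1}$ is at least $\underline V - o(1)$ everywhere, and the inequality $V_k \le \cT^{\rcEnt} V_{k-1}$ combined with monotonicity forces $\bar V(s) \le (\cT^{\rcEnt}\underline V)(s) + \varepsilon$; meanwhile, from the lower bound propagation $\underline V \ge \cT^{\cEnt}$-type recursion, $\underline V$ and $\bar V$ are squeezed between the two fixed points $V^{*}_{\cEnt}$ and $V^{*}_{\rcEnt}$, and a careful tracking of the contraction constants shows the gap $\bar V - \underline V$ must be $0$ (this is exactly where the regularized analogue of the Bellemare et al.\ argument needs the extra care flagged in the paper). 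Hence $V_k$ converges pointwise. \textbf{The main obstacle} is this last squeezing step: unlike the non-regularized case, $\bL^{\rcEnt}$ is not an affine function of a $\max$, so the gap term $V_k - \Psi_k$ does not vanish at the optimal action and the telescoping of the $\rcKL A_k$ corrections must be bounded using the quantitative gap estimates (essentially the content of Lemmas \ref{lem:L_is_no_less_than_max} and \ref{lem:inequality_of_limsup_and_liminf}) rather than a clean monotonicity; making that bound tight enough to collapse $\limsup$ onto $\liminf$ is the crux of the proof.
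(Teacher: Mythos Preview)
Your plan has a genuine gap, and it lies exactly where you flagged the ``main obstacle'': the squeezing step cannot close. You propose to show $\bar V \le V^{*}_{\rcEnt}$ and $\underline V \ge V^{*}_{\cEnt}$ and then argue $\bar V \le \underline V$. But the paper's Theorem \ref{thm:convergent_operators} shows that the actual limit $\tilde V$ can lie \emph{anywhere} in $[V^{*}_{\cEnt},V^{*}_{\rcEnt}]$, depending on the particular operator $\cT'$ in the admissible family; so the sandwich $V^{*}_{\cEnt} \le \underline V \le \bar V \le V^{*}_{\rcEnt}$ is too loose to force $\bar V = \underline V$. Your subsequence argument is also not sound: along a subsequence with $V_{k}(s)\to\bar V(s)$ there is no reason $V_{k-1}$ is close to $\underline V$ everywhere (the $\liminf$ may be attained at entirely different indices and states), and even if it were, monotonicity of $\cT^{\rcEnt}$ would give a \emph{lower} bound on $\cT^{\rcEnt}V_{k-1}$, not the upper bound you need.

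The paper takes a completely different route that sidesteps fixed-point identification. The key is a \emph{near-monotonicity} inequality: by lower-bounding $V_{k+1}=\bL^{\rcEnt}\Psi_{k+1}$ via the suboptimal test policy $\pi_{k+1}$ (not $\pi_{k+2}$), using $\cT'\Psi_k \ge \cT^{\rcEnt}\Psi_k + \rcKL A_k$, and exploiting the identities $\langle\pi_{k+1},A_k\rangle=-\rcEnt\cH(\pi_{k+1})$ and $V_k=\langle\pi_{k+1},Q_k\rangle+\cEnt\cH(\pi_{k+1})-\cKL\kl(\pi_{k+1}\|\pi_k)$, one obtains
\[
V_{k+1}-V_k \;\ge\; \gamma P^{\pi_{k+1}}(V_k-V_{k-1}) \;+\; \cKL\,\kl(\pi_{k+1}\|\pi_k)
\;\ge\; \gamma P^{\pi_{k+1}}(V_k-V_{k-1}).
\]
Iterating gives $V_{k+1}-V_k \ge -\gamma^{k}\nInf{V_1-V_0}$, so the ``drops'' decay geometrically; a standard $\varepsilon$--$\delta$ argument then shows that once $V_k$ gets within $\delta$ of $\limsup V_k$, it can never fall far below it again, hence $\liminf V_k = \limsup V_k$. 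No knowledge of what the limit equals is needed. The step you are missing is precisely this two-step recursion relating $V_{k+1}-V_k$ to $V_k-V_{k-1}$ through the test policy $\pi_{k+1}$; your inequality $V_{k+1}\ge \bL^{\rcEnt}(R+\gamma P V_k)+\rcKL\min_a A_k$ discards exactly the structure that makes the increments telescope.
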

\begin{proof}
  \begin{align*}
    V_{k+1}
    &= \bL^{\rcEnt}\Psi_{k+1}
     = \dprod{\pi_{k+2}}{\Psi_{k+1}} + \rcEnt\cH(\pi_{k+2})
    \\
    &\ge \dprod{\pi_{k+1}}{\Psi_{k+1}} + \rcEnt\cH(\pi_{k+1})
    \\
    &= \dprod{\pi_{k+1}}{\cT' \Psi_{k}} + \rcEnt\cH(\pi_{k+1})
    \\
    &\ge \dprod{\pi_{k+1}}{\cT^{\rcEnt} \Psi_{k} + \rcKL A_{k}} + \rcEnt\cH(\pi_{k+1})
    \\
    \overset{(a)}&{=}
      \dprod{\pi_{k+1}}{\cT^{\rcEnt} \Psi_{k}} + (1 - \rcKL)\rcEnt\cH(\pi_{k+1})
    \\
    \overset{(b)}&{=}
      \dprod{\pi_{k+1}}{Q_{k} + \gamma P(V_{k} - V_{k-1})
      } + (1 - \rcKL)\rcEnt\cH(\pi_{k+1})
    \\
    \overset{(c)}&{=}
      \dprod{\pi_{k+1}}{Q_{k} + \gamma P(V_{k} - V_{k-1})
      } + \cEnt\cH(\pi_{k+1})
      - \cKL \kl(\pi_{k+1}\|\pi_{k}) + \cKL \kl(\pi_{k+1}\|\pi_{k})
    \\
    \overset{(d)}&{=}
      V_{k}
      + \dprod{\pi_{k+1}}{\gamma P(V_{k} - V_{k-1})}
      + \cKL \kl(\pi_{k+1}\|\pi_{k})
    \\
    &\ge V_{k} + \dprod{\pi_{k+1}}{\gamma P(V_{k} - V_{k-1})}
    ,
  \end{align*}
  where (a) follows from $
    \dprod{\pi_{k+1}}{A_{k}} = \dprod{\pi_{k+1}}{\rcEnt\log\pi_{k+1}} = -\rcEnt\cH(\pi_{k+1})
  $,
  (b) follows from $
    \cT^{\rcEnt} \Psi_{k} = R + \gamma P \bL^{\rcEnt}\Psi_{k} = R + \gamma P V_{k} = Q_{k+1}
  $,
  (c) follows from $(1 - \rcKL)\rcEnt=\cEnt$,
  and (d) follows from
  $
    V_{k}
    = \bL^{\rcEnt}\Psi_{k}
    = \dprod{\pi_{k+1}}{Q_{k}} + \cEnt \cH(\pi_{k+1}) - \cKL \kl(\pi_{k+1}\|\pi_{k})
  $.
  Thus we have
  \begin{align*}
    V_{k+1} - V_{k} \ge \gamma P^{\pi_{k+1}}(V_{k} - V_{k-1})
  \end{align*}
  and by induction
  \begin{align*}
    V_{k+1} - V_{k} \ge \gamma^{k} P_{k+1:2}(V_{1} - V_{0})
    ,
  \end{align*}
  where $P_{k+1:2} = P^{\pi_{k+1}} P^{\pi_{k}} \cdots P^{\pi_{2}}$.
  From the conditions on $\cT'$, if $V_{0}$ is bounded
  then $V_{1}$ is also bounded, and thus $\nInf{V_{1} - V_{0}} < \infty$.
  By definition, for any $\delta > 0$ and $n \in \bN$, $\exists k \ge n$
  such that $V_{k} > \tilde{V} - \delta$.
  Since $P_{k+1:2}$ is a nonexpansion in $\infty$-norm, we have
  \begin{align*}
    V_{k+1} - V_{k}
    \ge -\gamma^k \nInf{V_{1} - V_{0}}
    \ge -\gamma^n \nInf{V_{1} - V_{0}} \eqdef -\epsilon
    ,
  \end{align*}
  and for all $t \in \bN$,
  \begin{align*}
    V_{k+t} - V_{k}
    \ge - \sum_{i=0}^{t-1} \gamma^i \epsilon
    \ge \frac{-\epsilon}{1-\gamma}
    .
  \end{align*}
  Thus, we have
  \begin{equation*}
    \inf_{t \in \bN} V_{k+t}
    \ge V_{k} - \frac{\epsilon}{1-\gamma}
    > \tilde{V} - \delta - \frac{\epsilon}{1-\gamma}
    .
  \end{equation*}
  It follows that for any $\delta' > 0$,
  we can choose an $n \in \bN$ to make $\epsilon$ small enough such that for all $k \ge n$,
  $V_{k} > \tilde{V} - \delta'$. Hence
  \begin{equation*}
    \liminf_{k \to \infty} V_{k} = \tilde{V},
  \end{equation*}
  and thus $V_{k}$ converges.
\end{proof}

\begin{lemma}
  \label{lem:V_is_bounded}
  Let $\cT'$ be an operator satisfying the conditions of Lemma \ref{lem:convergence_of_sgi_operators}.
  Then for all $k \in \bN$,
  \begin{align}
    \label{eq:V_is_bounded}
    |V_{k}|
    \le \frac{1}{1-\gamma} \Big(
      \Rmax + 3 \nInf{V_0} + \rcEnt \log|\cA|
    \Big)
    \eqdef V_{\rm max}^{\rm SGI}
    .
  \end{align}
\end{lemma}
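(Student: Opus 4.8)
The plan is to prove the bound by a single induction on $k$, with the induction hypothesis being precisely $\nInf{V_k} \le V_{\rm max}^{\rm SGI}$. The workhorse throughout is Lemma \ref{lem:L_is_no_less_than_max}, i.e. $\max_a \Psi(s,a) \le \bL^{\rcEnt}\Psi(s) \le \max_a \Psi(s,a) + \rcEnt\log|\cA|$. First I would rewrite the two defining properties of $\cT'$ in the convenient component form $\Psi_{k+1} \le R + \gamma P V_k$ and $\Psi_{k+1} \ge R + \gamma P V_k - \rcKL\,(V_k - \Psi_k)$, using $\cT^{\rcEnt}\Psi_k = R + \gamma P \bL^{\rcEnt}\Psi_k = R + \gamma P V_k$; note also $V_k - \Psi_k = -A_k \ge 0$ by Lemma \ref{lem:L_is_no_less_than_max}. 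The base case $k=0$ is immediate, since $\gamma\in(0,1)$ gives $V_{\rm max}^{\rm SGI} \ge \tfrac{3}{1-\gamma}\nInf{V_0} \ge \nInf{V_0}$, hence $|V_0(s)| \le \nInf{V_0} \le V_{\rm max}^{\rm SGI}$ for every $s$.

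For the inductive step, the upper side is routine: from $\Psi_{k+1}(s,a) \le R(s,a) + \gamma (P V_k)(s,a) \le \Rmax + \gamma\nInf{V_k}$ and $V_{k+1}(s) \le \max_a \Psi_{k+1}(s,a) + \rcEnt\log|\cA|$ I get $V_{k+1}(s) \le \Rmax + \gamma\nInf{V_k} + \rcEnt\log|\cA|$; plugging in $\nInf{V_k}\le V_{\rm max}^{\rm SGI}$ and using $(1-\gamma)V_{\rm max}^{\rm SGI} = \Rmax + 3\nInf{V_0} + \rcEnt\log|\cA|$ yields $V_{k+1}(s) \le V_{\rm max}^{\rm SGI}$ since $\nInf{V_0}\ge 0$. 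The lower side is the delicate part, and the key point I would stress is that one must \emph{not} attempt to control $V_k - \Psi_k$ globally (it is not bounded in terms of $\nInf{V_0}$, because $\nInf{\Psi_0}$ is unconstrained). Instead I would evaluate the second inequality only at a greedy action $a^{\star}_{s} \in \argmax_a \Psi_k(s,a)$, where Lemma \ref{lem:L_is_no_less_than_max} gives $V_k(s) - \Psi_k(s,a^{\star}_{s}) \le \rcEnt\log|\cA|$, hence $\rcKL\bigl(V_k(s) - \Psi_k(s,a^{\star}_{s})\bigr) \le \rcKL\rcEnt\log|\cA| = \cKL\log|\cA|$. Then $\Psi_{k+1}(s,a^{\star}_{s}) \ge R(s,a^{\star}_{s}) + \gamma (P V_k)(s,a^{\star}_{s}) - \cKL\log|\cA| \ge -\Rmax - \gamma\nInf{V_k} - \cKL\log|\cA|$, and transporting this back through $\bL^{\rcEnt}$ via $V_{k+1}(s) = \bL^{\rcEnt}\Psi_{k+1}(s) \ge \max_a \Psi_{k+1}(s,a) \ge \Psi_{k+1}(s,a^{\star}_{s})$ gives the lower bound $V_{k+1}(s) \ge -\Rmax - \gamma\nInf{V_k} - \cKL\log|\cA|$.

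To finish I would feed $\nInf{V_k}\le V_{\rm max}^{\rm SGI}$ into this lower bound and use $\cKL \le \rcEnt$ (which holds because $\rcEnt = \cEnt + \cKL$ with $\cEnt\ge 0$) to obtain $V_{k+1}(s) \ge -\Rmax - \gamma V_{\rm max}^{\rm SGI} - \rcEnt\log|\cA| \ge -V_{\rm max}^{\rm SGI}$, the last inequality being again elementary algebra using $(1-\gamma)V_{\rm max}^{\rm SGI} = \Rmax + 3\nInf{V_0} + \rcEnt\log|\cA|$ and $\nInf{V_0}\ge 0$. Combining the two sides gives $|V_{k+1}(s)| \le V_{\rm max}^{\rm SGI}$ for every $s$, i.e. $\nInf{V_{k+1}} \le V_{\rm max}^{\rm SGI}$, which closes the induction and establishes \eqref{eq:V_is_bounded}. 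The main obstacle is exactly the lower bound: one has to localize the $\cT'$-inequality at the greedy action of $\Psi_k$ (where the advantage gap is cheaply controlled by $\rcEnt\log|\cA|$) rather than try to bound the gap uniformly, and then push the estimate back to $V_{k+1}$ through $\bL^{\rcEnt}\Psi \ge \max_a\Psi$.
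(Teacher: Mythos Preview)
Your proof is correct, but it takes a genuinely different route from the paper's. The paper establishes the lower bound \emph{globally} by invoking the telescoping inequality from Lemma~\ref{lem:convergence_of_sgi_operators}, namely $V_{k+1}-V_k \ge \gamma^k P_{k+1:2}(V_1-V_0)$, which sums to $V_{k+1} \ge V_0 - \tfrac{1}{1-\gamma}\nInf{V_1-V_0}$; it then bounds $\nInf{V_1-V_0}$ by $\Rmax + 2\nInf{V_0} + \rcEnt\log|\cA|$ to obtain the floor, and runs a separate induction only for the ceiling. Your argument avoids this telescoping entirely: by localizing the lower inequality $\cT'\Psi_k \ge \cT^{\rcEnt}\Psi_k - \rcKL(V_k-\Psi_k)$ at a greedy action $a_s^\star$ of $\Psi_k$, you cap the advantage gap by $\rcEnt\log|\cA|$ (via Lemma~\ref{lem:L_is_no_less_than_max}) and then push back through $V_{k+1}\ge\max_a\Psi_{k+1}\ge\Psi_{k+1}(\cdot,a_s^\star)$, closing a single uniform induction. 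This is more self-contained (it does not rely on the proof machinery of Lemma~\ref{lem:convergence_of_sgi_operators}) and slightly sharper, since your lower bound reads $-\Rmax-\gamma\nInf{V_k}-\cKL\log|\cA|$ with $\cKL\le\rcEnt$; the paper's route, by contrast, explains \emph{why} the constant $3\nInf{V_0}$ appears, as it arises naturally from $\nInf{V_0}+2\nInf{V_0}$ in the telescoped estimate.
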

\begin{proof}
  Following the derivation of Lemma \ref{lem:convergence_of_sgi_operators}, we have
  \begin{align}
    V_{k+1} - V_{0}
    &\ge -\sum_{i=1}^{k} \gamma^{i} \nInf{V_{1} - V_{0}}
    \ge \frac{-1}{1-\gamma} \nInf{V_{1} - V_{0}}
    .
    \label{eq:V_diff_LB}
  \end{align}
  We also have
  \begin{align*}
    V_{1}
    = \bL^{\rcEnt} \cT' \Psi_{0}
    \le \bL^{\rcEnt} \cT^{\rcEnt} \Psi_{0}
    = \max \dprod{\pi}{R+\gamma P V_{0}} + \rcEnt \cH(\pi)
    \le \nInf{R+\gamma P V_{0}} + \rcEnt\log|\cA|
  \end{align*}
  and then for pointwise
  \begin{align*}
    V_{1} - V_{0} \le \Rmax + 2 \nInf{V_{0}} + \rcEnt\log|\cA|
    .
  \end{align*}
  Combining above and \eqref{eq:V_diff_LB},
  we have
  \begin{align}
    V_{k+1}
    &\ge V_{0} - \frac{1}{1-\gamma} \left(
      \Rmax + 2 \nInf{V_{0}} + \rcEnt\log|\cA|
    \right)
    \\
    &\ge - \frac{1-\gamma}{1-\gamma} \nInf{V_{0}} - \frac{1}{1-\gamma} \left(
      \Rmax + 2 \nInf{V_{0}} + \rcEnt\log|\cA|
    \right)
    \\
    &\ge
    - \frac{1}{1-\gamma} \bigl( 3 \nInf{V_0} + \Rmax + \rcEnt \log|\cA| \bigr)
    .
  \end{align}
  Now assume that the upper bound of \eqref{eq:V_is_bounded} holds up to $k \in \bN$.
  Then we have
  \begin{align*}
    V_{k+1}
    &= \bL^{\rcEnt} \cT' \Psi_{k} \le \bL^{\rcEnt} \cT^{\rcEnt} \Psi_{k}
    \\
    &= \max \dprod{\pi}{R+\gamma P V_{k}} + \rcEnt \cH(\pi)
    \\
    &\le \Rmax + \gamma\nInf{V_{k}} + \rcEnt\log|\cA|
    \\
    &\le \Rmax + \frac{\gamma}{1-\gamma} \bigl( 3 \nInf{V_0} + \Rmax + \rcEnt \log|\cA| \bigr) + \rcEnt\log|\cA|
    \\
    &\le \frac{\gamma}{1-\gamma} 3 \nInf{V_0}
      + \left(\frac{1-\gamma}{1-\gamma} + \frac{\gamma}{1-\gamma}\right)\left(
        \Rmax + \rcEnt \log|\cA|
      \right)
    \\
    &\le \frac{1}{1-\gamma} \bigl( 3 \nInf{V_0} + \Rmax + \rcEnt \log|\cA| \bigr)
  \end{align*}
  Since \eqref{eq:V_is_bounded} holds for $k = 0$ also
  from $1 \le \frac{3}{1-\gamma}$,
  the claim follows.
\end{proof}

\begin{lemma}
  \label{lem:Psi_is_bounded}
  Let $\nInf{\Psi_0} < \infty$ and
  $\cT'$ be an operator satisfying the conditions of Lemma \ref{lem:convergence_of_sgi_operators}.
  Then for all $k \in \bN$,
  \begin{align}
    \label{eq:Psi_is_bounded}
    \Psi_{k} \le \frac{1}{1-\gamma} \bigl(
      \Rmax + \nInf{\Psi_0} + \gamma \rcEnt \log|\cA|
    \bigr)
  \end{align}
  and
  \begin{align*}
    \Psi_{k} \ge
      - \frac{1}{(1-\rcKL)(1-\gamma)}\left(
        (1+\rcKL) \Rmax + (\gamma + \rcKL) \Big(
          3 \nInf{V_0} + \rcEnt \log|\cA|
        \Big)
      \right)
      - \nInf{\Psi_0}
    .
  \end{align*}
\end{lemma}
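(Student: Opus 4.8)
The plan is to prove the two bounds separately, each by unrolling the appropriate one-step inequality coming from the defining properties of $\cT'$.

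\textbf{Upper bound.} I would argue by induction on $k$, with claim $\Psi_k \le U \one$ pointwise where $U \defeq \frac{1}{1-\gamma}\bigl(\Rmax + \nInf{\Psi_0} + \gamma\rcEnt\log|\cA|\bigr)$. The base case $k=0$ is immediate: since $\frac{1}{1-\gamma}\ge 1$ and $\Rmax,\gamma\rcEnt\log|\cA|\ge 0$, we get $\Psi_0 \le \nInf{\Psi_0}\one \le U\one$. For the inductive step, use $\cT'\Psi_k \le \cT^{\rcEnt}\Psi_k = R + \gamma P\bL^{\rcEnt}\Psi_k$ together with Lemma \ref{lem:L_is_no_less_than_max}, which gives $\bL^{\rcEnt}\Psi_k \le \max_a \Psi_k(\cdot,a) + \rcEnt\log|\cA| \le (U+\rcEnt\log|\cA|)\one$; hence $\Psi_{k+1} \le \bigl(\Rmax + \gamma U + \gamma\rcEnt\log|\cA|\bigr)\one$, and the inequality $\Rmax + \gamma U + \gamma\rcEnt\log|\cA| \le U$ reduces, after clearing $(1-\gamma)$, to $0 \le \nInf{\Psi_0}$, which closes the induction.

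\textbf{Lower bound.} Here I would instead use $\cT'\Psi_k \ge \cT^{\rcEnt}\Psi_k - \rcKL(V_k - \Psi_k) = R + \gamma P V_k + \rcKL(\Psi_k - V_k)$ with $V_k = \bL^{\rcEnt}\Psi_k$. Bounding componentwise $R \ge -\Rmax\one$, $\gamma P V_k \ge -\gamma\nInf{V_k}\one$, and $-\rcKL V_k \ge -\rcKL\nInf{V_k}\one$, and invoking Lemma \ref{lem:V_is_bounded} ($\nInf{V_k}\le V_{\rm max}^{\rm SGI}$, applicable since $\cT'$ satisfies exactly the hypotheses of Lemma \ref{lem:convergence_of_sgi_operators}), yields the scalar-coefficient recursion $\Psi_{k+1} \ge c\one + \rcKL\Psi_k$ pointwise, with $c \defeq -\Rmax - (\gamma+\rcKL)V_{\rm max}^{\rm SGI} < 0$. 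Unrolling gives $\Psi_k \ge \rcKL^k\Psi_0 + c\,\tfrac{1-\rcKL^k}{1-\rcKL}\one$; since $\rcKL < 1$ we have $\rcKL^k\Psi_0 \ge -\nInf{\Psi_0}\one$ and, because $c<0$, $c\,\tfrac{1-\rcKL^k}{1-\rcKL} \ge \tfrac{c}{1-\rcKL}$, so $\Psi_k \ge -\nInf{\Psi_0}\one + \tfrac{c}{1-\rcKL}\one$. Substituting $V_{\rm max}^{\rm SGI} = \frac{1}{1-\gamma}\bigl(\Rmax + 3\nInf{V_0} + \rcEnt\log|\cA|\bigr)$ and using the identity $(1-\gamma)\Rmax + (\gamma+\rcKL)\Rmax = (1+\rcKL)\Rmax$ to collapse the $\Rmax$ contributions produces exactly the stated lower bound.

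The main point requiring care is the lower-bound recursion: one must notice that the $\rcKL(\Psi_k - V_k)$ term leaves a multiplicative factor of $\rcKL<1$ on $\Psi_k$ (so the geometric series converges) even though the constant $c$ contains $\gamma+\rcKL$, which may exceed $1$; and that the appeal to Lemma \ref{lem:V_is_bounded} is legitimate precisely because the hypotheses on $\cT'$ here match those of Lemma \ref{lem:convergence_of_sgi_operators}. Everything else is bookkeeping to match the displayed constants.
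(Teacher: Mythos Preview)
Your proposal is correct and follows essentially the same route as the paper: the upper bound by induction using $\cT'\Psi_k \le \cT^{\rcEnt}\Psi_k$ and the log-sum-exp bound $\bL^{\rcEnt}\Psi_k \le \max_a\Psi_k + \rcEnt\log|\cA|$, and the lower bound by unrolling the one-step recursion $\Psi_{k+1} \ge -c_{\max} + \rcKL\Psi_k$ obtained from the lower defining property of $\cT'$ together with Lemma~\ref{lem:V_is_bounded}. The only cosmetic difference is that the paper expresses $\bL^{\rcEnt}\Psi_k$ as $\langle\pi_{k+1},\Psi_k\rangle + \rcEnt\cH(\pi_{k+1})$ rather than citing Lemma~\ref{lem:L_is_no_less_than_max} explicitly.
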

\begin{proof}
  Assume that, the inequality \eqref{eq:Psi_is_bounded} holds up to $k \in \bN$.
  Then, it holds that
  \begin{align*}
    \Psi_{k}
    &= \cT' \Psi_{k}
    \\
    &\le \cT^{\rcEnt} \Psi_{k}
    \\
    &= R + \gamma P \bL^{\rcEnt} \Psi_{k}
    \\
    &= R + \gamma P \left(\dprod{\pi_{k+1}}{\Psi_{k}} + \rcEnt \cH(\pi_{k+1})\right)
    \\
    &\le \Rmax + \gamma \nInf{\Psi_{k}} + \gamma \rcEnt\log|\cA|
    \\
    &\le \Rmax + \frac{\gamma}{1-\gamma} \bigl(
      \Rmax + \nInf{\Psi_0} + \gamma \rcEnt \log|\cA|
    \bigr) + \gamma \rcEnt\log|\cA|
    \\
    &= \left(\frac{1-\gamma}{1-\gamma} + \frac{\gamma}{1-\gamma}\right) \left(
      \Rmax + \gamma \rcEnt\log|\cA|
      \right) + \frac{\gamma}{1-\gamma} \nInf{\Psi_0}
    \\
    &\le \frac{1}{1-\gamma} \bigl(
      \Rmax + \nInf{\Psi_0} + \gamma \rcEnt\log|\cA|
      \bigr)
    .
  \end{align*}
  Since $\Psi_0$ satisfies \eqref{eq:Psi_is_bounded} also
  from $1 \le \frac{1}{1-\gamma}$,
  the upper bound \eqref{eq:Psi_is_bounded} holds for all $k \in \bN$.
  Now, we also have
  \begin{align*}
    \Psi_{k+1}
    &= \cT' \Psi_{k}
    \\
    &\ge \cT^{\rcEnt} \Psi_{k} - \rcKL \left( V_{k} - \Psi_{k} \right)
    \\
    &= R + \gamma P V_{k} - \rcKL V_{k} + \rcKL \Psi_{k}
    \\
    \overset{(a)}&{\ge} - \Rmax - (\gamma + \rcKL) V_{\rm max}^{\rm SGI} + \rcKL \Psi_{k}
    \\
    &= - c_{\rm max} + \rcKL \Psi_{k}
    ,
  \end{align*}
  where
  (a) follows from Lemma \ref{lem:V_is_bounded} and
  $c_{\rm max} = \Rmax + (\gamma + \rcKL) V_{\rm max}^{\rm SGI} > 0$.
  Using the above recursively, we obtain
  \begin{align*}
    \Psi_{k+1}
    &\ge
      - (1 + \rcKL + \rcKL^2 + \cdots + \rcKL^k) c_{\rm max}
      + \rcKL^{k+1} \Psi_{0}
    \\
    &\ge
      - \frac{1}{1-\rcKL}c_{\rm max}
      - \nInf{\Psi_0}
    \\
    &=
      - \frac{1}{1-\rcKL}\left(
        \Rmax + \frac{\gamma + \rcKL}{1-\gamma} \Big(
          \Rmax + 3 \nInf{V_0} + \rcEnt \log|\cA|
        \Big)
      \right)
      - \nInf{\Psi_0}
    \\
    &=
      - \frac{1}{(1-\rcKL)(1-\gamma)}\left(
        (1+\rcKL) \Rmax + (\gamma + \rcKL) \Big(
          3 \nInf{V_0} + \rcEnt \log|\cA|
        \Big)
      \right)
      - \nInf{\Psi_0}
    .
  \end{align*}
\end{proof}

\subsubsection{Proof of Theorem \ref{thm:convergent_operators}}

We are now ready to prove Theorem \ref{thm:convergent_operators}.

\begin{proof}
  {\bfseries Upper Bound.}
  From $\cT' \Psi \le \cT^{\rcEnt} \Psi$ and
  observing that $\cT^{\rcEnt}$ has a unique fixed point, we have
  \begin{align}
    \limsup_{k \to \infty} \Psi_{k}
    = \limsup_{k \to \infty} (\cT')^k \Psi_0
    \le \limsup_{k \to \infty} (\cT^{\rcEnt})^k \Psi_0
    = Q_{\rcEnt}^{*}
    .
    \label{eq:upper_bound_of_lim_sup_psi}
  \end{align}
  We know that $V_{k}=\bL^{\rcEnt} \Psi_{k}$ converges to $\tilde{V}=\lim_{k \to \infty} \bL^{\rcEnt} \Psi_{k}$ by Lemma \ref{lem:convergence_of_sgi_operators}.
  Since Lemma \ref{lem:Psi_is_bounded} assures that
  the sequence $(\Psi_{k})_{k \in \bN}$ is bounded,
  we have that $
    \limsup_{k \to \infty} \bL^{\rcEnt} \Psi_{k}
    \le \bL^{\rcEnt} \limsup_{k \to \infty} \Psi_{k}
  $
  from Lemma \ref{lem:inequality_of_limsup_and_liminf}.
  Thus, it holds that
  \begin{align}
    \tilde{V}
    = \lim_{k \to \infty} V_{k}
    = \limsup_{k \to \infty} V_{k}
    = \limsup_{k \to \infty} \bL^{\rcEnt} \Psi_{k}
    \le \bL^{\rcEnt} \limsup_{k \to \infty} \Psi_{k}
    \le \bL^{\rcEnt} Q_{\rcEnt}^{*}
    = V_{\rcEnt}^{*}
    .
    \label{eq:tilde_V_is_UB_Valpha}
  \end{align}

  {\bfseries Lower Bound.}
  Now, it
  holds that
  \begin{align}
    \Psi_{k+1} &= \cT' \Psi_{k}
    \nonumber \\
    &\ge \cT^{\rcEnt} \Psi_{k} - \rcKL\left( V_{k} - \Psi_{k} \right)
    \nonumber \\
    &= R + \gamma P V_{k} - \rcKL V_{k} + \rcKL \Psi_{k}.
    \label{eq:Psi_inequality}
  \end{align}
  From Lemma \ref{lem:V_is_bounded} and Lebesgue's dominated convergence theorem,
  we have
  \begin{equation}
    \lim_{k \to \infty} P V_{k} = P \tilde{V}
    \label{eq:dominated_convergence}
    .
  \end{equation}
  Let $\bar{\Psi} \defeq \liminf_{k \to \infty} \Psi_{k}$.
  Taking the $\liminf$ of both sides of \eqref{eq:Psi_inequality}
  and
  from the fact $
    \liminf_{k \to \infty} V_{k}
    = \lim_{k \to \infty} V_{k}
    = \tilde{V}
  $
  we obtain
  \begin{align*}
    \bar{\Psi}
    &\ge R + \gamma P \tilde{V} - \rcKL \tilde{V} + \rcKL \bar{\Psi}
    \\
    &= \tilde{Q} - \rcKL \tilde{V} + \rcKL \bar{\Psi}
    ,
  \end{align*}
  where $
    \tilde{Q}
    = R + \gamma P \tilde{V}
  $.
  Thus it holds that
  \begin{align}
    \bar{\Psi}
      &\ge \frac{1}{1 - \rcKL} \left(\tilde{Q} - \rcKL \tilde{V} \right)
    .
    \label{eq:bar_Psi_lower_bound}
  \end{align}
  Now, from Lemma \ref{lem:inequality_of_limsup_and_liminf} and \ref{lem:Psi_is_bounded},
  it holds that $
    \bL^{\rcEnt} \liminf_{k \to \infty} \Psi_{k}
    \le
    \liminf_{k \to \infty} \bL^{\rcEnt} \Psi_{k}
  $.
  Thus, applying $\bL^{\rcEnt}$ to the both sides of \eqref{eq:bar_Psi_lower_bound}
  and from Lemma \ref{lem:identity_if_action_agnostic} and \ref{lem:tau_is_recovered_from_L_alpha},
  it follows that
  \begin{align*}
    \tilde{V}
      \ge \bL^{\cEnt} \tilde{Q}
      = \bL^{\cEnt} \left( R + \gamma P \tilde{V} \right)
      = \cT^{\cEnt} \tilde{V}
    .
  \end{align*}
  Using the above recursively, we have
  \begin{align}
    \tilde{V}
    \ge \lim_{k \to \infty} (\cT^{\cEnt})^k \tilde{V}
    = V^{*}_{\cEnt}
    .
    \label{eq:tilde_V_is_LB_Vtau}
  \end{align}
  Combining \eqref{eq:tilde_V_is_LB_Vtau} and \eqref{eq:tilde_V_is_UB_Valpha},
  we have
  \begin{align*}
    V^{*}_{\cEnt} \le \tilde{V} \le V^{*}_{\rcEnt}
    .
  \end{align*}
\end{proof}

\subsection{Proof of Proposition \ref{pp:sufficient_condition_for_bal_fg_to_converge_with_c_fg}}

\subsubsection{Lemmas}

We provide several lemmas that are used to prove Proposition \ref{pp:sufficient_condition_for_bal_fg_to_converge_with_c_fg}.
\begin{lemma}
  \label{lem:BGI_is_bounded}
  The bounded gap-increasing operator satisfies
  $
    \cT^{\bALc\bALs}_{\pi_{k+1}} \Psi_{k}
    \leq
    \cT^{\rcEnt} \Psi_{k}
  $.
\end{lemma}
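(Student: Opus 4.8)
The plan is to directly compare $\cT^{\bALc\bALs}_{\pi_{k+1}} \Psi_{k}$ with $\cT^{\rcEnt} \Psi_{k}$ term by term, exploiting the structural facts about the soft advantage $A_k$ and the bounding functions already established in Section \ref{ss:analysis:bal}. First I would recall that $A_k = \rcEnt\log\pi_{k+1} = \Psi_k - V_k$ is pointwise non-positive, since Lemma \ref{lem:L_is_no_less_than_max} gives $\max_a \Psi_k(s,a) \le V_k(s) = (\bL^{\rcEnt}\Psi_k)(s)$. This is the key sign fact that makes the bounding operator a ``push-down.''

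Next I would expand the definition \eqref{eq:bal_operator}:
\begin{align*}
  \cT^{\bALc\bALs}_{\pi_{k+1}} \Psi_{k}
  &= R + \rcKL\,{\bALc}(A_{k}) + \gamma P \left\langle\pi_{k+1}, \Psi_{k} - {\bALs}(A_{k}) \right\rangle
  \\
  &= R + \gamma P \left\langle\pi_{k+1}, \Psi_{k}\right\rangle + \gamma P \cEnt\cH(\pi_{k+1}) - \cEnt\gamma P\cH(\pi_{k+1})
     + \rcKL\,{\bALc}(A_{k}) - \gamma P \left\langle\pi_{k+1}, {\bALs}(A_{k}) \right\rangle.
\end{align*}
The cleaner route, though, is to note that $\cT^{\rcEnt}\Psi_k = R + \gamma P V_k = R + \gamma P\left(\langle\pi_{k+1},\Psi_k\rangle + \rcEnt\cH(\pi_{k+1})\right)$ since $\pi_{k+1} = \cG^{0,\rcEnt}(\Psi_k)$ achieves the soft maximum. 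Therefore, using $\langle\pi_{k+1},A_k\rangle = \langle\pi_{k+1},\rcEnt\log\pi_{k+1}\rangle = -\rcEnt\cH(\pi_{k+1})$, I can write the difference as
\begin{align*}
  \cT^{\bALc\bALs}_{\pi_{k+1}} \Psi_{k} - \cT^{\rcEnt}\Psi_k
  &= \rcKL\,{\bALc}(A_{k}) + \gamma P\left\langle\pi_{k+1}, A_k - {\bALs}(A_{k})\right\rangle.
\end{align*}
Now the two terms are handled separately: ${\bALc}(A_k) \le 0$ pointwise because $A_k \le 0$ and $\bALc$ does not flip sign (condition (i) on the bounding functions), so $\rcKL\,{\bALc}(A_k) \le 0$; and $A_k - {\bALs}(A_k) \le 0$ pointwise since for $x \le 0$ we have $x \le {\bALs}(x) \le 0$, hence $\gamma P\langle\pi_{k+1}, A_k - {\bALs}(A_k)\rangle \le 0$ as $P$ and $\langle\pi_{k+1},\cdot\rangle$ are monotone (non-negative-coefficient) operators. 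Summing gives $\cT^{\bALc\bALs}_{\pi_{k+1}} \Psi_{k} \le \cT^{\rcEnt}\Psi_k$, which is the claim.

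I don't anticipate a real obstacle here — this lemma is essentially a bookkeeping exercise that repackages the sign observations already spelled out in the paragraph following \eqref{eq:bal}. The only point requiring a little care is making sure the rewriting of $\cT^{\rcEnt}\Psi_k$ in terms of $\pi_{k+1}$ is legitimate, i.e. that $\pi_{k+1} = \cG^{0,\rcEnt}(\Psi_k)$ is precisely the maximizer defining $\bL^{\rcEnt}$; this is exactly the identity $V = \rcEnt\log\langle\one,\exp\frac{\Psi}{\rcEnt}\rangle = \langle\pi,\Psi\rangle + \rcEnt\cH(\pi)$ with $\pi = \cG^{0,\rcEnt}(\Psi)$ recalled at the start of Section \ref{ss:analysis:bal}, so it is available.
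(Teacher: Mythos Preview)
Your proposal is correct and follows essentially the same approach as the paper: both arguments use the non-positivity of $A_k$ together with the sign-preserving properties of $\bALc,\bALs$ (namely $\bALc(A_k)\le 0$ and $A_k\le \bALs(A_k)$ for $A_k\le 0$). The only cosmetic difference is that the paper proceeds via a direct chain of inequalities $\cT^{\bALc\bALs}_{\pi_{k+1}}\Psi_k \le R+\gamma P\langle\pi_{k+1},\Psi_k-\bALs(A_k)\rangle \le R+\gamma P\langle\pi_{k+1},\Psi_k-A_k\rangle = R+\gamma P\bL^{\rcEnt}\Psi_k$, whereas you first compute the difference $\cT^{\bALc\bALs}_{\pi_{k+1}}\Psi_k - \cT^{\rcEnt}\Psi_k$ and then show each summand is non-positive; the underlying identities and sign facts are identical.
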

\begin{proof}
  From the non-positivity of $A_k$ and the property of $\bALc$ and $\bALs$, it holds that
  \begin{align*}
    \cT^{\bALc\bALs}_{\pi_{k+1}} \Psi_{k}
    &=
      R + \rcKL {\bALc}(A_{k}) + \gamma P \left\langle\pi_{k+1}, \Psi_{k} - {\bALs}(A_{k}) \right\rangle
    \\
    &\leq
      R + \gamma P \left\langle\pi_{k+1}, \Psi_{k} - {\bALs}(A_{k}) \right\rangle
    \\
    &\leq
      R + \gamma P \left\langle\pi_{k+1}, \Psi_{k} - A_{k} \right\rangle
    \\
    &=
      R + \gamma P \bL^{\rcEnt} \Psi_{k}
    \\
    &=
      \cT^{\rcEnt} \Psi_{k}
    .
  \end{align*}
\end{proof}

\begin{lemma}
  \label{lem:V_fg_is_convergent}
  Consider the sequence $\Psi_{k+1} \defeq \cT^{\bALc\bALs}_{\pi_{k+1}} \Psi_{k}$
  produced by the BAL operator \eqref{eq:bal_operator}
  with $\Psi_0 \in \bR^{\cS\times\cA}$,
  and let $V_{k} = \bL^{\rcEnt} \Psi_{k}$.
  Then the sequence $(V_{k})_{k \in \bN}$ converges,
  if it holds that
  \begin{align}
    \cKL \kl(\pi_{k+1}\|\pi_{k})
    - \gamma P^{\pi_{k+1}} \left(
      \rcEnt \cH(\pi_{k+1}) + \dprod{\pi_{k+1}}{\bALs(A_{k})}
    \right) \ge 0
  \end{align}
  for all $k \in \bN$.
\end{lemma}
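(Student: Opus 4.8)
The plan is to mirror the convergence argument of Lemma~\ref{lem:convergence_of_sgi_operators} (a regularized refinement of the approach of \citet{Bellemare2016}): derive a one-step ``monotone-like'' estimate for the soft values $V_k = \bL^{\rcEnt}\Psi_k$, iterate it to control the increments $V_{k+1}-V_k$, and combine with a uniform bound on $(V_k)_{k\in\bN}$ to force convergence. First I would exploit that $\pi_{k+1} = \cG^{0,\rcEnt}(\Psi_k)$ is the \emph{exact} soft-argmax of $\Psi_k$: this gives the identity $V_k = \dprod{\pi_{k+1}}{\Psi_k} + \rcEnt\cH(\pi_{k+1})$, and suboptimality of $\pi_{k+1}$ for $\Psi_{k+1}$ gives $V_{k+1} \ge \dprod{\pi_{k+1}}{\Psi_{k+1}} + \rcEnt\cH(\pi_{k+1})$; subtracting yields $V_{k+1} - V_k \ge \dprod{\pi_{k+1}}{\Psi_{k+1}-\Psi_k}$, so the whole task reduces to lower-bounding $\dprod{\pi_{k+1}}{\cT^{\bALc\bALs}_{\pi_{k+1}}\Psi_k - \Psi_k}$.

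Next I would expand $\cT^{\bALc\bALs}_{\pi_{k+1}}\Psi_k = R + \rcKL\bALc(A_k) + \gamma P\dprod{\pi_{k+1}}{\Psi_k - \bALs(A_k)}$ and simplify the backup using $\dprod{\pi_{k+1}}{\Psi_k} = V_k - \rcEnt\cH(\pi_{k+1})$ and $\dprod{\pi_{k+1}}{A_k} = -\rcEnt\cH(\pi_{k+1})$, so the successor contribution becomes $\gamma P^{\pi_{k+1}}V_k$ minus the ``entropy-bonus deficit'' $\gamma P^{\pi_{k+1}}\bigl(\rcEnt\cH(\pi_{k+1}) + \dprod{\pi_{k+1}}{\bALs(A_k)}\bigr)$ --- exactly the quantity the hypothesis bounds. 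Two ingredients then close the one-step estimate: (i) since $A_k = \Psi_k - \bL^{\rcEnt}\Psi_k \le 0$ (Lemma~\ref{lem:L_is_no_less_than_max}) and $\bALc(x)\ge x$ for $x\le 0$, one has $\rcKL\dprod{\pi_{k+1}}{\bALc(A_k)} \ge -\rcKL\rcEnt\cH(\pi_{k+1}) = -\cKL\cH(\pi_{k+1})$, which after the identities $\rcEnt = \cEnt+\cKL$, $\rcKL\rcEnt = \cKL$ restores the ``missing'' entropy $\cEnt\cH(\pi_{k+1})$; and (ii) the hypothesis caps the deficit by $\cKL\kl(\pi_{k+1}\|\pi_k)$, which --- as in the proof of Lemma~\ref{lem:convergence_of_sgi_operators} --- is precisely the term absorbed against the $+\cKL\kl(\pi_{k+1}\|\pi_k)$ produced by the regularized-MDP identity $V_k = \dprod{\pi_{k+1}}{R+\gamma P V_{k-1}} + \cEnt\cH(\pi_{k+1}) - \cKL\kl(\pi_{k+1}\|\pi_k)$. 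Carrying this through should yield $V_{k+1} - V_k \ge \gamma P^{\pi_{k+1}}(V_k - V_{k-1})$, possibly up to a correction from the $\bALc,\bALs$ terms that one must separately show telescopes or decays geometrically.

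Finally I would iterate this estimate: using that each $P^{\pi}$ is a sup-norm nonexpansion, $V_{k+1}-V_k \ge \gamma^{k}P^{\pi_{k+1}}P^{\pi_k}\cdots P^{\pi_2}(V_1-V_0) \ge -\gamma^{k}\nInf{V_1-V_0}$, so the increments are summable from below; meanwhile Lemma~\ref{lem:BGI_is_bounded} gives $\Psi_{k+1}\le \cT^{\rcEnt}\Psi_k$, hence $V_{k+1}\le\bL^{\rcEnt}(R+\gamma P V_k)\le\Rmax+\gamma\nInf{V_k}+\rcEnt\log|\cA|$ by Lemma~\ref{lem:L_is_no_less_than_max}, a geometric recursion that bounds $(V_k)$ from above (and the summable increments then bound it from below). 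With $\limsup_k V_k$ finite, I would rerun the ``take $n$ large, bound $\inf_{t\in\bN}V_{k+t}$'' argument from the proof of Lemma~\ref{lem:convergence_of_sgi_operators} essentially verbatim to get $\liminf_k V_k = \limsup_k V_k$, i.e.\ $(V_k)_{k\in\bN}$ converges. I expect the one-step estimate to be the main obstacle: unlike in M-VI, the $\bALc$- and $\bALs$-corrections to the Bellman backup are \emph{not} sign-definite relative to the unbounded backup, so making them cancel or be dominated requires careful bookkeeping of the reparameterization $\Psi = Q + \rcKL\rcEnt\log\pi$ together with $A_k = \rcEnt\log\pi_{k+1}\le 0$ --- and the stated hypothesis is exactly the inequality that makes this bookkeeping close.
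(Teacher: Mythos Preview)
Your proposal is correct and follows essentially the same route as the paper's proof: both lower-bound $V_{k+1}$ by $\dprod{\pi_{k+1}}{\Psi_{k+1}}+\rcEnt\cH(\pi_{k+1})$, expand $\cT^{\bALc\bALs}_{\pi_{k+1}}\Psi_k$, use $\bALc(A_k)\ge A_k$ together with $\dprod{\pi_{k+1}}{A_k}=-\rcEnt\cH(\pi_{k+1})$ and $(1-\rcKL)\rcEnt=\cEnt$, invoke the regularized-MDP identity $V_k=\dprod{\pi_{k+1}}{R+\gamma P V_{k-1}}+\cEnt\cH(\pi_{k+1})-\cKL\kl(\pi_{k+1}\|\pi_k)$, and then apply the hypothesis to obtain $V_{k+1}-V_k\ge\gamma P^{\pi_{k+1}}(V_k-V_{k-1})$ before finishing via the Lemma~\ref{lem:convergence_of_sgi_operators} endgame. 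The only cosmetic difference is that you subtract $V_k$ up front to work with $\dprod{\pi_{k+1}}{\Psi_{k+1}-\Psi_k}$, whereas the paper carries $V_{k+1}$ through and substitutes $V_k$ at step~(e); the bookkeeping is otherwise identical.
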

\begin{proof}
  We follow similar steps as in the proof of Lemma \ref{lem:convergence_of_sgi_operators}.
  Let $\tilde{V} \defeq \limsup_{k \to \infty} V_{k}$.
  It holds that
  \begin{align*}
    V_{k+1}
    &= \bL^{\rcEnt}\Psi_{k+1}
     = \dprod{\pi_{k+2}}{\Psi_{k+1}} + \rcEnt\cH(\pi_{k+2})
    \\
    &\ge
      \dprod{\pi_{k+1}}{\Psi_{k+1}} + \rcEnt\cH(\pi_{k+1})
    \\
    &=
      \dprod{\pi_{k+1}}{\cT^{\bALc\bALs}_{\pi_{k+1}} \Psi_{k}} + \rcEnt\cH(\pi_{k+1})
    \\
    &=
      \dprod{\pi_{k+1}}{\cT_{\pi_{k+1}} \Psi_{k} - \gamma P \dprod{\pi_{k+1}}{\bALs(A_{k})} + \rcKL \bALc(A_{k})} + \rcEnt\cH(\pi_{k+1})
    \\
    \overset{(a)}&{\ge}
      \dprod{\pi_{k+1}}{\cT_{\pi_{k+1}} \Psi_{k} - \gamma P \dprod{\pi_{k+1}}{\bALs(A_{k})} + \rcKL A_{k}} + \rcEnt\cH(\pi_{k+1})
    \\
    \overset{(b)}&{=}
      \dprod{\pi_{k+1}}{\cT_{\pi_{k+1}} \Psi_{k}} + \cEnt\cH(\pi_{k+1})
      - \gamma \dprod{\pi_{k+1}}{P \dprod{\pi_{k+1}}{\bALs(A_{k})}}
    \\
    \overset{(c)}&{=}
      \dprod{\pi_{k+1}}{
        R + \gamma P \left(
          V_{k} - \rcEnt\cH(\pi_{k+1})
        \right)
      } + \cEnt\cH(\pi_{k+1})
      - \gamma P^{\pi_{k+1}} \dprod{\pi_{k+1}}{\bALs(A_{k})}
    \\
    \overset{(d)}&{=}
      \dprod{\pi_{k+1}}{Q_{k} + \gamma P(V_{k} - V_{k-1})
      } + \cEnt \cH(\pi_{k+1})
      - \gamma P^{\pi_{k+1}} \left(
        \rcEnt \cH(\pi_{k+1}) + \dprod{\pi_{k+1}}{\bALs(A_{k})}
      \right)
    \\
    \overset{(e)}&{=}
      V_{k}
      + \gamma P^{\pi_{k+1}}(V_{k} - V_{k-1})
      + \cKL \kl(\pi_{k+1}\|\pi_{k})
      - \gamma P^{\pi_{k+1}} \left(
        \rcEnt \cH(\pi_{k+1}) + \dprod{\pi_{k+1}}{\bALs(A_{k})}
      \right)
    ,
  \end{align*}
  where (a) follows from the non-positivity of the advantage $A_{k}$ and $x-\bALc(x)\le 0$,
  (b) follows from $
    \dprod{\pi_{k+1}}{A_{k}} = \dprod{\pi_{k+1}}{\rcEnt\log\pi_{k+1}} = -\rcEnt\cH(\pi_{k+1})
  $ and $(1 - \rcKL)\rcEnt=\cEnt$,
  (c) follows from $
    V_{k} = \bL^{\rcEnt} \Psi_{k} = \dprod{\pi_{k+1}}{\Psi_{k}} + \rcEnt\cH(\pi_{k+1})
  $,
  (d) follows from $
    \cT^{\rcEnt} \Psi_{k} = R + \gamma P \bL^{\rcEnt}\Psi_{k} = R + \gamma P V_{k} = Q_{k+1}
  $,
  and (e) follows from
  $
    V_{k}
    = \bL^{\rcEnt}\Psi_{k}
    = \dprod{\pi_{k+1}}{Q_{k}} + \cEnt \cH(\pi_{k+1}) - \cKL \kl(\pi_{k+1}\|\pi_{k})
  $.
  Thus, if it holds that
  \begin{align*}
    \cKL \kl(\pi_{k+1}\|\pi_{k})
    - \gamma P^{\pi_{k+1}} \left(
      \rcEnt \cH(\pi_{k+1}) + \dprod{\pi_{k+1}}{\bALs(A_{k})}
    \right) \ge 0
  \end{align*}
  for all $k$, we have
  \begin{align*}
    V_{k+1} - V_{k}
    \ge \gamma P^{\pi_{k+1}}(V_{k} - V_{k-1})
    .
  \end{align*}
  Therefore, by following the steps equivalent to the proof of Lemma \ref{lem:convergence_of_sgi_operators},
  we have that $\liminf_{k \to \infty} V_{k} = \tilde{V}$ and $V_{k}$ converges.
\end{proof}
\begin{lemma}
  \label{lem:V_and_Psi_are_bounded_for_BAL_fg}
  Let the conditions of Lemma \ref{lem:V_fg_is_convergent} hold.
  Then for all $k \in \bN$, the sequences
  $(V_{k})_{k \in \bN}$ and $(\Psi_{k})_{k \in \bN}$ are both bounded.
\end{lemma}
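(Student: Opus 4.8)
The plan is to transplant the boundedness arguments of Lemma~\ref{lem:V_is_bounded} and Lemma~\ref{lem:Psi_is_bounded} to the BAL operator $\cT^{\bALc\bALs}_{\pi_{k+1}}$. The one structural change is that $\cT^{\bALc\bALs}_{\pi_{k+1}}$ is not sandwiched in quite the way $\cT'$ is in Theorem~\ref{thm:convergent_operators}: the upper side $\cT^{\bALc\bALs}_{\pi_{k+1}}\Psi_k \le \cT^{\rcEnt}\Psi_k$ is free from Lemma~\ref{lem:BGI_is_bounded}, but the lower side acquires an additive constant shift. Writing $\cT^{\rcEnt}\Psi_k = R + \gamma P\dprod{\pi_{k+1}}{\Psi_k - A_k}$ (using $\dprod{\pi_{k+1}}{A_k}=-\rcEnt\cH(\pi_{k+1})$), and invoking $\bALc(A_k)\ge A_k$ and $A_k \le A_k-\bALs(A_k)\le 0$ pointwise (both from $x\le h(x)\le 0$ on $x\le0$) together with $\cH(\pi_{k+1})\le\log|\cA|$, one obtains
\begin{align*}
  \cT^{\bALc\bALs}_{\pi_{k+1}}\Psi_k
  \ge \cT^{\rcEnt}\Psi_k - \rcKL\left(V_k - \Psi_k\right) - \gamma\rcEnt\log|\cA|\,\one
  .
\end{align*}
Thus $\cT^{\bALc\bALs}_{\pi_{k+1}}$ behaves like the $\cT'$ of Theorem~\ref{thm:convergent_operators} up to an additive constant, and since everything is routed through $\cT^{\rcEnt}$ and the non-positivity of $A_k$, the constants $c_{\bALc},c_{\bALs}$ never enter, so the argument covers $\bALc=\bALs=\Id$ uniformly.

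From here I would argue in four steps. (1) Upper bound on $V_k$: by Lemma~\ref{lem:BGI_is_bounded}, monotonicity of $\bL^{\rcEnt}$ (Lemma~\ref{lem:L_is_continuous_and_increasing}) and Lemma~\ref{lem:L_is_no_less_than_max}, $V_{k+1}=\bL^{\rcEnt}\Psi_{k+1}\le\bL^{\rcEnt}(R+\gamma PV_k)\le\Rmax+\gamma\nInf{V_k}+\rcEnt\log|\cA|$. (2) Lower bound on $V_k$: under the hypothesis of Lemma~\ref{lem:V_fg_is_convergent} we have $V_{k+1}-V_k\ge\gamma P^{\pi_{k+1}}(V_k-V_{k-1})$, so telescoping and using that each $P^{\pi}$ is an $\infty$-norm nonexpansion yields $V_{k+1}\ge V_0-\frac1{1-\gamma}\nInf{V_1-V_0}$; here $\nInf{V_1-V_0}$ is finite and explicit, combining the step-(1) bound on $V_1$ with the lower bound on $\Psi_1$ from the displayed inequality (whence $V_1=\bL^{\rcEnt}\Psi_1\ge\min_{s,a}\Psi_1(s,a)$). (3) Induction on $k$: choosing $M_V=\nInf{V_0}+\frac1{1-\gamma}\nInf{V_1-V_0}+\frac{\Rmax+\rcEnt\log|\cA|}{1-\gamma}$ makes the upper inequality in (1) self-reproducing (since $\Rmax+\gamma M_V+\rcEnt\log|\cA|\le M_V$) and dominates the lower bound of (2), giving $\nInf{V_k}\le M_V$ for all $k$. (4) Boundedness of $\Psi_k$: the upper bound is immediate, $\Psi_{k+1}=\cT^{\bALc\bALs}_{\pi_{k+1}}\Psi_k\le\cT^{\rcEnt}\Psi_k=R+\gamma PV_k\le(\Rmax+\gamma M_V)\one$; for the lower bound the displayed inequality gives $\Psi_{k+1}\ge R+\gamma PV_k-\rcKL V_k+\rcKL\Psi_k-\gamma\rcEnt\log|\cA|\one\ge\rcKL\Psi_k-c\,\one$ with $c=\Rmax+(\gamma+\rcKL)M_V+\gamma\rcEnt\log|\cA|>0$, and unrolling this geometric recursion yields $\Psi_k\ge\rcKL^k\Psi_0-\frac{c}{1-\rcKL}\one\ge-\nInf{\Psi_0}-\frac{c}{1-\rcKL}$.

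I expect the real work to be bookkeeping rather than any new idea: carrying the extra $-\gamma\rcEnt\log|\cA|$ term through the lower-bound estimates (step (2) and the lower half of step (4)) and verifying that $\nInf{V_1-V_0}$ admits an explicit finite bound, which is the only place where finiteness of $\nInf{\Psi_0}$ (hence of $\nInf{V_0}$) is used to seed the inductions. No contraction or fixed-point argument is needed; this lemma is precisely the step that upgrades the extended-real ``convergence'' obtained in Lemma~\ref{lem:V_fg_is_convergent} to genuine boundedness, and it must therefore stand on its own.
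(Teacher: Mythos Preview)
Your proposal is correct and follows essentially the same route as the paper: the paper's proof is a one-paragraph sketch that invokes precisely the two ingredients you spell out --- Lemma~\ref{lem:BGI_is_bounded} for the upper side and the one-step inequality $V_{k+1}-V_k\ge\gamma P^{\pi_{k+1}}(V_k-V_{k-1})$ from Lemma~\ref{lem:V_fg_is_convergent} --- and then reruns the arguments of Lemmas~\ref{lem:V_is_bounded} and~\ref{lem:Psi_is_bounded} verbatim. Your explicit lower sandwich $\cT^{\bALc\bALs}_{\pi_{k+1}}\Psi_k\ge\cT^{\rcEnt}\Psi_k-\rcKL(V_k-\Psi_k)-\gamma\rcEnt\log|\cA|$ is in fact a mild sharpening of what the paper states, since it is free of $c_\bALc,c_\bALs$ (and hence covers $\bALc=\bALs=\Id$ uniformly), whereas the paper remarks without details that the $\Psi$-lower bound depends on these constants.
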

\begin{proof}
  Since the proof of Lemma \ref{lem:V_is_bounded} relies on the two inequalities
  $\cT' \Psi \le \cT^{\rcEnt} \Psi$ and
  $
    V_{k+1} - V_{k}
    \ge \gamma P^{\pi_{k+1}}(V_{k} - V_{k-1})
  $,
  the boundedness of $(V_{k})_{k \in \bN}$ follows from the identical steps given Lemma \ref{lem:BGI_is_bounded} and
  Lemma \ref{lem:V_fg_is_convergent}.
  Furthermore, following the proof of Lemma \ref{lem:Psi_is_bounded},
  we can show that the sequence $(\Psi_{k})_{k \in \bN}$ is also bounded,
  where its lower bound has dependencies to $c_{\bALc}$ and $c_{\bALs}$.
\end{proof}

\subsubsection{Proof of Proposition \ref{pp:sufficient_condition_for_bal_fg_to_converge_with_c_fg}}

We are ready to prove Proposition \ref{pp:sufficient_condition_for_bal_fg_to_converge_with_c_fg},
which has an improved lower bound with an explicit dependency to $c_\bALc$ and $\bALs$ compared to Theorem \ref{thm:convergent_operators}.

\begin{proposition}[
    Proposition \ref{pp:sufficient_condition_for_bal_fg_to_converge_with_c_fg} in the main text
  ]
  Consider the sequence $\Psi_{k+1} \defeq \cT^{\bALc\bALs}_{\pi_{k+1}} \Psi_{k}$
  produced by the BAL operator \eqref{eq:bal_operator}
  with $\Psi_0 \in \bR^{\cS\times\cA}$,
  and let $V_{k} = \bL^{\rcEnt} \Psi_{k}$.
  Assume that for all $k \in \bN$ it holds that
  \begin{align*}
    \cKL \kl(\pi_{k+1}\|\pi_{k})
    - \gamma P^{\pi_{k+1}} \left(
      \rcEnt \cH(\pi_{k+1}) + \dprod{\pi_{k+1}}{\bALs(A_{k})}
    \right) \ge 0
    .
  \end{align*}
  Then, the sequence $(V_{k})_{k \in \bN}$ converges,
  and the limit
  $
  \tilde{V} = \lim_{k \to \infty} V_{k}
  $
  satisfies
  $
    V^{*}_{\rcEnt} - \frac{1}{1-\gamma}\left(\rcKL c_\bALc + \gamma \rcEnt \bar{\Delta}_{\bALs} \log|\cA|\right)
    \le
    \tilde{V}
    \le V^{*}_{\rcEnt}
  $, where
  $\bar{\Delta}_{\bALs} = \sup_{z<0} \left(1 - \frac{\bALs(\rcEnt z)}{\rcEnt z}\right)$ and $0\le\bar{\Delta}_{\bALs}\le 1$.
  Furthermore, $\limsup_{k \to \infty} \Psi_{k} \le Q_{\rcEnt}^{*}$ and
  $\liminf_{k \to \infty} \Psi_{k}
    \ge \tilde{Q} - \left(
      \rcKL c_\bALc
      + \gamma \rcEnt \bar{\Delta}_{\bALs} \log|\cA|
    \right)
  $,
  where $\tilde{Q} = R + \gamma P \tilde{V}$.
\end{proposition}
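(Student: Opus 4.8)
The plan is to follow the architecture of the proof of Theorem~\ref{thm:convergent_operators}, but with the generic operator sandwich replaced by a much sharper one that holds for the BAL operator. The first and only genuinely new ingredient is to show that, for every $k$,
\[
  \cT^{\rcEnt}\Psi_k \;\ge\; \cT^{\bALc\bALs}_{\pi_{k+1}}\Psi_k \;\ge\; \cT^{\rcEnt}\Psi_k - \beta\one,
  \qquad \beta \defeq \rcKL c_{\bALc} + \gamma\rcEnt\bar{\Delta}_{\bALs}\log|\cA|.
\]
The left inequality is exactly Lemma~\ref{lem:BGI_is_bounded}. For the right one I would write $\cT^{\bALc\bALs}_{\pi_{k+1}}\Psi_k = \cT^{\rcEnt}\Psi_k + \rcKL\bALc(A_k) + \gamma P\dprod{\pi_{k+1}}{A_k - \bALs(A_k)}$, using $\cT^{\rcEnt}\Psi_k = R + \gamma P\dprod{\pi_{k+1}}{\Psi_k - A_k}$, and then bound the two correction terms: $\rcKL\bALc(A_k)\ge-\rcKL c_{\bALc}\one$ from the codomain constraint on $\bALc$, and, since $A_k = \rcEnt\log\pi_{k+1} \le 0$ pointwise, $A_k(s,a) - \bALs(A_k(s,a)) \ge -\bar{\Delta}_{\bALs}\,|A_k(s,a)|$ by the very definition of $\bar{\Delta}_{\bALs}$, so that $\dprod{\pi_{k+1}}{A_k - \bALs(A_k)} \ge -\bar{\Delta}_{\bALs}\dprod{\pi_{k+1}}{-\rcEnt\log\pi_{k+1}} = -\bar{\Delta}_{\bALs}\rcEnt\cH(\pi_{k+1}) \ge -\bar{\Delta}_{\bALs}\rcEnt\log|\cA|\,\one$; monotonicity of $P$ then closes it. The same computation also shows $0\le\bar{\Delta}_{\bALs}\le 1$.

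Convergence of $(V_k)$ and boundedness of $(V_k)$ and $(\Psi_k)$ then come for free from Lemmas~\ref{lem:V_fg_is_convergent} and~\ref{lem:V_and_Psi_are_bounded_for_BAL_fg}, whose hypotheses are precisely the standing assumption~\eqref{eq:sufficient_condition_for_bal_fg_to_converge}. For the two upper bounds I would iterate the left inequality: with $\hat\Psi_0 = \Psi_0$ and $\hat\Psi_{k+1} = \cT^{\rcEnt}\hat\Psi_k$, monotonicity of $\cT^{\rcEnt}$ (which follows from Lemma~\ref{lem:L_is_continuous_and_increasing}) together with $\Psi_{k+1}\le\cT^{\rcEnt}\Psi_k$ give $\Psi_k\le\hat\Psi_k\to Q^*_{\rcEnt}$ by induction, hence $\limsup_k\Psi_k\le Q^*_{\rcEnt}$; then $\tilde V = \limsup_k\bL^{\rcEnt}\Psi_k \le \bL^{\rcEnt}\limsup_k\Psi_k \le \bL^{\rcEnt}Q^*_{\rcEnt} = V^*_{\rcEnt}$ by Lemma~\ref{lem:inequality_of_limsup_and_liminf} and monotonicity of $\bL^{\rcEnt}$.

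The lower bounds carry the new content, and are cleaner here than in Theorem~\ref{thm:convergent_operators} because the offset $\beta\one$ is action-independent. From the right inequality, $\Psi_{k+1} \ge R + \gamma P V_k - \beta\one$; taking $\liminf$ and passing to the limit in $P V_k \to P\tilde V$ (legitimate by boundedness of $(V_k)$, as in~\eqref{eq:dominated_convergence}) gives $\liminf_k\Psi_k \ge \tilde Q - \beta\one$ with $\tilde Q = R + \gamma P\tilde V$, which is already the second stated lower bound. Applying $\bL^{\rcEnt}$, Lemma~\ref{lem:inequality_of_limsup_and_liminf} gives $\bL^{\rcEnt}\liminf_k\Psi_k \le \tilde V$, while monotonicity and Lemma~\ref{lem:identity_if_action_agnostic} give $\bL^{\rcEnt}\liminf_k\Psi_k \ge \bL^{\rcEnt}\tilde Q - \beta$, so $\tilde V \ge \bL^{\rcEnt}(R + \gamma P\tilde V) - \beta$. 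Writing $\mathcal{L}V \defeq \bL^{\rcEnt}(R + \gamma P V)$, note $\mathcal{L}$ is a $\gamma$-contraction in $\nInf{\cdot}$ whose fixed point is $V^*_{\rcEnt}$ (since $Q^*_{\rcEnt} = \cT^{\rcEnt}Q^*_{\rcEnt} = R + \gamma P\bL^{\rcEnt}Q^*_{\rcEnt}$ and $V^*_{\rcEnt}=\bL^{\rcEnt}Q^*_{\rcEnt}$, hence $\mathcal{L}V^*_{\rcEnt} = V^*_{\rcEnt}$), and that $\mathcal{L}(V - c\one) = \mathcal{L}V - \gamma c$ by Lemma~\ref{lem:identity_if_action_agnostic}. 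Iterating $\tilde V \ge \mathcal{L}\tilde V - \beta\one$ therefore yields $\tilde V \ge \mathcal{L}^n\tilde V - \beta\sum_{i=0}^{n-1}\gamma^i$, and letting $n\to\infty$ gives $\tilde V \ge V^*_{\rcEnt} - \beta/(1-\gamma)$, as claimed.

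The step I expect to require the most care is the first one: getting the componentwise inequality $A_k - \bALs(A_k) \ge -\bar{\Delta}_{\bALs}|A_k|$ straight from the definition of $\bar{\Delta}_{\bALs}$ and then routing it through $\cH(\pi_{k+1})\le\log|\cA|$ to an action-independent bound, together with verifying that the contraction $\mathcal{L}$ really has $V^*_{\rcEnt}$ as its fixed point. Everything after that is a careful but routine re-run of the $\limsup$/$\liminf$ manipulations already used for Theorem~\ref{thm:convergent_operators}, now producing the explicit constants $c_{\bALc}$ and $\bar{\Delta}_{\bALs}$.
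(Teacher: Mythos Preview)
Your proposal is correct and follows essentially the same route as the paper's own proof: both establish the key pointwise inequality $\Psi_{k+1} \ge R + \gamma P V_k - \bigl(\rcKL c_{\bALc} + \gamma\rcEnt\bar{\Delta}_{\bALs}\log|\cA|\bigr)$ via the same decomposition $A_k - \bALs(A_k) \ge \bar{\Delta}_{\bALs}A_k$ and the entropy bound $\cH(\pi_{k+1})\le\log|\cA|$, invoke Lemmas~\ref{lem:V_fg_is_convergent} and~\ref{lem:V_and_Psi_are_bounded_for_BAL_fg} for convergence and boundedness, and then run the $\limsup$/$\liminf$ machinery of Theorem~\ref{thm:convergent_operators} to close the upper and lower bounds. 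Your framing of the sandwich and the explicit identification of $\mathcal{L}$ as a $\gamma$-contraction with fixed point $V^*_{\rcEnt}$ are slightly more explicit than the paper's ``using this expression recursively'', but the argument is the same.
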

\begin{proof}
  {\bfseries Upper Bound.}
  Following the identical steps in the proof of Theorem \ref{thm:convergent_operators},
  we obtain the upper bounds
  $
    \tilde{\Psi}
    \defeq \limsup_{k \to \infty} \Psi_{k}
    \le Q_{\rcEnt}^{*}$
  and
  $
    \tilde{V}
    = \lim_{k \to \infty} V_{k}
    = \limsup_{k \to \infty} V_{k}
    \le V_{\rcEnt}^{*}
  $ again from Lemma \ref{lem:BGI_is_bounded}.

  {\bfseries Lower Bound.}
  It holds that
  \begin{align}
    \Psi_{k+1}
    &= \cT^{\bALc\bALs}_{\pi_{k+1}} \Psi_{k}
    \nonumber\\
    &= \cT_{\pi_{k+1}} \Psi_{k} - \gamma P \dprod{\pi_{k+1}}{\bALs(A_{k})} + \rcKL \bALc(A_{k})
    \nonumber\\
    &= R + \gamma P V_{k} + \rcKL \bALc(A_{k}) - \gamma P \dprod{\pi_{k+1}}{- \rcEnt \log \pi_{k+1} + \bALs(\rcEnt \log \pi_{k+1})}
    \label{eq:Psi_inequality_BAL_fg_improved}
    .
  \end{align}
  Let us proceed to upper-bound $
    F(\pi)
    = \dprod{\pi}{- \rcEnt \log \pi + \bALs(\rcEnt \log \pi)}
  $.
  Noticing that $\log \pi \in (-\infty,0)$ since $0<\pi<1$ for $\rcEnt>0$,
  we consider a decomposition
  \begin{align*}
    F(\pi)
      = \dprod{\pi}{- \rcEnt \log \pi \cdot \left(
        1 - \frac{\bALs(\rcEnt \log \pi)}{\rcEnt \log \pi}
      \right)}
      \eqdef
      \dprod{\pi}{- \rcEnt \log \pi \cdot \Delta_{\bALs}(\rcEnt \log \pi)}
    .
  \end{align*}
  Since $z\le \bALs(z) < 0$ for $z < 0$,
  we have $0\le \Delta_{\bALs}(z) = 1 - \frac{\bALs(\rcEnt z)}{\rcEnt z}< 1$.
  In addition, $\Delta_{\bALs}(z)=1$ is also attained by $\bALs\equiv 0$.
  Now, letting $
    \bar{\Delta}_{\bALs} = \sup_{z<0} \Delta_{\bALs}(z)
  $, we have
  \begin{align*}
    F(\pi)
    = \left<\pi, - \rcEnt \log \pi \cdot \Delta_{\bALs}\right>
    \le \rcEnt \bar{\Delta}_{\bALs} \left<\pi, - \log \pi \right>
    \le \rcEnt \bar{\Delta}_{\bALs} \log |\mathcal{A}|
    .
  \end{align*}
  Using the above and from the negativity of the soft advantage and the property of $\bALc$,
  we obtain a lower bound of \eqref{eq:Psi_inequality_BAL_fg_improved} as
  \begin{align*}
    \Psi_{k+1}
    &\ge R + \gamma P V_{k} - \rcKL c_\bALc - \gamma \rcEnt \bar{\Delta}_{\bALs} \log |\mathcal{A}|
  \end{align*}
  From Lemma \ref{lem:V_and_Psi_are_bounded_for_BAL_fg}, the sequence $(\Psi_{k})_{k \in \bN}$ is bounded.
  Now, $V_{k}$ converges to $\tilde{V}$ by Lemma \ref{lem:V_fg_is_convergent}.
  Furthermore, by Lemma \ref{lem:V_and_Psi_are_bounded_for_BAL_fg} and Lebesgue's dominated convergence theorem,
  we have $\lim_{k \to \infty} P V_{k} = P \tilde{V}$.
  Let $\bar{\Psi} \defeq \liminf_{k \to \infty} \Psi_{k}$.
  Taking the $\liminf$ of both sides of
  the above expression,
  we obtain
  \begin{align*}
    \bar{\Psi}
    &\ge R + \gamma P \tilde{V} - \rcKL c_\bALc - \gamma \rcEnt \bar{\Delta}_{\bALs} \log|\cA|
    = \tilde{Q} - \left(\rcKL c_\bALc + \gamma \rcEnt \bar{\Delta}_{\bALs} \log|\cA|\right)
    ,
  \end{align*}
  where $
    \tilde{Q}
    = R + \gamma P \tilde{V}
  $.
  Now,
  from Lemma \ref{lem:inequality_of_limsup_and_liminf} and \ref{lem:V_and_Psi_are_bounded_for_BAL_fg},
  it holds that $
    \bL^{\rcEnt} \liminf_{k \to \infty} \Psi_{k}
    \le
    \liminf_{k \to \infty} \bL^{\rcEnt} \Psi_{k}
  $.
  Thus, applying $\bL^{\rcEnt}$ to the both sides
  and from Lemma \ref{lem:identity_if_action_agnostic},
  we have
  \begin{align*}
    \tilde{V}
      \ge \bL^{\rcEnt} \tilde{Q} - \left(\rcKL c_\bALc + \gamma \rcEnt \bar{\Delta}_{\bALs} \log|\cA|\right)
      = \cT^{\rcEnt} \tilde{V} - \left(\rcKL c_\bALc + \gamma \rcEnt \bar{\Delta}_{\bALs} \log|\cA|\right)
    .
  \end{align*}
  Therefore, using this expression recursively we obtain
  \begin{align*}
    \tilde{V}
    \ge
    V^{*}_{\rcEnt} - \frac{1}{1-\gamma}\left(\rcKL c_\bALc + \gamma \rcEnt \bar{\Delta}_{\bALs} \log|\cA|\right)
    .
  \end{align*}
\end{proof}

\subsection{Proof of Proposition \ref{thm:BAL_soft_advantage_error}}

\begin{proposition}[Proposition \ref{thm:BAL_soft_advantage_error} in the main text]
  Let $(\pi_{k})_{k\in\bN}$ be a sequence of the policies obtained by BAL.
  Defining
  $\Delta^{\bALc\bALs}_{k} = \dprod{\pi^{*}}{
    \rcKL \left(A_{\cEnt}^{*} - {\bALc}(A_{k-1})\right) - \gamma P \dprod{
      \pi_{k}}{A_{k-1} - {\bALs}(A_{k-1})}}
  $,
  it holds that:
  \begin{align*}
   \nInf{V_{\cEnt}^{*}-V_{\cEnt}^{\pi_{K+1}}}
    &\le \frac{2\gamma}{1-\gamma}\left[
      2 \gamma^{K-1} V^\cEnt_{\rm max} +
      \sum_{k=1}^{K-1}\gamma^{K-k-1}\nInf{\Delta_{k}^{\bALc\bALs}}\right]
    .
  \end{align*}
\end{proposition}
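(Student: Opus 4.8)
The plan is to extend the $\ell_\infty$ error-propagation argument underlying Theorem~1 of \citet{Zhang2022} --- which bounds the suboptimality of clipped Advantage Learning by a discounted sum of ``action-gap misspecification'' terms --- to the soft (entropy-regularized) greedy step and to two bounding functions $\bALc,\bALs$ acting on the current and the successor log-policy terms. As in \citet{Zhang2022}, I would run the argument with $\epsilon_k\equiv 0$, since the statement isolates the error \emph{inherent} to the soft gap-increasing backup rather than the approximation/estimation error; this simplification should be stated explicitly.

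First I would rewrite one BAL step so that the bounding perturbation is explicit. Using $A_k=\rcEnt\log\pi_{k+1}=\Psi_k-V_k\le 0$ together with $\dprod{\pi_{k+1}}{A_k}=-\rcEnt\cH(\pi_{k+1})$ and $V_k=\dprod{\pi_{k+1}}{\Psi_k}+\rcEnt\cH(\pi_{k+1})$, the iterate $\Psi_{k+1}=\cT^{\bALc\bALs}_{\pi_{k+1}}\Psi_k$ becomes
\begin{align*}
  \Psi_{k+1} = R + \gamma P V_k + \rcKL\,\bALc(A_k) + \gamma P\dprod{\pi_{k+1}}{A_k-\bALs(A_k)} .
\end{align*}
Because $\pi_{k+1}=\cG^{0,\rcEnt}(\Psi_k)$ is only \emph{soft}-greedy, the Fenchel / soft-greedy inequality $\dprod{\pi}{\Psi_k}+\rcEnt\cH(\pi)\le V_k$, valid for every policy $\pi$, gives pointwise $PV_k \ge P\dprod{\pi^{*}}{\Psi_k} + \rcEnt\,P\cH(\pi^{*})$; substituting this into the display turns the equality into a \emph{lower} bound on $\Psi_{k+1}$ in terms of $\gamma P_{\pi^{*}}\Psi_k$, an entropy-bonus term, and the two bounding terms.

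Second I would telescope against the M-VI fixed point $\Psi^{*}$, which satisfies $\Psi^{*}=R+\gamma P_{\pi^{*}}\Psi^{*}+\rcEnt\gamma P\cH(\pi^{*})+\rcKL A^{*}_{\cEnt}$ (equivalently $\bL^{\rcEnt}\Psi^{*}=V^{*}_{\cEnt}$, with $A^{*}_{\cEnt}=\Psi^{*}-V^{*}_{\cEnt}$). Subtracting it from the Step-1 lower bound, the reward and the $\rcEnt\gamma P\cH(\pi^{*})$ pieces cancel, leaving $\Psi_{k+1}-\Psi^{*}\ge \gamma P_{\pi^{*}}(\Psi_k-\Psi^{*})-\delta_{k+1}$ with $\delta_{k+1}=\rcKL\big(A^{*}_{\cEnt}-\bALc(A_k)\big)-\gamma P\dprod{\pi_{k+1}}{A_k-\bALs(A_k)}$, and one checks that $\dprod{\pi^{*}}{\delta_{k+1}}$ equals $\Delta^{\bALc\bALs}_{k+1}$ once $A_k=\rcEnt\log\pi_{k+1}$ is substituted. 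Unrolling this inequality through the nonexpansive kernels $(\gamma P_{\pi^{*}})^j$ bounds $\Psi^{*}-\Psi_K$ --- and hence, by the $\infty$-norm nonexpansiveness of $\bL^{\rcEnt}$ from Appendix~\ref{ss:appendix:proofs:L}, also $V^{*}_{\cEnt}-V_K$ --- by a geometric sum of the $\nInf{\Delta^{\bALc\bALs}_k}$ plus a deepest-level term of order $\gamma^{K}\nInf{\Psi_0-\Psi^{*}}$, controlled by the a priori bounds on $\Psi_0$, $\Psi^{*}$ and (via Lemma~\ref{lem:BGI_is_bounded}) on $\Psi_K$. A final, standard soft performance-difference step --- comparing the soft value of $\pi_{K+1}=\cG^{0,\rcEnt}(\Psi_K)$ against $V^{*}_{\cEnt}$, as in \citet{Vieillard2020a} --- converts this into the stated bound on $\nInf{V^{*}_{\cEnt}-V^{\pi_{K+1}}_{\cEnt}}$: it produces the prefactor $2\gamma/(1-\gamma)$, reconciles the $\rcEnt$- and $\cEnt$-regularization levels, shifts the summation to $k=1,\dots,K-1$ and the initialization term to $2\gamma^{K-1}V^{\cEnt}_{\rm max}$, and, using $\nInf{P_{\pi}f}\le\nInf{f}$, yields Eq.~\eqref{eq:bal_suboptimality}.

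The main obstacle is the index bookkeeping of the two interleaved recursions: $\Psi_k$ feeds both the reward/transition backup \emph{and} the policy $\pi_{k+1}$ that multiplies the \emph{next} backup, so the unrolling must keep the off-by-one between $\pi_{k+1}$, $A_k=\rcEnt\log\pi_{k+1}$ and $\Psi_k$ consistent with the $(k-1)$-vs-$k$ indexing in the definition of $\Delta^{\bALc\bALs}_k$, and the passage from the bound on $\Psi^{*}-\Psi_K$ to a bound on the \emph{policy} value $V^{\pi_{K+1}}_{\cEnt}$ is what absorbs the last telescoped term --- leaving only $\Delta^{\bALc\bALs}_1,\dots,\Delta^{\bALc\bALs}_{K-1}$ --- and fixes the precise $\gamma$-powers. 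By contrast the only genuinely ``soft'' subtlety, namely that $\pi_{k+1}$ is soft-greedy so that Step~1 must use the Fenchel inequality and brings in the optimal soft advantage $A^{*}_{\cEnt}$ rather than a hard advantage, is benign: the entropy-bonus terms on the two sides of the telescoping cancel outright.
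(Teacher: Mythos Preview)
Your proposal is correct and uses the same three ingredients as the paper --- the Fenchel/soft-greedy inequality, a telescoping recursion against the optimum, and a soft performance-difference step --- but assembles them in a different order and at a different level. The paper does the performance-difference step \emph{first}, reducing the task to bounding $V^{*}_{\cEnt}-V_{K-1}$, and then telescopes directly at the state-value level: it writes $V^{*}_{\cEnt}-V_k=\gamma P^{\pi^{*}}(V^{*}_{\cEnt}-V_{k-1})+\bigl(\cT^{0,\cEnt}_{\pi^{*}}V_{k-1}-\bL^{\rcEnt}\Psi_k\bigr)$ and applies Fenchel to the outer $\bL^{\rcEnt}\Psi_k\ge\dprod{\pi^{*}}{\Psi_k}+\rcEnt\cH(\pi^{*})$, so that the residual $\Delta^{\bALc\bALs}_k=\dprod{\pi^{*}}{\cdot}$ on $\cS$ appears in one step and no reference to an M-VI fixed point $\Psi^{*}$ is needed. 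Your route instead telescopes at the $\Psi$ level against $\Psi^{*}$, applying Fenchel to the \emph{inner} $V_k$ inside the backup; this produces a residual $\delta_k$ on $\cS\times\cA$, and you then need the observation $P_{\pi^{*}}\delta_k=P\dprod{\pi^{*}}{\delta_k}=P\Delta^{\bALc\bALs}_k$ to convert all but the last telescoped term into $\nInf{\Delta^{\bALc\bALs}_k}$, with the final PD step absorbing that last term and fixing the range $k=1,\dots,K-1$. The paper's ordering buys a cleaner bookkeeping (no $\Psi^{*}$, no $\delta_k$-vs-$\Delta^{\bALc\bALs}_k$ conversion, and the $K-1$ range of the sum is immediate), while your ordering makes the M-VI fixed-point structure more explicit; either way the content is the same.
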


\begin{proof}
  For the policy $\pi_{k+1} = \cG^{0,\rcEnt}(\Psi_{k})$,
  the operator $\cT_{\pi_{k+1}}^{0,\cEnt}$ is a contraction map.
  Let $V_{\cEnt}^{\pi_{K+1}}$ denote the fixed point of
  $\cT_{\pi_{K+1}}^{0,\cEnt}$, that is,
  $V_{\cEnt}^{\pi_{K+1}} = \cT_{\pi_{K+1}}^{0,\cEnt} V_{\cEnt}^{\pi_{K+1}}$.
  Observing that $
  \pi_{k+1} = \cG_{\pi_k}^{\cKL,\cEnt}(Q_{k}) = \cG_{\pi_k}^{\cKL,\cEnt}(R + \gamma P V_{k-1})
  $,
  we have for $K \ge 1$,
  \begin{align}
    V_{\cEnt}^{*} - V_{\cEnt}^{\pi_{K+1}}
    &= \cT_{\pi^{*}}^{0,\cEnt} V_{\cEnt}^{*}
     - \cT_{\pi^{*}}^{0,\cEnt} V_{K-1}
     + \cT_{\pi^{*}}^{0,\cEnt} V_{K-1}
     - \cT^{\cEnt} V_{K-1}
     + \cT^{\cEnt} V_{K-1}
     - \cT_{\pi_{K+1}}^{0,\cEnt} V_{\cEnt}^{\pi_{K+1}}
    \nonumber\\
    \overset{(a)}&{\le}
    \gamma P^{\pi^{*}} (V_{\cEnt}^{*} - V_{K-1})
    + \gamma P^{\pi_{K+1}} (V_{K-1} - V_{\cEnt}^{\pi_{K+1}})
    \nonumber\\
    &= \gamma P^{\pi^{*}} (V_{\cEnt}^{*} - V_{K-1})
    + \gamma P^{\pi_{K+1}} (V_{K-1} - V_{\cEnt}^{*} + V_{\cEnt}^{*} - V_{\cEnt}^{\pi_{K+1}})
    \nonumber\\
    &= \left(\Id - \gamma P^{\pi_{K+1}}\right)^{-1}\bigl(
      \gamma P^{\pi^{*}} - \gamma P^{\pi_{K+1}}
      \bigr) \left(V_{\cEnt}^{*} - V_{K-1}\right)
    \label{eq:BAL_telescopic_bound}
    ,
  \end{align}
  where (a) follows from
  $\cT_{\pi^{*}}^{0,\cEnt} V_{K-1} \le \cT^{\cEnt} V_{K-1} = \cT_{\pi_{K+1}}^{0,\cEnt} V_{K-1}$
  and the definition of $\cT_{\pi}^{0,\cEnt}$.

  We proceed to bound the term $V_{\cEnt}^{*} - V_{K-1}$:
  \begin{align*}
    V_{\cEnt}^{*} - V_{K-1}
    &= \cT_{\pi^{*}}^{0,\cEnt} V_{\cEnt}^{*}
     - \cT_{\pi^{*}}^{0,\cEnt} V_{K-2}
     + \cT_{\pi^{*}}^{0,\cEnt} V_{K-2}
     - \bL^{\rcEnt} \Psi_{K-1}
    \\
    &= \gamma P^{\pi^{*}} \left(V_{\cEnt}^{*} - V_{K-2}\right)
     + \Delta_{K-1}
    ,
  \end{align*}
  where $\Delta_{K-1} = \cT_{\pi^{*}}^{0,\cEnt} V_{K-2} - \bL^{\rcEnt} \Psi_{K-1}$.
  Observing that
  \begin{align*}
    \bL^{\rcEnt} \Psi_{K-1}
    &= \dprod{\pi_{K}}{\Psi_{K-1}} + \rcEnt \cH(\pi_{K})
    \\
    &= \max_{\pi} \dprod{\pi}{\Psi_{K-1}} + \rcEnt \cH(\pi)
    \\
    &\ge \dprod{\pi^{*}}{\Psi_{K-1}} + \rcEnt \cH(\pi^{*})
    \\
    &= \dprod{\pi^{*}}{
      R + \rcKL {\bALc}(A_{K-2}) + \gamma P \dprod{\pi_{K-1}}{\Psi_{K-2} - {\bALs}(A_{K-2})}
    } + (\cEnt + \rcKL\rcEnt) \cH(\pi^{*})
    ,
  \end{align*}
  we have
  \begin{align*}
    \Delta_{K-1}
    &= \dprod{\pi^{*}}{R + \gamma P V_{K-2}} + \cEnt \cH(\pi^{*}) - \bL^{\rcEnt} \Psi_{K-1}
    \\
    &\le
      \dprod{\pi^{*}}{\gamma P V_{K-2}} - \dprod{\pi^{*}}{
      \rcKL {\bALc}(A_{K-2}) + \gamma P \dprod{\pi_{k-1}}{\Psi_{K-2} - {\bALs}(A_{K-2})}
      } - \rcKL\rcEnt \cH(\pi^{*})
    \\
    &=
    \dprod{\pi^{*}}{
      \rcKL \left(A_{\cEnt}^{*} - {\bALc}(A_{K-2})\right) - \gamma P \dprod{\pi_{K-1}}{A_{K-2} - {\bALs}(A_{K-2})}
    }
    \\
    &\eqdef \Delta^{\bALc\bALs}_{K-1}
    .
  \end{align*}
  Thus, it follows that
  \begin{align*}
    V_{\cEnt}^{*} - V_{K-1}
    &\le \gamma P^{\pi^{*}} \left(V_{\cEnt}^{*} - V_{K-2}\right) + \Delta^{\bALc\bALs}_{K-1}
    \\
    &\le
      \bigl(\gamma P^{\pi^{*}}\bigr)^{K-1} \left(V_{\cEnt}^{*} - V_{0}\right)
      + \sum_{k=1}^{K-1} \bigl(\gamma P^{\pi^{*}}\bigr)^{K-k-1} \Delta^{\bALc\bALs}_{k}
    .
 \end{align*}
 Plugging the above into \eqref{eq:BAL_telescopic_bound} and
 taking $\nInf{\cdot}$ on both sides, we obtain
 \begin{align*}
   \nInf{V_{\cEnt}^{*}-V_{\cEnt}^{\pi_{K+1}}}
    &\le \frac{2\gamma}{1-\gamma}\left[
      2 \gamma^{K-1} V^\cEnt_{\rm max} +
      \sum_{k=1}^{K-1}\gamma^{K-k-1}\nInf{\Delta_{k}^{\bALc\bALs}}\right]
    .
 \end{align*}
\end{proof}

\subsection{On Corollary \ref{cor:error_reduction_property}}
\label{ss:appendix:proof:error_reduction_property}

  Recall that we consider the decomposition of the inherent error
  \begin{align*}
    \Delta^{\bALc\bALs}_{k} = \dprod{\pi^{*}}{
      \rcKL \left(A_{\cEnt}^{*} - {\bALc}(A_{k-1})\right) - \gamma P \dprod{
      \pi_{k}}{A_{k-1} - {\bALs}(A_{k-1})}}
  \end{align*}
  as
  \begin{align*}
   \Delta^{\bALc\bALs}_{k}
   =
   \Delta^{\rX\bALc}_{k} + \Delta^{\cH\bALs}_{k}
   ,
  \end{align*}
  where
  \begin{align*}
    \Delta^{\rX\bALc}_{k} = - \rcKL \dprod{\pi^{*}}{{\bALc}(A_{k-1})}
  \end{align*}
  and
  \begin{align*}
    \Delta^{\cH\bALs}_{k} = \dprod{\pi^{*}}{
      \rcKL A_{\cEnt}^{*} - \gamma P \dprod{
      \pi_{k}}{A_{k-1} - {\bALs}(A_{k-1})}}
    .
  \end{align*}

  To ease the exposition, first let us consider the case
  $\rcEnt\to 0$ while keeping $\rcKL>0$ constant,
  which corresponds to KL-only regularization.
  Then, noticing that we have $\cG^{0,0}(\Psi) = \cG(\Psi)$,
  $\bL^{\rcEnt}\Psi(s) \to \max_{b\in\cA} \Psi(s,b)$ and $\bALs(0)\!\!=\!\! 0$,
  it follows that the entropy terms are equal to zero:
  \begin{align*}
    \dprod{\pi^{*}}{A^{*}}
    = \dprod{\pi_{k+1}}{A_{k}}
    = \dprod{\pi_{k+1}}{\bALs(A_{k})}
    = 0
    .
  \end{align*}
  Thus, $\Delta^{\bALc\bALs}_{k}$ reduces to
  \begin{align*}
    \Delta^{\rX\bALc}_{k} = - \rcKL \dprod{\pi^{*}}{{\bALc}(A_{k-1})}
  \end{align*}
  and
  \begin{align*}
     \Delta^{\rX\bALc}_{k}(s)
     = - \rcKL \bALc \left(\Psi_{k-1}(s,\pi^{*}(s)) - \Psi_{k-1}(s, \pi_{k}(s))\right)
    .
  \end{align*}
  Therefore, $\Delta_{k}$ represents the \emph{error incurred by the misspecification of the optimal policy}.
  For AL, the error is
  \begin{align*}
     \Delta^{\rX\Id}_{k}(s)
     = \rcKL\left(\Psi_{k-1}(s, \pi_{k}(s)) - \Psi_{k-1}(s,\pi^{*}(s))\right)
    .
  \end{align*}
  Since both AL and BAL are optimality-preserving for $\rcEnt\to 0$,
  we have $\|{\Delta^{\rX\Id}_{k}}\|_{\infty} \to 0$ and
  $\|{\Delta^{\rX\bALc}_{k}}\|_{\infty} \to 0$ as $k\to\infty$.
  Howerver, their convergence speed is governed by the magnitude of
  $\|{\Delta^{\rX\Id}_{k}}\|_{\infty}$ and $\|{\Delta^{\rX\bALc}_{k}}\|_{\infty}$
  at finite $k$, respectively.
  We remark that for all $k$ it holds that
  $|\Delta^{\rX\bALc}_{k}| \le |\Delta^{\rX\Id}_{k}|$ point-wise.
  Indeed, from the non-positivity of $A_k$ and the requirement to $\bALc$,
  we always have $A_k = \Id(A_k) \le \bALc(A_k)$ point-wise
  and then $-\rcKL \Id(A_k(s,a)) \ge - \rcKL \bALc(A_k(s,a))$ for all $(s,a)$ and $k$,
  both sides of which are non-negative.
  Thus, we have
  $\left< \pi^\ast, - \rcKL \bALc(A_{k-1}) \right>
  \le \left< \pi^\ast, - \rcKL I(A_{k-1}) \right>$
  point-wise and then $|\Delta_k^{\rX\bALc}| \le |\Delta_k^{\rX\Id}|$.
  Further, we have
  $\|{\Delta^{\rX\Id}_{k}}\|_{\infty} \le \frac{2 R_{\rm \max}}{1-\gamma}$ for AL
  while
  $\|{\Delta^{\rX\bALc}_{k}}\|_{\infty} \le c_{\bALc}$ for BAL.
  Therefore, BAL has better convergence property than AL
  by a factor of the horizon $1/(1-\gamma)$
  when $\Psi_{k}$ is far from optimal.

  For the case $\rcEnt>0$,
  $\|{\Delta^{\bALc\bALs}_{k}}\|_{\infty} \to 0$ does not hold in general.
  Further, the entropy terms are no longer equal to zero.
  However, the cross term, which is an order of $1/(1-\gamma)$,
  is much larger unless the action space is extremely large
  since the entropy is an order of $\log |\cA|$ at most,
  and is always decreased by $\bALc\ne\Id$.
  Furthermore,
  we can expect that
  $\bALs\ne\Id$ decreases the error
  $
   \Delta^{\cH\bALs}_{k}
  $,
  though it is \emph{not always} true.
  If $\bALs\ne\Id$, the entropy terms reduce to
  $\Delta_k^{\cH\Id} = \left< \pi^\ast, \rcKL A_{\cEnt}^{\ast} \right>$.
  Since $A_{k-1}$ is non-positive,
  we have $A_{k-1} - \bALs(A_{k-1})\le 0$ from the requirements to $\bALs$.
  Since the stochastic matrix $P$ is non-negative,
  we have $P \left<\pi_k, A_{k-1} - \bALs(A_{k-1}) \right> \le 0$,
  where the l.h.s. represents the decreased negative entropy of the successor state
  and its absolute value is again an order of $\log |\cA|$ at most.
  Since $A_{\cEnt}^{\ast} \le 0$ also, whose absolute value is an order of  $1/(1-\gamma)$,
  it holds that
  \begin{align*}
     \rcKL A_{\cEnt}^{\ast}
     \le
     \rcKL A_{\cEnt}^{\ast} - \gamma P \left<\pi_k, A_{k-1} - \bALs(A_{k-1}) \right>
  \end{align*}
  and thus
  \begin{align*}
     \Delta_k^{\cH\Id}
     = \left< \pi^\ast, \rcKL A_{\cEnt}^{\ast} \right>
     \le
     \dprod{\pi^{*}}{
       \rcKL A_{\cEnt}^{*} - \gamma P \dprod{
       \pi_{k}}{A_{k-1} - {\bALs}(A_{k-1})}}
      =
      \Delta^{\cH\bALs}_{k}
    .
  \end{align*}
  When $\Delta^{\cH\bALs}_{k}$ is non-positive, it is guaranteed that
  $\bigl|\Delta^{\cH\bALs}_{k}\bigr| \le \bigl|\Delta_k^{\cH\Id}\bigr|$.
  From the property of $\bALs$,
  and noticing that
  $
  \rcEnt \cH(\pi^\ast) = - \left< \pi^\ast, A_{\cEnt}^{\ast} \right>
  $
  and
  $
  \rcEnt \cH(\pi_{k}) = - \left< \pi_{k}, A_{k-1} \right>
  $,
  we have that $\Delta^{\cH\bALs}_{k}$ is non-positive if
  \begin{align*}
    \gamma P^{\pi^\ast} \cH(\pi_{k}) \le \rcKL \cH(\pi^\ast)
    .
  \end{align*}
The discussion above is summarized as the following corollary.

\begin{corollary}[Corollary \ref{cor:error_reduction_property} in the main text]
  It always holds that
  $
    \|{\Delta^{\rX\bALc}_{k}}\|_{\infty}
    \le
    \|{\Delta^{\rX\Id}_{k}}\|_{\infty}
  $
  and each error is upper bounded as
  $\|{\Delta^{\rX\Id}_{k}}\|_{\infty} \le \frac{2 R_{\rm \max}}{1-\gamma}$
  and
  $\|{\Delta^{\rX\bALc}_{k}}\|_{\infty} \le c_{\bALc}$.
  We also have
  $
    \|{\Delta^{\cH\bALs}_{k}}\|_{\infty}
    \le
    \|{\Delta^{\cH\Id}_{k}}\|_{\infty}
  $
  if
  $
    \gamma P^{\pi^\ast} \cH(\pi_{k}) \le \rcKL \cH(\pi^\ast)
  $
  .
\end{corollary}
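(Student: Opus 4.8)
The plan is to work from the decomposition $\Delta^{\bALc\bALs}_{k} = \Delta^{\rX\bALc}_{k} + \Delta^{\cH\bALs}_{k}$ with $\Delta^{\rX\bALc}_{k} = -\rcKL\dprod{\pi^{*}}{\bALc(A_{k-1})}$ and $\Delta^{\cH\bALs}_{k} = \dprod{\pi^{*}}{\rcKL A_{\cEnt}^{*} - \gamma P\dprod{\pi_{k}}{A_{k-1}-\bALs(A_{k-1})}}$, and to use throughout two sign facts: the soft advantage $A_{k-1} = \rcEnt\log\pi_{k}$ is non-positive (Lemma \ref{lem:L_is_no_less_than_max}), and every bounding function $h \in \{\bALc,\bALs\}$ satisfies $x \le h(x) \le 0$ on $x \le 0$ with codomain contained in $[-c_h, c_h]$. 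There are four estimates to establish; three are routine sign bookkeeping and one is the real content.

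First I would handle the cross-term comparison $\nInf{\Delta^{\rX\bALc}_{k}} \le \nInf{\Delta^{\rX\Id}_{k}}$: pointwise $A_{k-1} = \Id(A_{k-1}) \le \bALc(A_{k-1}) \le 0$, so $-\rcKL A_{k-1} \ge -\rcKL\bALc(A_{k-1}) \ge 0$ (multiplying by $-\rcKL<0$), and averaging against the probability vector $\pi^{*}\ge 0$ preserves the inequality between two non-negative quantities, giving $|\Delta^{\rX\bALc}_{k}(s)| \le |\Delta^{\rX\Id}_{k}(s)|$ for all $s$; the supremum then transfers. Next, $\nInf{\Delta^{\rX\bALc}_{k}} \le c_{\bALc}$ follows because $|\bALc(A_{k-1})| \le c_{\bALc}$ pointwise and $\pi^{*}(\cdot|s)$ is a probability distribution, so $|\Delta^{\rX\bALc}_{k}(s)| \le \rcKL c_{\bALc} \le c_{\bALc}$ using $\rcKL\le 1$. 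For the bound $\nInf{\Delta^{\rX\Id}_{k}} \le \frac{2R_{\rm \max}}{1-\gamma}$ I would specialize to the AL regime $\rcEnt\to 0$, where $\pi_{k}$ is greedy w.r.t.\ $\Psi_{k-1}$ and $\pi^{*}$ is deterministic, so $\Delta^{\rX\Id}_{k}(s) = \rcKL\bigl(\Psi_{k-1}(s,\pi_{k}(s)) - \Psi_{k-1}(s,\pi^{*}(s))\bigr)$; invoking boundedness of the iterates within the unregularized range $[-V_{\rm max},V_{\rm max}]$ (from the boundedness lemmas behind the convergence analysis, under the standard initialization) this difference has magnitude at most $2V_{\rm max} = \frac{2R_{\rm \max}}{1-\gamma}$, and $\rcKL\le1$ finishes it.

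The substantive part will be the entropy-term comparison $\nInf{\Delta^{\cH\bALs}_{k}} \le \nInf{\Delta^{\cH\Id}_{k}}$ under the hypothesis $\gamma P^{\pi^{\ast}}\cH(\pi_{k}) \le \rcKL\cH(\pi^{\ast})$. I would sandwich $\Delta^{\cH\bALs}_{k}$ between $\Delta^{\cH\Id}_{k}$ and $0$. On one side, $\Delta^{\cH\Id}_{k} = \rcKL\dprod{\pi^{*}}{A_{\cEnt}^{*}} = -\rcKL\rcEnt\cH(\pi^{*}) \le 0$, and since $A_{k-1}-\bALs(A_{k-1}) \le 0$ pointwise while $P,\pi_{k},\pi^{*}$ are non-negative, the term by which $\Delta^{\cH\bALs}_{k}$ exceeds $\Delta^{\cH\Id}_{k}$ is $\ge 0$, giving $\Delta^{\cH\bALs}_{k} \ge \Delta^{\cH\Id}_{k}$. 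On the other side, using instead $A_{k-1}-\bALs(A_{k-1}) \ge A_{k-1}$ (as $\bALs(A_{k-1})\le 0$), I get $\dprod{\pi_{k}}{A_{k-1}-\bALs(A_{k-1})} \ge \dprod{\pi_{k}}{A_{k-1}} = -\rcEnt\cH(\pi_{k})$, and pushing this lower bound through $P$ and $\dprod{\pi^{*}}{\cdot}$ yields $\Delta^{\cH\bALs}_{k} \le -\rcKL\rcEnt\cH(\pi^{*}) + \gamma\rcEnt P^{\pi^{*}}\cH(\pi_{k}) = \rcEnt\bigl(\gamma P^{\pi^{*}}\cH(\pi_{k}) - \rcKL\cH(\pi^{*})\bigr)$, which is $\le 0$ precisely under the hypothesis. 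Hence $\Delta^{\cH\Id}_{k} \le \Delta^{\cH\bALs}_{k} \le 0$ pointwise, so both quantities lie on the same side of zero with $\Delta^{\cH\bALs}_{k}$ the closer, whence $|\Delta^{\cH\bALs}_{k}(s)| \le |\Delta^{\cH\Id}_{k}(s)|$ for all $s$ and the supremum transfers. Throughout I rely on the identities $\dprod{\pi^{*}}{A_{\cEnt}^{*}} = -\rcEnt\cH(\pi^{*})$ and $\dprod{\pi_{k}}{A_{k-1}} = -\rcEnt\cH(\pi_{k})$, which come from $A_{k-1} = \rcEnt\log\pi_{k}$ and the analogous relation for the reference-optimal policy, exactly as in the proof of Proposition \ref{thm:BAL_soft_advantage_error}.

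The main obstacle is this last sandwich: one must apply two opposite manipulations of $A_{k-1}-\bALs(A_{k-1})$ — bounding it below by $A_{k-1}$ in one place and above by $0$ in the other — and keep careful track of the $\rcEnt$ prefactors so that the hypothesis emerges in the clean form $\gamma P^{\pi^{\ast}}\cH(\pi_{k}) \le \rcKL\cH(\pi^{\ast})$ without a spurious $\rcEnt$ or $\cEnt$ mismatch. Everything else reduces to sign checking against the non-positivity of $A_{k-1}$ and the shape constraints on $\bALc,\bALs$.
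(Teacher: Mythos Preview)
Your proposal is correct and follows essentially the same route as the paper's own argument: the pointwise comparison $A_{k-1}\le\bALc(A_{k-1})\le 0$ for the cross term, the codomain bound $|\bALc|\le c_{\bALc}$, the $\rcEnt\to 0$ specialization together with $|\Psi_{k-1}|\le V_{\max}$ for the $\frac{2R_{\max}}{1-\gamma}$ bound, and the sandwich $\Delta^{\cH\Id}_{k}\le\Delta^{\cH\bALs}_{k}\le 0$ via the two one-sided manipulations of $A_{k-1}-\bALs(A_{k-1})$ combined with the identities $\dprod{\pi^{*}}{A_{\cEnt}^{*}}=-\rcEnt\cH(\pi^{*})$ and $\dprod{\pi_{k}}{A_{k-1}}=-\rcEnt\cH(\pi_{k})$. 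Your derivation of the upper bound $\Delta^{\cH\bALs}_{k}\le\rcEnt\bigl(\gamma P^{\pi^{*}}\cH(\pi_{k})-\rcKL\cH(\pi^{*})\bigr)$ is in fact more explicit than the paper's, which simply asserts non-positivity ``from the property of $\bALs$'' without writing out the intermediate bound $A_{k-1}-\bALs(A_{k-1})\ge A_{k-1}$.
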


\clearpage

\section{ADDITIONAL EXPERIMENTS AND DETAILS}
\label{ss:appendix:experimental_details}

\subsection{BAL on Grid World.}
\label{ss:appendix:experimental_details:toy}

First, we compare the model-based tabular M-VI \eqref{eq:imdvi} and BAL \eqref{eq:bal}.
As discussed by \citet{Vieillard2020a}, the larger the value of $\rcKL$ is, the slower the initial convergence of MDVI gets, and thus M-VI as well.
Since the inherent error reduction by BAL is effective when $\Psi_k$ is far from optimum, it is expected
that BAL is effective especially in earlier stage.
We validate this hypothesis by a gridworld environment,
where transition kernel $P$ and reward function $R$ are accessible.
Figure \ref{fig:maze} shows the grid world environment. The reward is $r=1$ at the top-right and bottom left corners, $r=2$ at the bottom-right corner and $r=0$ otherwise.
The action space is $\cA=\{{\rm North}, {\rm South}, {\rm West}, {\rm East}\}$. An attempted action fails with probability $0.1$ and random action is performed uniformly.
We set $\gamma=0.99$.
We chose $\rcEnt=0.02$ and $\rcKL=0.99$,
thus $\cEnt = (1 - \rcKL) \rcEnt = 0.0002$ and $\cKL = \rcKL \rcEnt = 0.0198$.
Since the transition kernel $P$ and the reward function $R$ are directly available for this environment,
we can perform the model-based M-VI \eqref{eq:imdvi} and BAL \eqref{eq:bal} schemes.
We performed $100$ independent runs with random initialization of $\Psi$ by
$\Psi_{0}(s,a) \sim {\rm Unif}(-V_{\rm max}^{\tau},V_{\rm max}^{\tau})$.
Figure \ref{fig:vi_curve} compares the normalized value of the suboptimality $\|V^{\pi_{k}}-V^{*}_{\cEnt}\|_{\infty}$, where we computed $V^{*}_{\cEnt}$ by the recursion
$
V_{k+1} = \cT^{\cEnt} V_{k} = \bL^{\cEnt}(R + \gamma P V_{k})
$ with $V_{0}(s)=0$ for all state $s\in\cS$.
The IQM is reported as suggested by \citet{Agarwal2021}.
The result suggests that BAL outperforms M-VI initially.
Furthermore, $\bALs\ne\Id$ performs slightly better than $\bALs=\Id$ in the earlier stage, even in this toy problem.
\begin{figure}[h]
  \begin{subfigure}[b]{0.5\textwidth}
    \centering
    \includegraphics[width=0.9\linewidth]{"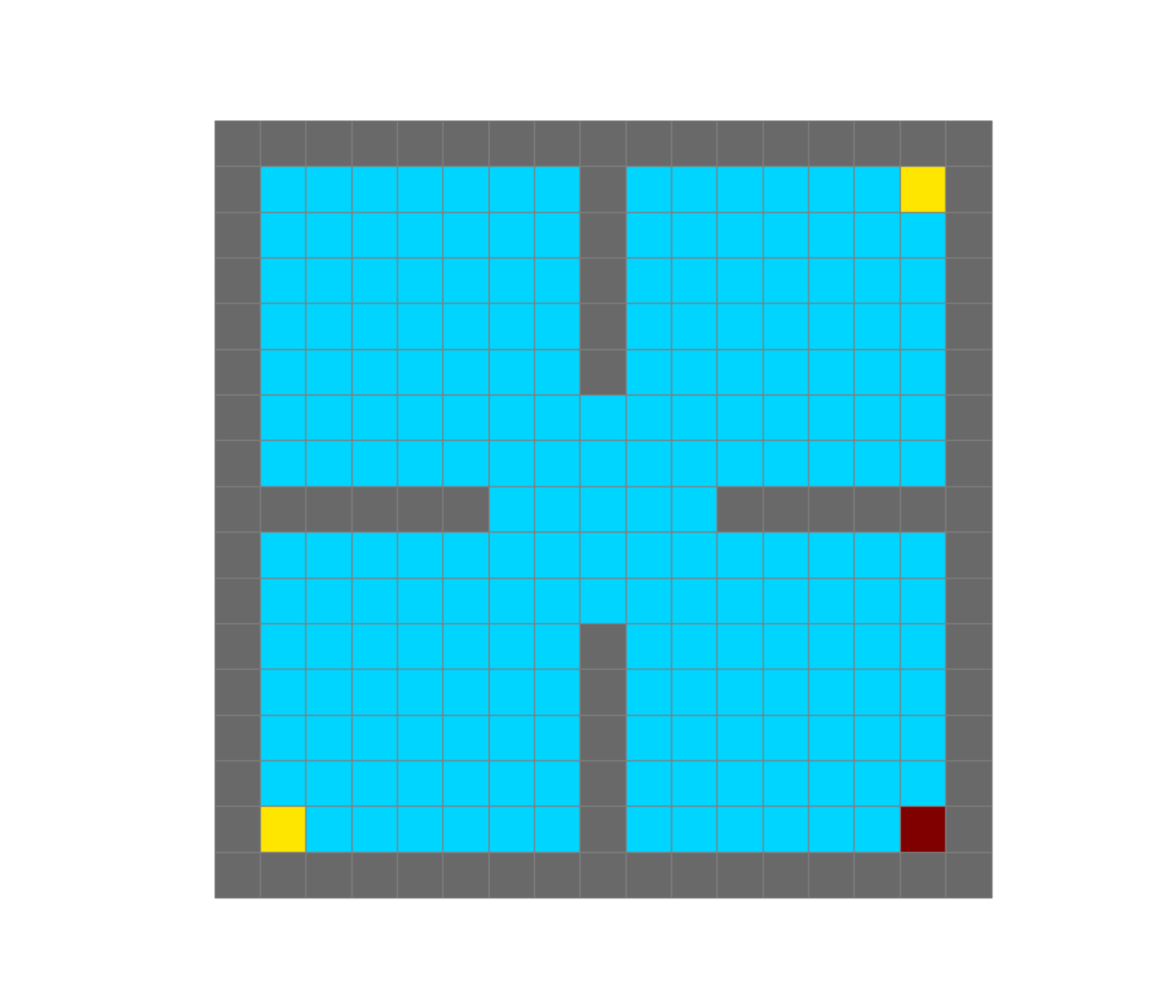"}
    \caption{
      Grid world environment for model-based experiment.
    }
    \label{fig:maze}
  \end{subfigure}%
  \begin{subfigure}[b]{0.5\textwidth}
    \centering
    \includegraphics[width=0.9\linewidth]{"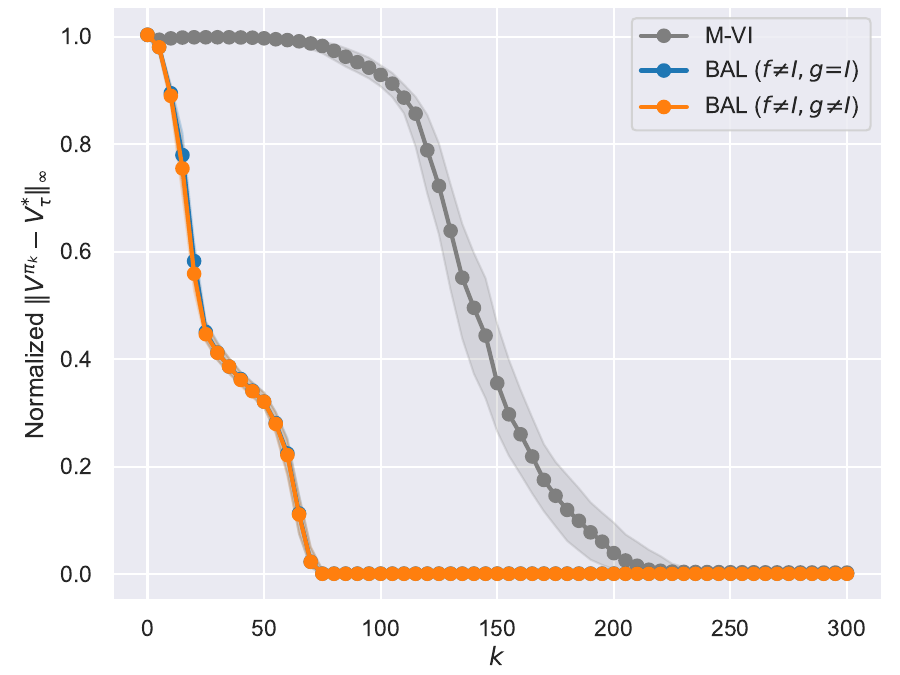"}
    \caption{
      Comparison of
      M-VI and BAL.
    }
    \label{fig:vi_curve}
  \end{subfigure}%
  \caption{
    Grid world environment and results.
  }
  \label{fig:grid_world_experiment}
\end{figure}

\clearpage
\subsection{MDAC on Mujoco}
\label{ss:appendix:mujoco}

\subsubsection{Hyperparameters and Per-environment Results}
\label{ss:appendix:experimental_details:mujoco:detail}

We used
PyTorch\footnote{\url{https://github.com/pytorch/pytorch}}
and
Gymnasium\footnote{\url{https://github.com/Farama-Foundation/Gymnasium}}
for all the experiments.
We used rliable\footnote{\url{https://github.com/google-research/rliable}}
to calculate the IQM scores.
MDAC is implemented based on SAC agent from
CleanRL\footnote{\url{https://github.com/vwxyzjn/cleanrl}}.
Each trial of MDAC run was performed by a single NVIDIA V100 with 8 CPUs
and took approximately 8 hours for 3M environment steps.
For the baselines, we used SAC agent from CleanRL
with default parameters from the original paper.
We used authors' implementations for TD3\footnote{\url{https://github.com/sfujim/TD3}}
and X-SAC\footnote{\url{https://github.com/Div-Infinity/XQL}}
with default hyper-parameters except $\beta$ of Gumbel distribution.
Table \ref{tab:code_and_license} summarizes their verions and licenses.

\begin{table}[h]
  \renewcommand{\arraystretch}{1.1}
  \centering
  \caption{Codes and Licenses}
  \label{tab:code_and_license}
  \vspace{1mm}
  \begin{tabular}{l|l|l}
    \toprule
    Name & Version & License \\
    \midrule
    PyTorch & 2.0.1 & BSD \\
    Gymnasium & 0.29.1 & MIT \\
    DM Control Suite & 1.0.14 & Apache-2.0 \\
    rliable & latest (as of 2024 April) & Apache-2.0 \\
    CleanRL & 1.0.0 & MIT \\
    TD3 & latest (as of 2024 April) & MIT \\
    XQL & latest (as of 2025 September) & - \\
    \bottomrule
  \end{tabular}
\end{table}

Table \ref{tab:shared_params} summarizes the hyperparameter values for MDAC,
which are equivalent to the values for SAC except the additional $\rcKL$.
\begin{table}[h]
  \renewcommand{\arraystretch}{1.1}
  \centering
  \caption{MDAC Hyperparameters}
  \label{tab:shared_params}
  \vspace{1mm}
  \begin{tabular}{l| l }
    \toprule
    Parameter &  Value \\
    \midrule
    optimizer &Adam \citep{Kingma2015} \\
    learning rate & $3 \cdot 10^{-4}$ \\
    discount factor $\gamma$ &  0.99 \\
    replay buffer size & $10^6$ \\
    number of hidden layers (all networks) & 2 \\
    number of hidden units per layer & 256 \\
    number of samples per minibatch & 256 \\
    nonlinearity & ReLU \\
    target smoothing coefficient by polyack averaging ($\polyak$) & 0.005 \\
    target update interval & 1 \\
    gradient steps per environmental step & 1 \\
    reparameterized KL coefficient $\rcKL$ & $1-(1-\gamma)^2$ \\
    entropy target $\bar{\cH}$ to optimize $\cEnt = (1-\rcKL)\rcEnt$ & $-{\rm dim}(\cA)$ \\
    \bottomrule
  \end{tabular}
\end{table}

\paragraph{Per-environment results.}

Here, we provide per-environment results for ablation studies.
Figure
\ref{fig:tanh_works_all},
\ref{fig:ablation_fg_use_both_all},
\ref{fig:ablation_fg_type_all} and
\ref{fig:benchmarking_all}
show the per-environment results for Figure
\ref{fig:tanh_works},
\ref{fig:ablation_fg_use_both},
\ref{fig:ablation_fg_type} and
\ref{fig:benchmarking}, respectively.

\begin{figure}[h]
  \begin{center}
    \includegraphics[width=\linewidth]{"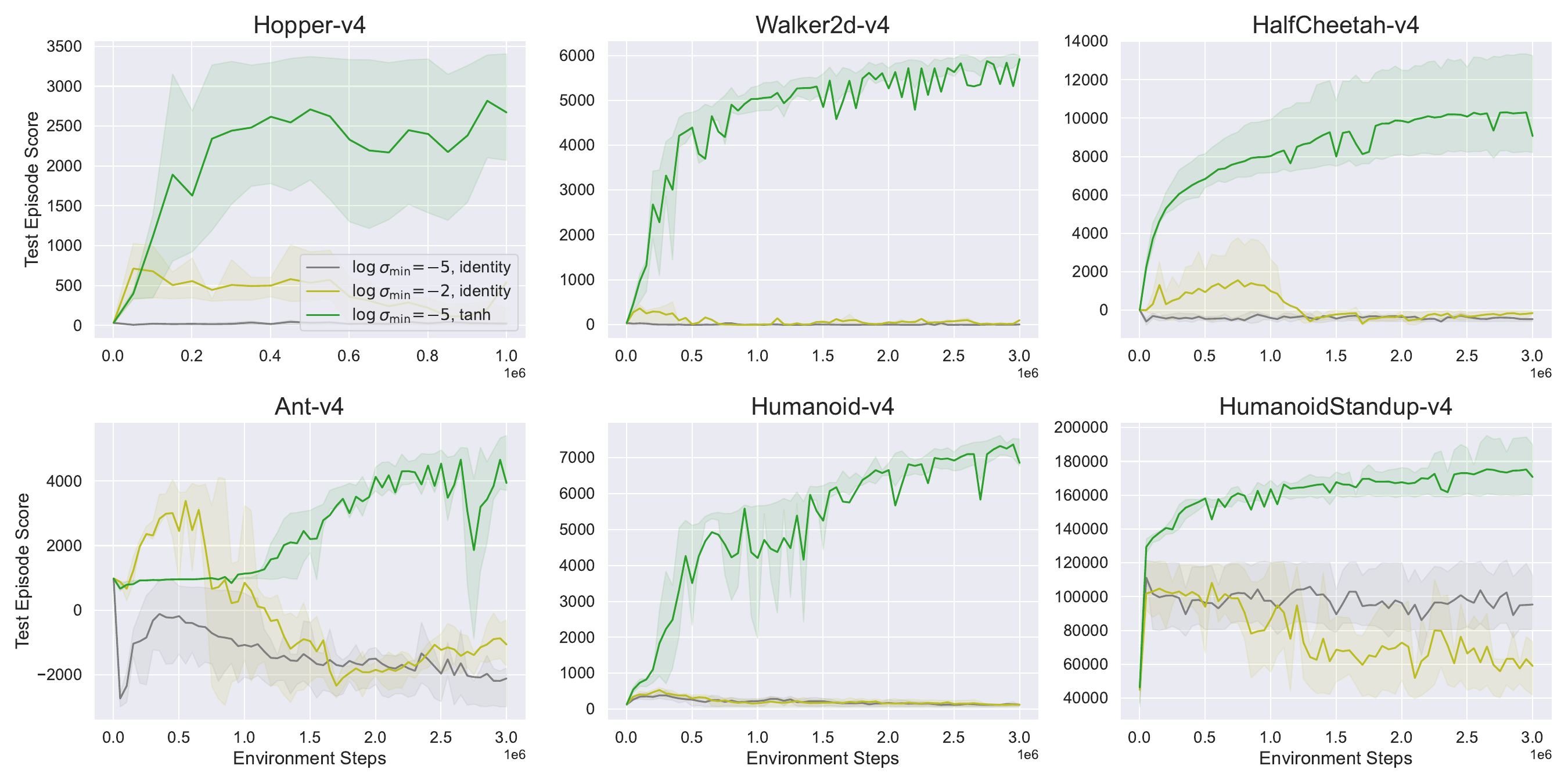"}
    \caption{
      Per-environment performances for Figure \ref{fig:tanh_works}.
      The mean scores of 10 independent runs are reported.
      The shaded region corresponds to $25\%$ and $75\%$ percentile scores over the 10 runs.
    }
    \label{fig:tanh_works_all}
  \end{center}
\end{figure}

\begin{figure}[h]
  \begin{center}
    \includegraphics[width=\linewidth]{"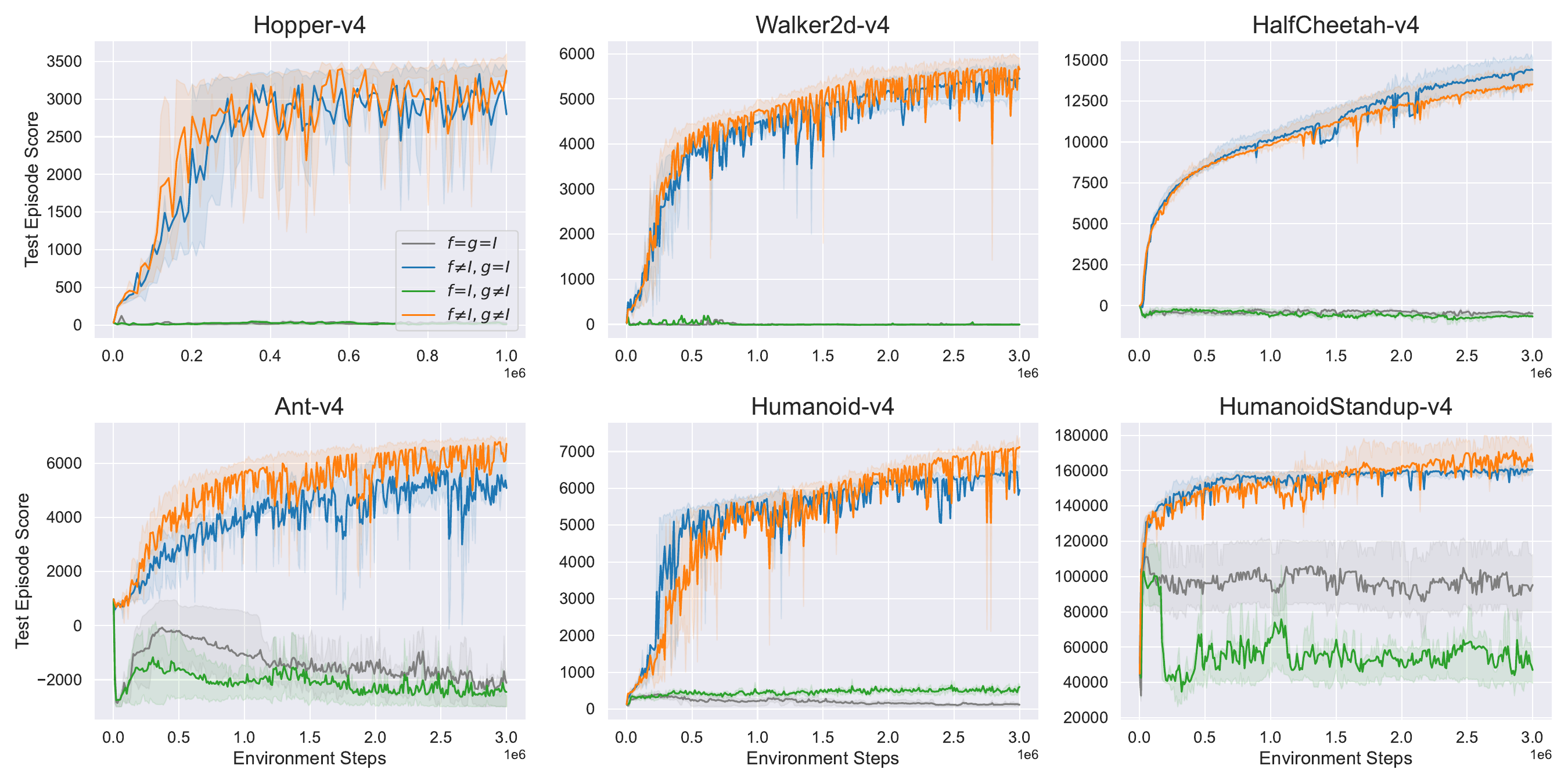"}
    \caption{
      Per-environment performances for Figure \ref{fig:ablation_fg_use_both}.
      The mean scores of 10 independent runs are reported.
      The shaded region corresponds to $25\%$ and $75\%$ percentile scores over the 10 runs.
    }
    \label{fig:ablation_fg_use_both_all}
  \end{center}
\end{figure}

\begin{figure}[h]
  \begin{center}
    \includegraphics[width=\linewidth]{"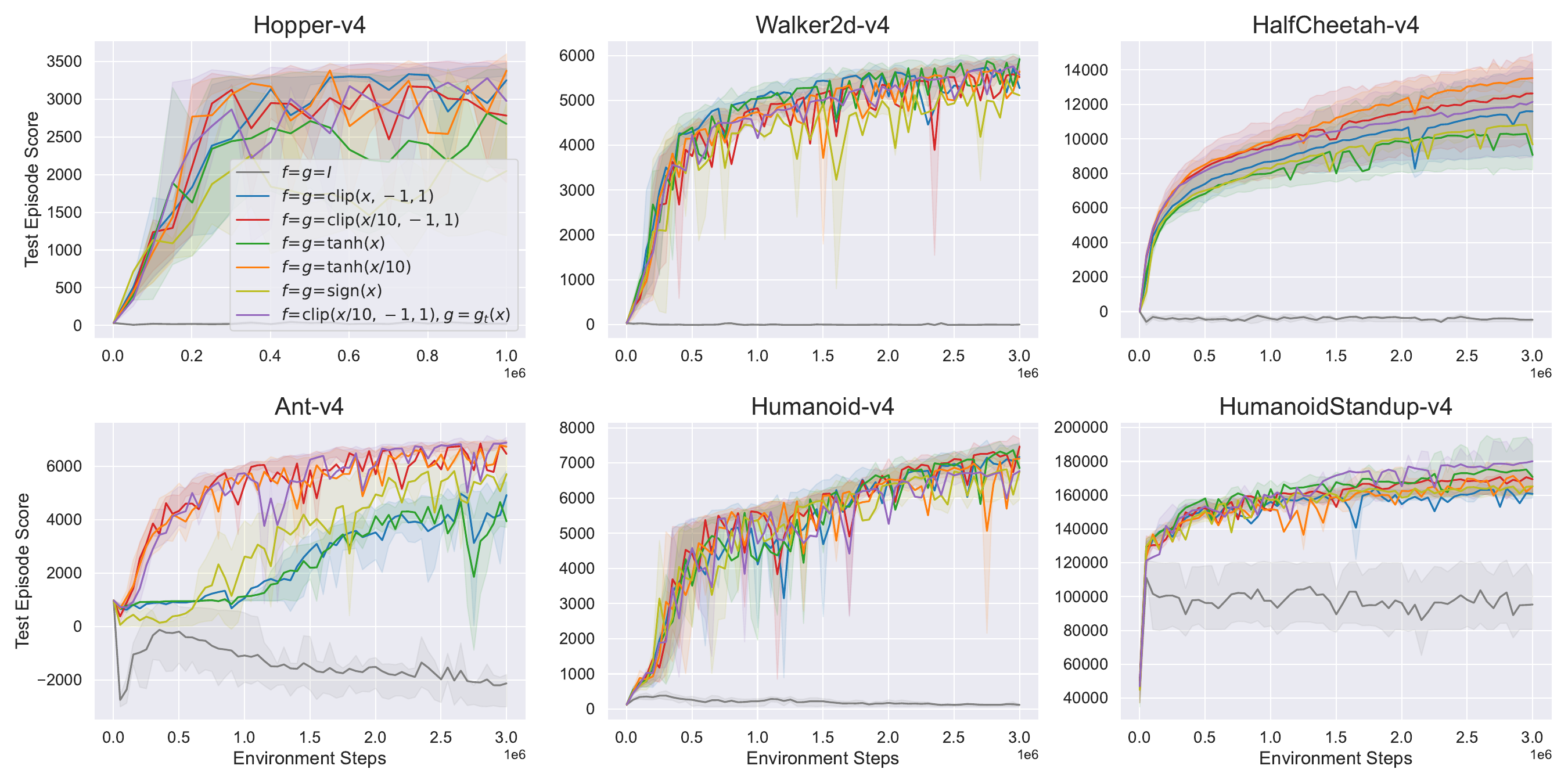"}
    \caption{
      Per-environment performances for Figure \ref{fig:ablation_fg_type}.
      The mean scores of 10 independent runs are reported.
      The shaded region corresponds to $25\%$ and $75\%$ percentile scores over the 10 runs.
    }
    \label{fig:ablation_fg_type_all}
  \end{center}
\end{figure}

\begin{figure}[t]
  \begin{center}
    \includegraphics[width=\linewidth]{"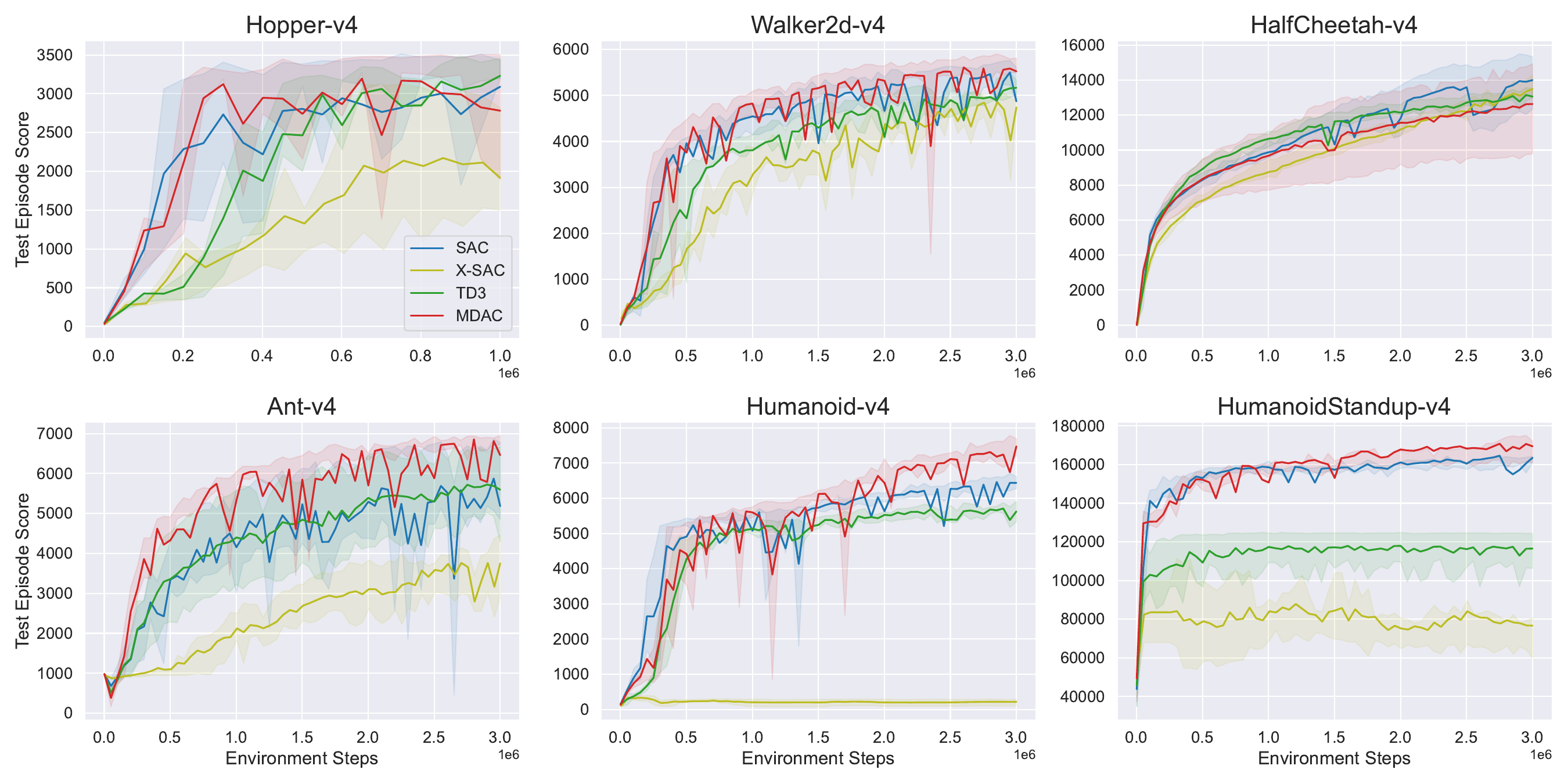"}
    \caption{
      Per-environment performances for Figure \ref{fig:benchmarking}.
      The mean scores of 10 independent runs are reported.
      The shaded region corresponds to $25\%$ and $75\%$ percentile scores over the 10 runs.
    }
    \label{fig:benchmarking_all}
  \end{center}
\end{figure}

\clearpage
\subsubsection{Ablation study for $T_1$ in $\bALs_t$ \eqref{eq:gs_clip_t}}
\label{ss:appendix:experimental_details:mujoco:T1}

Recall that we consider the following time-dependent function $\bALs_t$, which is designed so that it satisfies $\bALs_t\to\Id$ as $t\to\infty$
\begin{align*}
  \begin{cases}
    \tau = \frac{t+T_1}{T_1}, \quad \rho_\tau = \frac{\tau}{\tau+T_2}, \\
    \bALs_t(x) = {\rm clip}(x \rho_\tau, -\tau, \tau)
  \end{cases}
  ,
\end{align*}
where $t$ is the gradient step.
We fixed $T_2=10$ and conducted a search over
$T_1 \in \{10^5,3\cdot10^5,6\cdot10^5,10^6\}$.
Figure \ref{fig:gt_T1_per_env} and \ref{fig:gt_T1_iqm} show per-environment results and the aggregated results, respectively.
The performance differences are relatively small.
Since $T_1=3\cdot10^5$ performs slightly better than the others, and the experimental horizons are $H=1$M for \texttt{Hopper-v4} and $H=3$M for the others, we conclude that it is safe to set $T_1 = H / 10$.

\begin{figure}[hb]
  \begin{center}
    \includegraphics[width=\linewidth]{"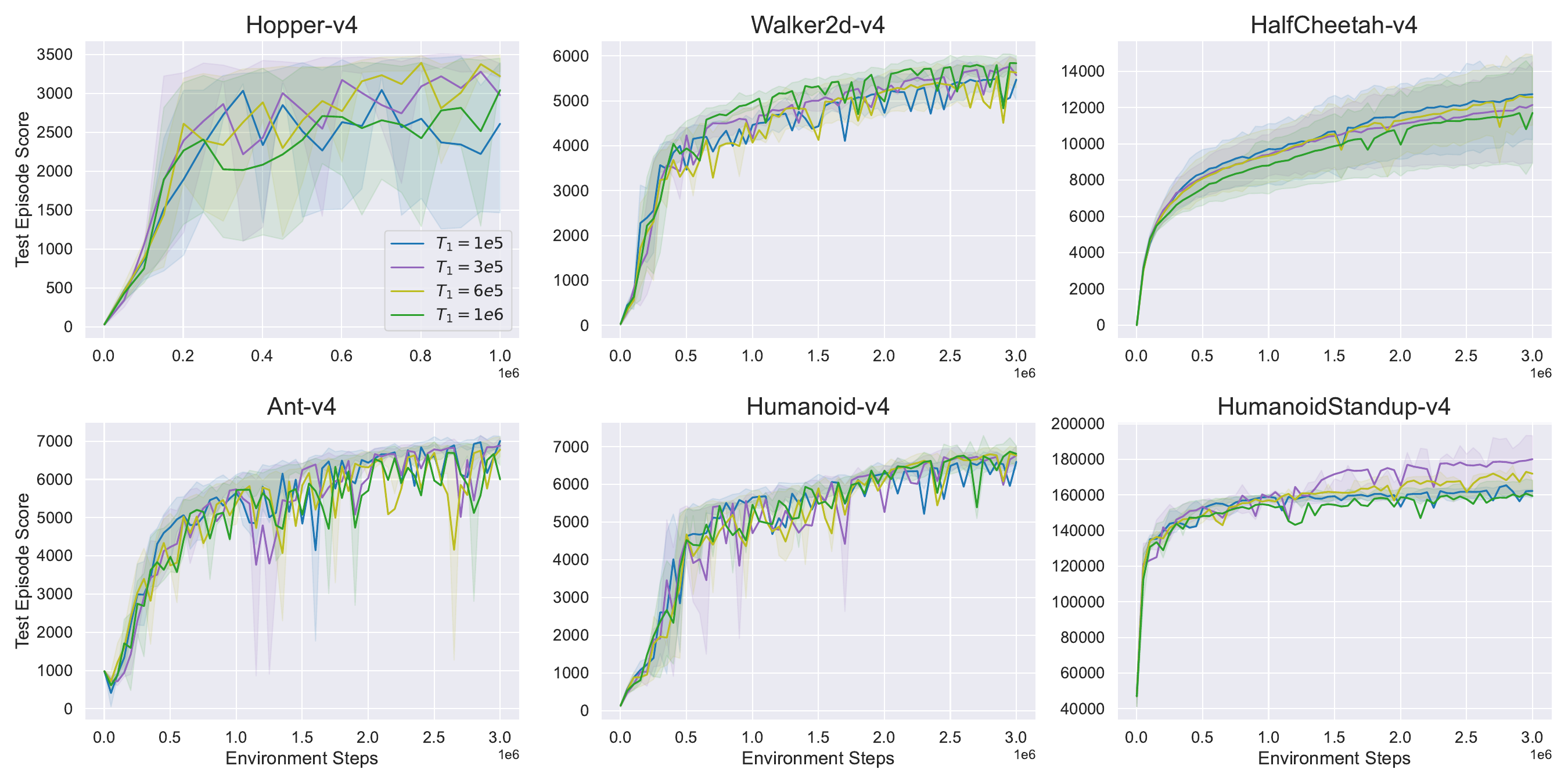"}
    \caption{
      Per-environment performances for different $T_1$ values.
      The mean scores of 10 independent runs are reported.
      The shaded region corresponds to $25\%$ and $75\%$ percentile scores over the 10 runs.
    }
    \label{fig:gt_T1_per_env}
  \end{center}
\end{figure}

\begin{figure}[hb]
  \begin{center}
    \includegraphics[width=0.35\linewidth]{"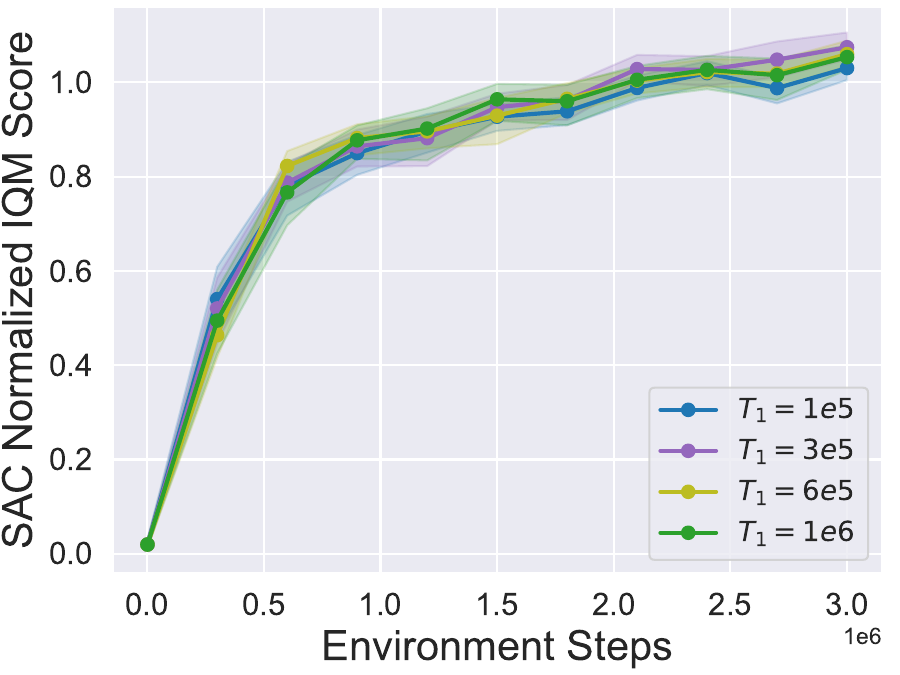"}
    \caption{
      SAC normalized IQM score for different $T_1$ values.
    }
    \label{fig:gt_T1_iqm}
  \end{center}
\end{figure}

\clearpage
\subsubsection{Variables in TD target under clipping}
\label{ss:appendix:experimental_details:mujoco:TD_variables}

Figure \ref{fig:clipping_frequency_WHA} compares the clipping frequencies for
$f=g={\rm clip}(x, -1, 1)$ and $f=g={\rm clip}(x/10, -1, 1)$.
Figure \ref{fig:scales_for_clip} compares the the variables in TD target.
\begin{figure}[hb]
  \begin{center}
    \includegraphics[width=0.99\linewidth]{"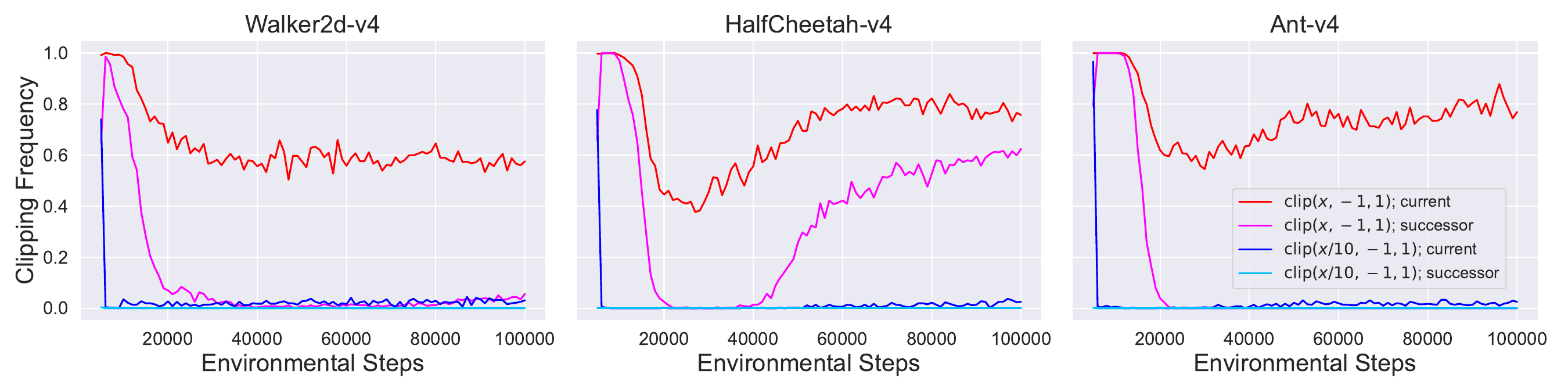"}
    \caption{
      Comparison of clipping frequencies.
      Left: \texttt{Walker2d-v4},
      Middle: \texttt{HalfCheetah-v4}.
      Right: \texttt{Ant-v4}.
    }
    \label{fig:clipping_frequency_WHA}
  \end{center}
\end{figure}
\begin{figure}[hb]
  \begin{center}
    \includegraphics[width=0.99\linewidth]{"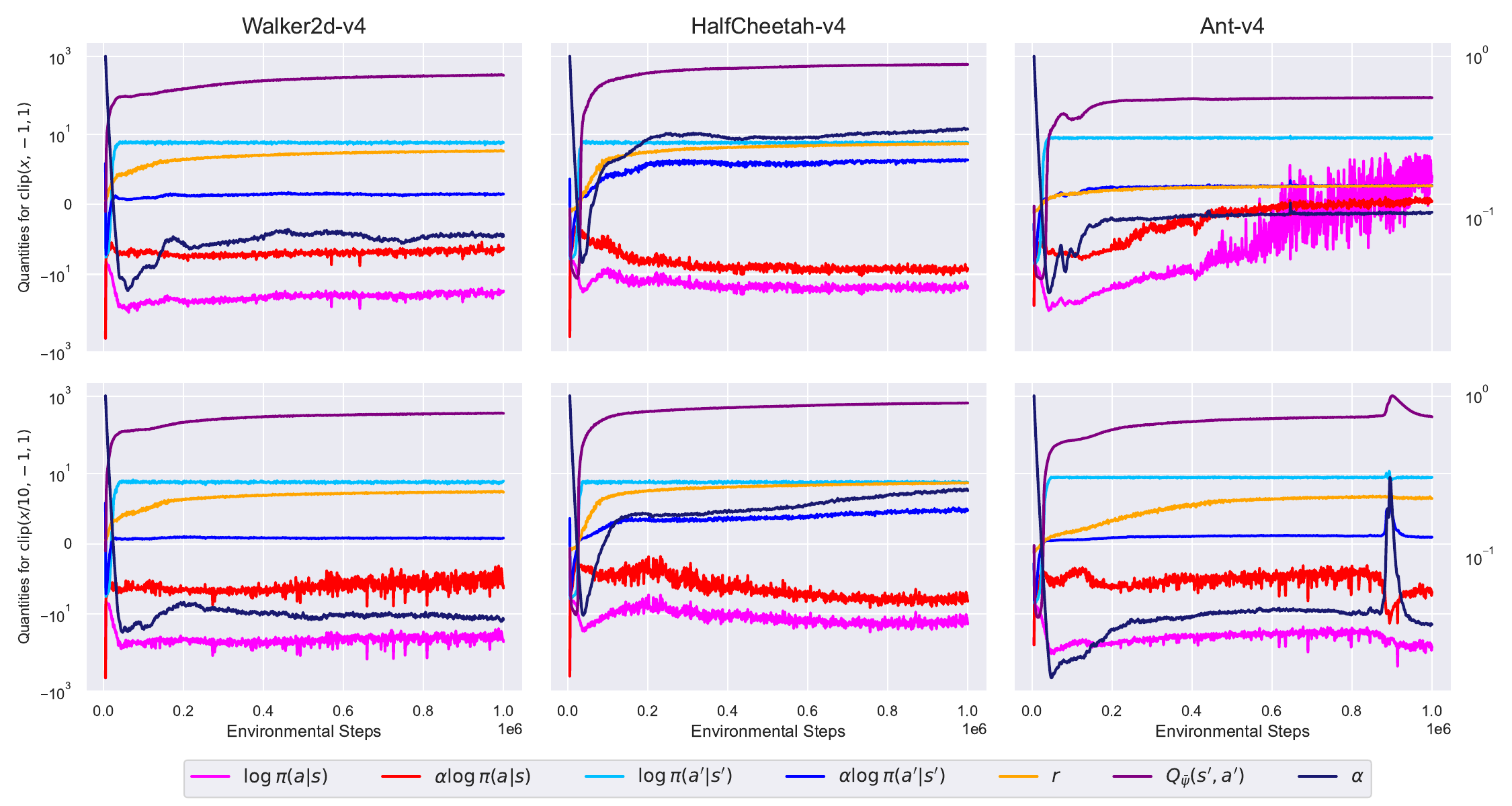"}
    \caption{
      Scale comparison of the variables in TD target.
      Top row: ${\rm clip}(x, -1, 1)$,
      Bottom row: ${\rm clip}(x/10, -1, 1)$,
      Left column: \texttt{Walker2d-v4},
      Middle column: \texttt{HalfCheetah-v4}.
      Right column: \texttt{Ant-v4}.
    }
    \label{fig:scales_for_clip}
  \end{center}
\end{figure}

\clearpage
\subsubsection{X-SAC Results}
\label{ss:appendix:experimental_details:mujoco:xsac}

For X-SAC, we conducted a sweep for the scale parameter $\beta$ for Gumbel distribution as $\beta\in\{1,2,5,10,20,50,100\}$, which is a broader sweep range than in the original paper \citep{Garg2023}.
Figure \ref{fig:xsac_per_env} and \ref{fig:xsac_aggregated} shows per-environment results and SAC normalized IQM, respectively.
We found that X-SAC struggles in Mujoco environments, which is consistent with the experimental results in the original paper that the improvement gain of their methods in online learning settings is little, even though their success in offline settings are excellent.

\begin{figure}[hb]
  \begin{center}
    \includegraphics[width=\linewidth]{"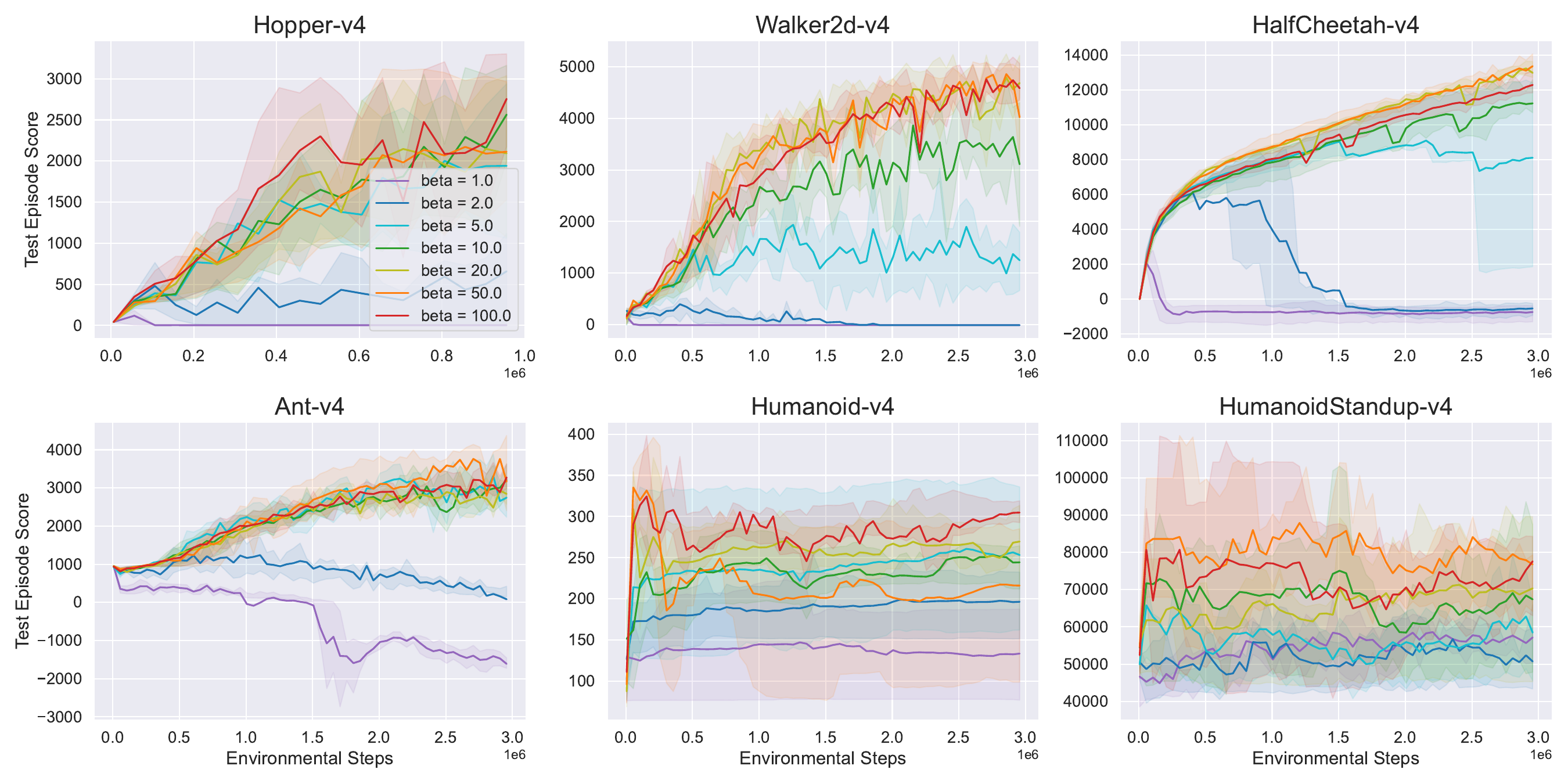"}
    \caption{
      Per-environment performances of X-SAC in Mujoco environments.
      The mean scores of 10 independent runs are reported.
      The shaded region corresponds to $25\%$ and $75\%$ percentile scores over the 10 runs.
    }
    \label{fig:xsac_per_env}
  \end{center}
\end{figure}

\begin{figure}[hb]
  \begin{center}
    \includegraphics[width=0.35\linewidth]{"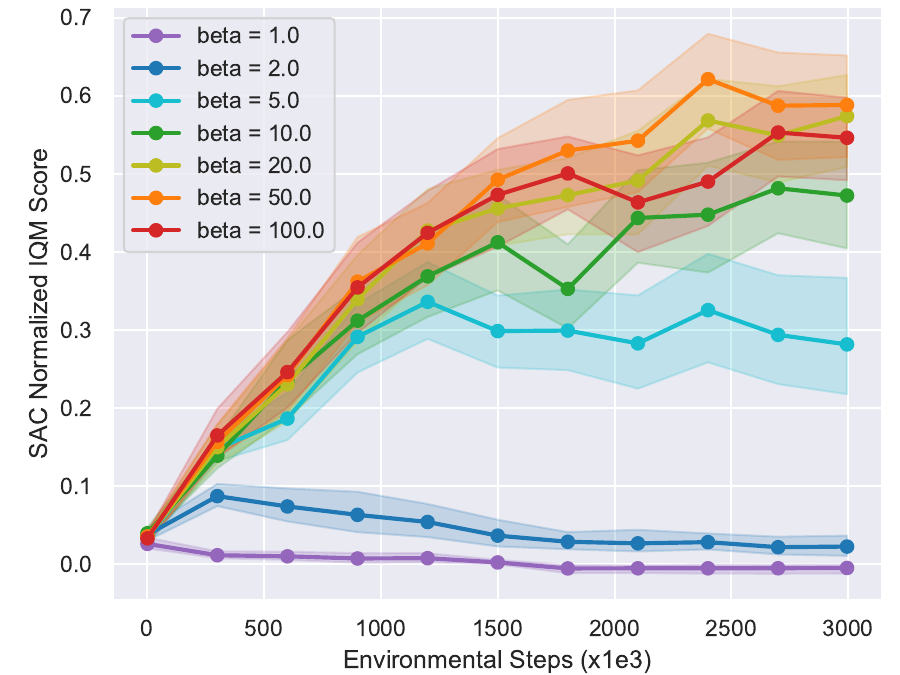"}
    \caption{
      SAC normalized IQM score of X-SAC in Mujoco environments.
    }
    \label{fig:xsac_aggregated}
  \end{center}
\end{figure}

\end{document}